\newcommand{\norm}[1]{\left\lVert #1 \right\rVert}
\newcommand{\indic}[1]{\mathbf{1}\{#1\}}
\def\eqref#1{equation~\ref{#1}}
\def\1{\bm{1}}
\def\va{{\bm{a}}}
\def\vb{{\bm{b}}}
\def\vc{{\bm{c}}}
\def\vd{{\bm{d}}}
\def\ve{{\bm{e}}}
\def\vh{{\bm{h}}}
\def\vk{{\bm{k}}}
\def\vq{{\bm{q}}}
\def\vs{{\bm{s}}}
\def\vu{{\bm{u}}}
\def\vv{{\bm{v}}}
\def\vw{{\bm{w}}}
\def\vx{{\bm{x}}}
\def\vy{{\bm{y}}}
\def\vz{{\bm{z}}}
\def\mA{{\bm{A}}}
\def\mB{{\bm{B}}}
\def\mC{{\bm{C}}}
\def\mD{{\bm{D}}}
\def\mH{{\bm{H}}}
\def\mI{{\bm{I}}}
\def\mJ{{\bm{J}}}
\def\mM{{\bm{M}}}
\def\mN{{\bm{N}}}
\def\mP{{\bm{P}}}
\def\mR{{\bm{R}}}
\def\mS{{\bm{S}}}
\def\mU{{\bm{U}}}
\def\mV{{\bm{V}}}
\def\mW{{\bm{W}}}
\DeclareMathAlphabet{\mathsfit}{\encodingdefault}{\sfdefault}{m}{sl}
\SetMathAlphabet{\mathsfit}{bold}{\encodingdefault}{\sfdefault}{bx}{n}
\newcommand{\R}{\mathbb{R}}
\newcommand{\C}{\mathbb{C}}
\newcommand{\N}{\mathbb{N}}
\DeclareMathOperator*{\argmax}{arg\,max}
\DeclareMathOperator*{\argmin}{arg\,min}
\DeclareMathOperator{\sign}{sign}
\newtheorem{theorem}{Theorem}
\newtheorem{proposition}{Proposition}
\newtheorem{lemma}{Lemma}
\newtheorem{definition}{Definition}
\definecolor{light-gray}{gray}{0.95}
\definecolor{dark-red}{rgb}{0.6,0,0}
\definecolor{dark-green}{rgb}{0,0.6,0}
\definecolor{code-blue}{rgb}{0,0.4,0.6}
\definecolor{cuda-green}{rgb}{0,0.5,0}
\lstdefinelanguage{cuda}[]{C++}{
  morekeywords={__global__, __host__, __device__, __shared__, __constant__, __restrict__, threadIdx, blockIdx, blockDim, gridDim},
  morecomment=[l]{//},
  morecomment=[s]{/*}{*/},
  morestring=[b]",
}
\lstdefinestyle{diffstyle}{
    backgroundcolor=\color{light-gray},
    basicstyle=\ttfamily\scriptsize,  %
    breaklines=true,
    columns=fullflexible,
    frame=single,
    rulecolor=\color{black},
    numbers=left,
    numberstyle=\tiny\color{gray},  %
    numbersep=5pt,
    firstnumber=220,
    keywordstyle=\color{code-blue},
    commentstyle=\color{dark-green},
    stringstyle=\color{dark-red},
    showstringspaces=false,
    language=C++,
    otherkeywords={constexpr, make_float2},
    morekeywords=[2]{constexpr, make_float2},
    keywordstyle=[2]{\color{cuda-green}},
}
\lstdefinelanguage{diff}{
    morecomment=[f][\color{dark-red}]{-},
    morecomment=[f][\color{dark-green}]{+},
}
\title{Unlocking State-Tracking in Linear RNNs\\ Through Negative Eigenvalues}
\author{Riccardo Grazzi$^{*\heartsuit}$,~~Julien Siems$^{*\diamondsuit}$,~~Arber Zela$^{\diamondsuit}$,\\  \textbf{Jörg K.H. Franke$^{\diamondsuit}$,~~Frank Hutter$^{\diamondsuit \clubsuit}$,~~Massimiliano Pontil$^{\heartsuit\spadesuit}$} \\
Equal contribution$^*$,~~ CSML, Istituto Italiano di Tecnologia$^{\heartsuit}$,~~ University of Freiburg$^{\diamondsuit}$, \\ELLIS Institute Tübingen$^{\clubsuit}$,~~ AI Centre, University College London$^{\spadesuit}$ \\
{\small \texttt{riccardograzzi4@gmail.com}} \qquad
{\small \texttt{juliensiems@gmail.com}}}
\begin{document}

\maketitle

\begin{abstract}
Linear Recurrent Neural Networks (LRNNs) such as Mamba, RWKV, GLA, mLSTM, and DeltaNet have emerged as efficient alternatives to Transformers for long sequences. However, both Transformers and LRNNs struggle to perform state-tracking, which may impair performance in tasks such as code evaluation. In one forward pass, current architectures are unable to solve even parity, the simplest state-tracking task, which non-linear RNNs can handle effectively. Recently, \citet{sarrof2024expressive} demonstrated that the failure of LRNNs like Mamba to solve parity stems from restricting the value range of their diagonal state-transition matrices to $[0, 1]$ and that incorporating negative values can resolve this issue. We extend this result to non-diagonal LRNNs such as DeltaNet. We prove that finite precision LRNNs with state-transition matrices having only positive eigenvalues cannot solve parity, while non-triangular matrices are needed to count modulo $3$. Notably, we also prove that LRNNs can learn any regular language when their state-transition matrices are products of identity minus vector outer product matrices, each with eigenvalues in the range $[-1, 1]$. Our experiments confirm that extending the eigenvalue range of Mamba and DeltaNet to include negative values not only enables them to solve parity but consistently improves their performance on state-tracking tasks. We also show that state-tracking enabled LRNNs can be pretrained  stably and efficiently at scale (1.3B parameters), achieving competitive performance on language modeling and showing promise on code and math tasks.
\end{abstract}

\section{Introduction}

    \begin{wrapfigure}[17]{r}{0.3618\textwidth}
        \vspace{-4.25mm}
    \centering
    \includegraphics[width=1.0\linewidth, trim={3.5 3 8 0}, clip=true]{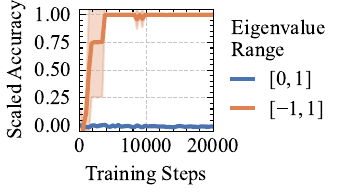}
            \vspace{-5mm}
    \caption{
    Extending the eigenvalue range of the state transition matrices of diagonal LRNNs improves performance from random guessing  (range $[0,1]$) to perfect score (range $[-1,1]$) on learning parity.
    Trained on sequences up to length 40; Tested on lengths 40–256 (3 seeds). 
    }
    \label{fig:parity_motivation}
\end{wrapfigure}
    Transformer architectures~\citep{vaswani_trans} have revolutionized NLP but scale quadratically in sequence length, posing computational challenges for long sequences. To address this, Linear Recurrent Neural Networks (LRNNs) have emerged as promising alternatives that offer linear scaling while maintaining competitive performance~\citep{gu2023mamba, dao2024transformers, yanggated, peng2023rwkv, deletangneural, sun2024learning, poppel2024xlstm}. 
    LRNNs update their state via matrix-vector products with structured and often input-dependent state-transition matrices.
    The structure of the state-transition matrices largely determines the expressivity of LRNNs. 
    While successful models like Mamba~\citep{gu2023mamba} and GLA~\citep{yanggated} use diagonal matrices (diagonal LRNN) which only mix tokens along the sequence dimension, recent work explores more complex forms. 
    Notably,  non-diagonal matrices using generalized Householder (GH) transformations, defined as $\mI - \vu\vu^\top$ where $\vu$ is a learnable vector and $\mI$ is the identity, enable models like DeltaNet~\citep{schlag2021linear,yang2024parallelizing} and TTT-Linear~\citep{sun2024learning} to achieve richer expressiveness through simultaneous token-channel mixing while maintaining efficiency.

    Surprisingly, both Transformers and current LRNNs face a fundamental limitation: they struggle to learn how to track the state of even simple finite-state machines from sequences of state-transitions \citep{deletangneural}. This limitation may impair performance on tasks such as entity tracking in narratives, handling nested structures in code, and reasoning tasks that can benefit from maintaining and updating an internal state over time~\citep{merrillillusion}.
    Even the simplest state-tracking task, computing the parity of a sequence of bits, cannot be solved by modern architectures, while non-linear RNNs like LSTM~\citep{hochreiter1997long} and sLSTM~\citep{poppel2024xlstm} can effectively track the state of any finite state machine. %
    However, parallelizing non-linear RNNs across the sequence length presents significant challenges~\citep{lim2024parallelizing, gonzalez2024towards}.

    Recently, \citet{sarrof2024expressive} demonstrated that the inability of diagonal LRNNs to solve the \textit{parity} problem stems from the fact that the eigenvalues of their state-transition matrices are constrained to be positive. Specifically, they proved that finite precision diagonal LRNNs with exclusively positive real eigenvalues, cannot solve the parity problem in one forward pass for sequences of arbitrary length. However, their work did not provide empirical evidence showing that diagonal LRNNs with negative eigenvalues can be successfully trained to overcome this limitation.
    We prove that the same limitation also affects LRNNs with non-diagonal state-transition matrices,
    and further prove that additionally, non-triangular matrices are necessary to solve the more challenging task of modular counting (when the modulus is not a power of two). 
    Our findings also apply to the GH matrices used by DeltaNet, as they share the same eigenvalue limitations. To overcome this, we propose a simple yet powerful solution: extend the range of possible eigenvalues from $[0,1]$ to $[-1,1]$. This change enables state-tracking and significantly improves the expressivity of LRNNs without compromising their efficiency and training stability. 
    As illustrated in~\Cref{fig:parity_motivation}, it allows diagonal LRNNs to learn parity successfully. The code for part of our experiments is available at~\href{https://github.com/automl/unlocking_state_tracking}{https://github.com/automl/unlocking\_state\_tracking}

In summary, we make the following \textit{contributions:}
\begin{enumerate}[leftmargin=12pt]
    \vspace{-3mm}
    \item We prove that any finite precision LRNN with only positive real eigenvalues in the state-transition matrices (most LRNNs used in practice) cannot solve parity at arbitrary sequence lengths (\Cref{thm:parity}), while non-triangular matrices are also required to learn counting modulo $3$ (\Cref{thm:modcount}).
    \item By extending the eigenvalue range, we significantly improve the state-tracking capabilities of LRNNs. We prove that LRNNs with state-transition matrices formed by products of generalized Householder (GH) matrices, each with eigenvalues in the range $[-1,1]$, can learn any regular language (\Cref{thm:regular}), in some cases with just one layer (\Cref{thm:groups}). Notably, this range extension allows LRNNs using just one GH matrix (like DeltaNet), to learn substantially harder tasks, as the repeated composition of permutations of two (over n) elements, compared to diagonal LRNNs.
    \item We show that the eigenvalue range of Mamba and DeltaNet can be extended to $[-1,1]$ without compromising efficiency or training stability. We test the modified methods on parity, modular arithmetic, and permutation composition, demonstrating improved state-tracking performance. 
    \item We pre-train modified versions of DeltaNet and Mamba (up to 1.3B parameters) and show that they reach performance comparable to the original models on generative language modeling tasks, while DeltaNet shows improved perplexity on code and math datasets. 
\end{enumerate}

\vspace{-3mm}
\section{Related Work}\label{sec:related_work}
\vspace{-1mm}
\noindent \textbf{Linear RNNs.~} Linear RNNs encompass state-space models and causal, linear attention mechanisms. State-space models, originally used for continuous dynamical systems, inspired LRNN variants like S4 \citep{guefficiently} and H4 \citep{fuhungry} (see \citet{tiezzi2024statespacemodelinglongsequence} for a survey). Recent advancements, such as Mamba \citep{gu2023mamba, dao2024transformers}, introduced input-dependent gating of the hidden state, significantly improving language modeling performance. Concurrently, linear attention has emerged as an alternative to classical softmax attention, with \citet{katharopoulos2020transformers} demonstrating that causal linear attention Transformers can be reformulated as RNNs with linear scaling in sequence length. Building on this, \citet{yanggated} proposed Gated Linear Attention (GLA), adding a gating mechanism similar to Mamba, while DeltaNet \citep{schlag2021linear,yang2024parallelizing} and TTT-Linear \citep{sun2024learning} explored more expressive recurrences with non-diagonal state-transition matrices.
\cite{poppel2024xlstm} recently proposed xLSTM, a successor to LSTM \citep{hochreiter1997long} which combines non-linear and linear RNNs.

\textbf{Expressivity Results.~} 
Several studies have explored the expressive power of Transformers  and RNNs (see e.g.\@ \citep{merrill2020formal,strobl2024formal,bhattamishra2024separations}). Here, we \mbox{focus} on the ones most relevant to our work.
While \citet{hahn2020theoretical}  proved that Transformers cannot model periodic languages such as parity, see also \citep[Lemma C.4]{bhattamishra2020ability}, and some context-free languages at arbitrary sequence lengths,
\citet{liu2022transformers} demonstrated that Transformers can learn shortcut solutions for \textit{solvable} finite state automata, though these solutions lack generalizability to arbitrary sequence lengths and perform poorly out-of-distribution. Unlike RNNs, the high parallelizability of Transformers prevents them from learning \textit{unsolvable} finite state automata \citep{merrill2023parallelism}. These findings typically use techniques from algebraic formal language theory (we refer to \citet{liu2022transformers} for a short tutorial) and circuit complexity, using the \textit{log-precision assumption} and a number of layers scaling linearly or logarithmically with sequence length. While earlier research established Transformers' Turing completeness, it relied on either arbitrary precision \citep{perez2021attention} or arbitrary depth and weight sharing \citep{giannou2023looped}. 
Diagonal LRNNs can simulate any RNN with infinite depth \citep{gu2021combining} and approximate regular enough functions when the state dimension grows linearly with sequence length  \citep{orvieto2024universality}. 
However, things change when depth and state size are fixed. \citet{merrillillusion} proved that finite-depth diagonal LRNNs, like Transformers, struggle to learn unsolvable finite state automata when restricted to log-precision arithmetic. 
The work by \cite{fan2024advancing} highlights a similar limitation, while in a finite precision setting, \citet{sarrof2024expressive} showed that diagonal LRNNs with positive values in the state-transition matrix, while capable of learning all star-free languages, cannot solve even the simple \textit{parity} problem, a non-star-free language recognizable by an automaton with two states.  However, their analysis was limited to the diagonal case and they did not test the benefit of negative eigenvalues in practice.
Using a continuous time framework, also \cite{muca2025theoretical} pointed out the limitations of diagonal state transition matrices.
\cite{irie2021going,irie2023practical} empirically showed how state-tracking can be enabled by modifying DeltaNet as a fast weight programmer \citep{schmidhuber1992learning}, but this makes its recurrence non-linear, hence hard to parallelize. \\
Unlike previous work, we demonstrate that non-diagonal LRNNs like DeltaNet can achieve robust state-tracking through a minimal modification while maintaining efficient large-scale training.

\section{Background}\label{sec:background}
\subsection{Linear Recurrent Neural Networks (LRNNs)}
We describe LRNNs using notation inspired by \citet{sarrof2024expressive}, focusing on the core linear recurrences while abstracting away the non-linear computations for each token. 
LRNNs are stacks of layers that share a common structure but have distinct learnable parameters.
Each layer takes input vectors $\vx_1, \dots, \vx_t \in \R^l$ (outputs of the previous layer) and outputs $\hat{\vy}_1,\dots,\hat{\vy}_t \in \R^p$ as:
\begin{equation}\label{eq:linrnn}
\begin{gathered}
    \mH_i =  \mA(\vx_i) \mH_{i-1} + \mB(\vx_i), \quad \hat{\vy}_i = \mathrm{dec}(\mH_i, \vx_i), \quad \text{ for all } i \in \{1, \dots, t\}, \\
\mH_0 \in \C^{n \times d}, \quad \mA: \R^l \to \C^{n \times n}, \quad  \mB: \R^l \to \C^{n \times d}, \quad \mathrm{dec}: \C^{n \times d} \times \R^l \to \R^p
\end{gathered}
\end{equation}
Here, $\mA, \mB$ and $\mathrm{dec}$ are learnable, generally non-linear functions, with $\mathrm{dec}$ usually containing a feed-forward neural network. This definition encompasses most LRNN variants, which differ in the form of $\mA$, $\mB$ and $\mathrm{dec}$.
\Cref{tab:LRNNs} illustrates how three popular LRNNs fit this framework. For other architectures see \citep[Table 4]{yang2024parallelizing}. Additional details on the notation are in \Cref{app:notation}.

\begin{table}[h]
\caption{
Instances of LRNN layers in (\ref{eq:linrnn}), where $\boldsymbol{\alpha}_t {=} \mathrm{sigmoid}(\mW_\alpha \vx_t)$, $\boldsymbol{\Delta}_t {=} \mathrm{softplus}(\mW_\Delta \vx_t)$, $\beta_t {=} \mathrm{sigmoid}(\vw_{\beta}^\top \vx_t)$, while  $\vq_t, \vk_t \in \R^n, \vv_t \in \R^d$ are output of learnable functions of $\vx_t$. Also, $\psi: \R^d \to \R^d$ is another learnable function usually containing an MLP and a normalization, while $\mW_1 \in \R^{n \times d}$, $ \mW_\Delta \in \R^{d \times l}$, $\mW_\alpha \in \R^{n \times l}$, $\vw_\beta \in \R^l$ and $\vw_2 \in \R^d$ are learnable parameters. For simplicity, we omitted 1D convolutions. 
For Mamba, the matrices in the first two columns represent the recurrence for the i-th row of
$\mH_t$ and we set $\vk_t {=} (k_{t,1},\dots,k_{t,n})^\top$, $\mW_1 {=} (\vw_{1,1},\dots, \vw_{1,n})^\top$, and $l=d$.} %
\label{tab:LRNNs}
\vspace{-3mm}
\centering
\adjustbox{width=0.8\textwidth}{
\begin{tabular}{c c c c}
\toprule
 & $\mA(\vx_t)$ & $\mB(\vx_t)$ & %
 $\mathrm{dec}(\mH_t, \vx_t)$\\ \midrule
\textbf{Mamba} & $\mathrm{Diag} \left(\exp\left(- \boldsymbol{\Delta}_{t} \odot \exp (\vw_{1,i}) \right)\right)$ & $ k_{t,i} \boldsymbol{\Delta}_t \odot  \vx_t$ & $\psi(\mH_t^\top\vq_t  + \vw_2 \odot \vx_t)$ \\ 
\textbf{GLA} & $\mathrm{Diag} \left( \boldsymbol{\alpha}_t \right)$ & $\vk_t \vv_t^\top$ & $\psi(\mH_t^\top\vq_t)$ \\ 
\textbf{DeltaNet} & $\mI -\beta_t \vk_t\vk_t^\top$ & $\beta_t \vk_t \vv_t^\top$ & $\psi(\mH_t^\top\vq_t)$ \\ \bottomrule
\end{tabular}}
\end{table}
\vspace{-1mm}
The \textit{state-transition matrices} $\mA(\vx_t)$ 
are typically diagonal or generalized Householder (GH), i.e., identity minus vector outer product, as shown in \Cref{tab:LRNNs}, to enable efficient matrix-vector products 
on modern hardware. These matrices consistently have eigenvalues (and norm) in the range $[0,1]$. 
\vspace{-1mm}
\subsection{Formal Language Theory}
\vspace{-1mm}
\textbf{Finite State Automata and Regular Languages.~} A (deterministic) finite state automaton (FSA) is a tuple $\mathcal{A} \,{=}\, (\Sigma,Q,q_0,\delta)$ where $\Sigma$ is a finite set of letters called alphabet, $Q$ is a finite set of states, $q_0 \,{\in}\, Q$ is the starting state and $\delta\,{:}\, Q \,{\times}\, \Sigma \,{\to}\, Q$ is the state-transition function \citep[see][for an introduction]{hopcroft2001introduction}.~We define the set $\Sigma^*,$ whose elements are sequences called words, as the smallest superset of $\Sigma$ that contains the empty word $\varepsilon$ and is closed under word concatenation.~We extend the state-transition function to $\delta\,{:}\, Q \,{\times}\, \Sigma^* \,{\to}\, Q$ by defining $\delta(q,\varepsilon) \,{= }\,q$ and $\delta(q, \vw) \,{=}\, \delta(\delta(q, w_1\dots w_{i-1}), w_i)$ for any $\vw = w_1\dots w_i \in \Sigma^*$ with $i \geq 2$.~We say that $\delta(q_0, \vw)$ is the state that $\mathcal{A}$ reaches after reading the word $\vw \in \Sigma^*$.~A \textit{language} $L \subseteq \Sigma^*$ is said to be recognized by $\mathcal{A}$ if there exists a recognizing set $R \,{\subseteq}\, Q$ such that $ L  \,{=}\, \{\vw \,{\in}\, \Sigma^* : \delta(q_0, \vw) \,{\in}\, R\}$. 
Regular languages are the ones that can be recognized by an FSA.
Given an FSA $\mathcal{A}$, the set $\mathcal{T}(\mathcal{A}) \,{= }\,\{ \delta(\cdot, \vw) : \vw \in \Sigma^* \}$ of functions $\rho\,{:}\, Q \,{\to}\, Q$, together with the function composition operation 
forms a \textit{monoid} 
called \textit{transition monoid}, i.e. it is associative, closed and contains the identity $\delta(\cdot,\varepsilon)$. This monoid has a finite number of elements, since $|Q| \,{<}\, \infty$. Moreover, if $\delta(\cdot, w)$ is bijective for every $w \in \Sigma$, then $\mathcal{T}(\mathcal{A})$ forms a \textit{group}, i.e. it contains the inverse of each element.

\textbf{State-Tracking and Monoid Word Problems.~} State-tracking is the problem of determining the state of a system only by observing a sequence of updates applied to it.  Formally, it can be expressed as a \textit{monoid word problem} \citep{merrillillusion}, where given a monoid $(M, \cdot)$ ($M$ is the set and $\cdot$ is the associative operation), we want to send words $m_1\dots m_t \in M^*$, describing the sequence of updates, to their product $m_1 \cdot m_2 \cdots m_t \in M$, representing the state of the system after the updates. If $M$ is finite there is a corresponding FSA $(M, M, e, \delta)$ that solves the word problem, where 
the starting state is $e$ (the identity element), and the transition function is $\delta(m_1, m_2) = m_2 \cdot m_1$ for $m_1,m_2\in M$. 
In this work, we focus on group word problems, i.e. problems where the monoid is also a group.
In particular, on the cyclic group $\mathbb{Z}_m$, i.e. addition modulo $m$, and the symmetric group $S_m$, i.e. the group of permutations on $m$ elements.
Parity is equivalent to the $S_2$ word problem, while many state-tracking problems such as tracking chess moves or code evaluation, can be shown to be harder than the $S_5$ word problem, which cannot be solved by Transformers and diagonal LRNNs even in log-precision for arbitrary word lengths \citep{merrillillusion,merrill2023parallelism}.

\textbf{One LRNN Layer is an automaton.~} Given an alphabet $\Sigma \subset \N$, we can view one layer of an LRNN in (\ref{eq:linrnn}) as the automaton $\mathcal{A}_{\mathrm{lin}} = (\Sigma, \mathcal{H}, \mH_0, \delta_{\mathrm{lin}})$, where $\delta_{\mathrm{lin}}(\mH, w) = \mA(w)\mH + \mB(w)$, which is extended as we saw previously\footnote{We let $\delta_{\mathrm{lin}}: \R^{n {\times} d} \,{\times}\, \Sigma \,{\to}\, \R^{n \times d}$ and extend it to $\delta_{\mathrm{lin}}: \R^{n {\times} d} \,{\times}\, \Sigma^* \,{\to}\, \R^{n \times d}$, then we define $\mathcal{H}$.}, and $\mathcal{H} = \{\delta_{\mathrm{lin}}(\mH_0, \vw) \,:\, \vw \in \Sigma^*\} \subseteq\R^{n \times d}$.
We say that an LRNN layer in (\ref{eq:linrnn}) \textit{implements} the FSA $\mathcal{A} = (\Sigma,Q,q_0,\delta)$ if  $\mathcal{A}_{\mathrm{lin}}$ can mimic the state transitions of $\mathcal{A}$\footnote{This definition is equivalent to that of FSA homomorphism, see \cite[Definition 3]{maler1994cascaded}.}. Formally, if there exists 
a surjective function $g:\mathcal{H}  \to Q$, such that for any $ \mH \in \mathcal{H} $, $w \in \Sigma$
$\quad  \delta( g(\mH), w) = g(\delta_{\mathrm{lin}}(\mH, w)) = g(\mA(w) \mH + \mB(w))$.
Every language $L$ recognized by $\mathcal{A}$ can also be recognized by this LRNN layer with a sufficiently powerful $\mathrm{dec}$. In particular if $R \,{\subseteq}\, Q$ is the recognizing set for $L$ and $q_0 = g(\mH_0)$, then the decoder $\mathrm{dec}(\mH_t, w_t) = \indic{g(\mH_t) \in R}$,
will correctly determine if $\vw \in L$. 
Therefore, implementing $\mathcal{A}$ is at least as hard as recognizing $L$. A principal goal of this work is to show that current LRNNs cannot recognize simple languages such as parity (negative results) while appropriate modifications to the state-transition matrices, enable LRNNs to implement broader classes of FSA (positive results), with certain classes of FSA requiring a single layer. Note, that while LRNNs with one layer can recognize any regular language, the state transition matrices might not fit into the structure imposed by current LRNNs, such as those in \Cref{tab:LRNNs} (see \Cref{app:rnnregular} for more details).

\section{Theoretical Analysis}

\subsection{Limitations of Current LRNNs}\label{sec:parity}

\begin{figure}[t]
\centering
\adjustbox{width=1\linewidth}{
\begin{tikzpicture}[scale=0.8, transform shape, trim left=0.6mm]
    \tikzset{
        state/.style={circle, draw, thick, minimum size=1.2cm, fill=gray!10},
        transition/.style={->, thick, bend left=30},
        annotation/.style={draw, thick, rounded corners, fill=blue!5, text width=3.5cm, align=center},
        datapoint/.style={circle, fill, inner sep=1.5pt},
        evenstate/.style={red!60!black, fill=red!10},
        oddstate/.style={blue!60!black, fill=blue!10},
        inputbox/.style={draw, rounded corners, fill=yellow!10, text width=8cm, align=center, font=\small},
        connectarrow/.style={<->, >=stealth, thick, black!50!black}
    }
    
    \begin{scope}[local bounding box=case1]
        
        \draw[->] (0,0) -- (4.5,0) node[right] {t};
        \draw[->] (0,0) -- (0,1.5) node[above] {};
        
        \def\xcoords{0, 0.25, 0.5, 0.75, 1, 1.25, 1.5, 1.75, 2, 2.25, 2.5, 2.75, 3, 3.25, 3.5, 3.75, 4, 4.25, 4.5}
        
        \draw[black!70!black, thick, dotted] 
            plot[mark=o, mark size=2.5pt] coordinates {
            (0,0.20) (0.5,0.3626)  (1,0.54) 
             (1.5,0.63) (2,0.675) 
             (2.5,0.6975) (3,0.7) 
             (3.5,0.705) (4,0.71) (4.5,0.725)
        };

        \draw[black!70!black, opacity=0.7] 
            plot[mark=o, mark size=1.5pt] coordinates {
            (0,0.20) (3.5,1.1) 
        };

        \draw[black!50!black, opacity=0.7, dashed] 
            plot[mark=o, mark size=2.5pt] coordinates {
            (0,0.20) (4.5,0.15)
        };
        
        \foreach \x in {0.5, 1, 1.5, 2, 2.5, 3, 3.5, 4} {
            \draw (\x,-0.1) -- (\x,0.1);
            \node[font=\small, below] at (\x,-0.1) {};
        }

        \draw[thick, decorate, black!70!black] 
            (4.3,0.725) -- (4.3,0.725) node[right=6pt, midway, text width=2cm] {States\\converge \\ or diverge};

        \node at (2.9,1.8) {$\vh_t = \mA(1)\vh_{t-1} + \mB(1)  $, $ \ \mathrm{eigs}(\mA(1)) \geq 0$};

    \end{scope}
    
    \begin{scope}[xshift=7.0cm, local bounding box=case2]
        \fill[blue!20, opacity=0.3] (0,0) rectangle (4.5,0.725);
        \fill[red!20, opacity=0.3] (0,0.725) rectangle (4.5,1.5);

        \draw[->] (0,0) -- (4.5,0) node[right] {t};
        \draw[->] (0,0) -- (0,1.5) node[above] {};
        
        \draw[black!50!black, thick, opacity=0.5] 
            plot[mark=o, mark size=2.5pt] coordinates {
            (0,0.15) (0.25,0.8) (0.5,1.45) (0.75,0.8) (1,0.15) 
            (1.25,0.8) (1.5,1.45) (1.75,0.8) (2,0.15) 
            (2.25,0.8) (2.5,1.45) (2.75,0.8) (3,0.15) 
            (3.25,0.8) (3.5,1.45) (3.75,0.8) (4,0.15) (4.25,0.8) (4.5,1.45)
        };
        
        \draw[black!50!black, opacity=0.3] 
            plot[mark=o, mark size=2.5pt] coordinates {
            (0,0.725) (4.5,0.725)
        };     
       \node[anchor=north east, red, fill=white] at (4.5,1.5) {Odd};
        \node[anchor=south east, blue, fill=white] at (4.5,0) {Even};
        \foreach \x in {0.5, 1, 1.5, 2, 2.5, 3, 3.5, 4} {
            \draw (\x,-0.1) -- (\x,0.1);
            \node[font=\small, below] at (\x,-0.1){};
        }
                
        \draw[thick, black!50!black] (4.6,0.725) -- (4.6,0.725) 
            node[right=-2.5pt, midway, text width=2cm] {Clear state separation};
            
       \node at (2.35,1.8) {$h_t = a(1)h_{t+1} + b(1), \ {\color{blue}a(1) = -1}$};

    \end{scope};
     
    \begin{scope}[xshift=14.5cm,yshift=0.5cm]
        \node[state, fill=blue!20] (C) at (0,0) {$\mathbf{Even}$};
        \node[state, fill=red!20] (D) at (2,0) {$\mathbf{Odd}$};
        
        \draw[transition] (C) to node[above] {$1$} (D);
        \draw[transition] (D) to node[below] {$1$} (C);

        \node at (0.985,1.2) {Parity automaton};
    \end{scope};
    
    \draw[connectarrow] (12.2,-0.1) to[out=0,in=180] (13.7,-0.1);
\end{tikzpicture}}
\vspace{-6mm}
\caption{
\textit{Parity requires negative eigenvalues.} 
States of one-layer LRNNs with the sequence $1111\ldots$ as input. 
If the eigenvalues of $\mathbf{A}(1)$ are nonnegative, the states either diverge or converge monotonically, and so, for large enough $t$ and in finite precision, cannot be distinguished. 
In contrast, the LRNN with $a(1) = -1$ alternates between two states like the parity automaton.
}
\label{fig:negative_eigenvalues_mechanism}
\end{figure}
In this section, we describe how positive eigenvalues and non-triangular state transition matrices limit LRNNs state-tracking capabitlies. In particular, we focus on parity and modular addition. 
The parity $y_t \in \{0,1\}$ of a sequence of ones and zeros $x_1 \dots x_t \in \{0,1\}^t$ is 1 if the total number of ones in the sequence is odd, and 0 if it's even. Equivalent to addition modulo 2, it can be computed by summing the values in the input sequence and then applying the modulo 2 function: $y_t = (\sum_{i=1}^{t} x_i) \bmod 2$. 
This solution can be implemented by an LRNN with one layer and scalar states by setting $\mA(x_t) = 1$, $\mB(x_t) = x_t$, $\mH_0 = 0$, and $\mathrm{dec}(\mH_t, x_t) = \mH_t \bmod 2$ in (\ref{eq:linrnn}). However, implementing such a solution with finite precision presents an issue: the state $h_t$ can grow indefinitely with $t$, eventually reaching the limit of our precision range. Indeed, $h_t \in \{0, \dots, t\}$, requiring $\log_2(t+1)$ bits for storage.
Moreover, in practice $\mathrm{dec}$ must approximate the modulus 2 function, which is challenging to learn due to its discontinuous and periodic nature. 

A more efficient solution, which implements the two-state FSA solving this problem, can still be realized by a finite precision LRNN with one layer and scalar states (and consequently also with vector states and diagonal state-transition matrices) using the recurrence
 $h_t = a(x_t) h_{t-1} + b(x_t), \quad h_0 = b(0) = 0, \quad b(1) = a(0) = 1, \quad {\color{blue}a(1) = -1}, \quad y_t = h_t$.
Note that the state-transition scalar $a(1)$ is negative, while 
current diagonal LRNNs do not allow negative values. 
\citep[Theorem 2]{sarrof2024expressive} states that this fact makes real-valued diagonal LRNNs unable to solve parity,  which raises the question:  \textit{can non-diagonal LRNNs which allow only positive eigenvalues, such as DeltaNet, solve parity?}
The following result answers this question negatively by generalizing \citet[Theorem 2]{sarrof2024expressive} to non-diagonal matrices. To solve parity, the state transition matrices must allow at least one eigenvalue to be neither real nor positive. For non-diagonal  matrices, this eigenvalue could simply have nonzero imaginary part. The main idea of the theorem is illustrated in \Cref{fig:negative_eigenvalues_mechanism}.

\begin{theorem}[Parity]\label{thm:parity}
A finite precision LRNN with finitely many layers as in (\ref{eq:linrnn}) can solve parity for arbitrary input lengths, in particular, it can recognize the language $(11)^*$, only if in at least one layer, there exist $\vx$ such that $\mA(\vx)$ has at least one eigenvalue $\lambda \notin \{x \in \R \,:\, x \geq 0\}$. 
\end{theorem}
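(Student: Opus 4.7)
The plan is to prove the contrapositive: assume that in every layer $\ell$ and every input symbol $\vx$, the matrix $\mA^{(\ell)}(\vx)$ has only non-negative real eigenvalues, and show the network cannot recognize $(11)^*$. Because recognizing $(11)^*$ forces the decoder output on the constant input $1,1,1,\ldots$ to flip with the parity of $t$ at arbitrarily large $t$, it suffices to establish that on this input the final-layer state $\mH_t$ is eventually constant in $t$ under finite precision arithmetic; eventual constancy of $\mH_t$ entails eventual constancy of $\hat{\vy}_t = \mathrm{dec}(\mH_t, \vx)$, contradicting alternation.

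The core of the argument is a single-layer lemma: if $\mA$ has only non-negative real eigenvalues, then the orbit $\mH_t = \mA \mH_{t-1} + \mB$ computed in finite precision is eventually constant in $t$. I would prove this by working in a real Jordan basis of $\mA$, in which $\mA$ is block diagonal with each block a Jordan block $J_\lambda$ of some eigenvalue $\lambda \geq 0$. The dynamics decouple across blocks, and within a Jordan block of size $k$ the last coordinate satisfies the scalar recurrence $v \mapsto \lambda v + b$, which is monotone non-decreasing in $v$ because $\lambda \geq 0$. A monotone sequence taking values in the discrete representable set is eventually constant. Once the last coordinate stabilizes at a value $c$, the second-to-last coordinate satisfies another scalar recurrence with constant inhomogeneity involving $c$, monotone by the same argument and hence eventually constant; stabilization propagates up the Jordan block and then across blocks. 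Translating back to the standard basis preserves eventual constancy.

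Extending to an $L$-layer network proceeds by induction on $\ell$. By the lemma, layer $1$ on input $1^t$ produces an eventually constant state, and hence an eventually constant output that from some time $T_1$ onward feeds a constant symbol into layer $2$. Applying the lemma to $\mA^{(2)}$ evaluated at that constant symbol, with initial state $\mH_{T_1}^{(2)}$, yields eventual constancy of the layer $2$ state, and iterating through the finite stack gives eventual constancy of the final decoder output on $1^t$, contradicting recognition of $(11)^*$.

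The main obstacle is reconciling the Jordan-basis monotonicity with the fact that finite precision is specified in the standard basis, so the rounding operator does not commute with the change of basis. I would address this by arguing in infinite precision first: $\mH_t = \mA^t \mH_0 + \sum_{i=0}^{t-1} \mA^i \mB$ has entries that are finite linear combinations of terms of the form $t^j \lambda^t$ with $\lambda \geq 0$, so each coordinate in the standard basis is eventually monotone in $t$; combined with the fact that the rounded orbit must enter a cycle in the bounded discrete representable state space, any cycle of length greater than one would require a coordinate that both increases and decreases, contradicting eventual monotonicity. Making the eventual-monotonicity claim rigorous across Jordan blocks with different eigenvalues, and handling the edge case where the leading coefficient of a $t^j \lambda^t$ term vanishes (so one must look at the next-leading term to determine the eventual sign of differences), is the delicate step I would devote most care to.
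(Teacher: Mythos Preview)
Your overall structure matches the paper's: contrapositive on the constant input $1^t$, Jordan form for $\mA(1)$, asymptotic analysis of the resulting exponential--polynomial expressions, and induction through the layers. The paper packages the asymptotic step as a lemma: for $a_k=\sum_i c_i\binom{k}{m_i}\lambda_i^{k-m_i}$ with all $\lambda_i\ge 0$, the quantity $\mathrm{cast}(a_k)$ is eventually constant, proved by factoring out the dominant term and case-splitting on whether the dominant $\lambda$ exceeds, equals, or is below $1$. Your eventual-monotonicity route is the same idea phrased differently, and you correctly flag the vanishing-leading-coefficient subtlety that the paper's case analysis handles.

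The genuine gap is the finite-precision step. Your cycle argument reads as if the ``rounded orbit'' is computed recursively, $\widehat{\mH}_t=\mathrm{cast}(\mA\,\widehat{\mH}_{t-1}+\mB)$, in which case it does live in a finite set and is eventually periodic---but then infinite-precision monotonicity of $\mH_t$ says nothing about $\widehat{\mH}_t$, since accumulated rounding error decouples the two sequences, so ``cycle length ${>}1$ forces a coordinate to oscillate, contradicting monotonicity of $\mH_t$'' is a non sequitur. The paper avoids this by explicitly adopting a simplified finite-precision model: each product $\mA(1)^i\mB(1)$ and $\mA(1)^k\mH_0$ is evaluated in infinite precision, cast to $\mathbb{D}$, then summed and cast again. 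Under any model of the form $\widehat{\mH}_t=\mathrm{cast}(\text{infinite-precision expression})$, your monotonicity argument closes in one line---an eventually monotone real sequence cast into a finite ordered set is eventually constant---and the cycle detour is unnecessary. But you must commit to such a model and state it; for genuine step-by-step rounding the argument as written does not go through, and the paper does not attempt that model either.
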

The proof in \Cref{proof:parity} uses the same core idea as the one in \citep[Theorem 2]{sarrof2024expressive}. For one layer, we show that when $\vx = 1^k$ and the conditions for the eigenvalues of $\mA(1)$ are not met, the mapping $k \mapsto \mH_k$ and consequently also the one $k\mapsto \hat \vy_k$ will be constant (in finite precision and for large enough $k$), while $k \mapsto y_k$, with $y_k$ being the parity of $\vx$, alternates between $0$ and $1$. %
To show this, we use the expression for the powers of the Jordan canonical form of $\mA(1)$.

We now study the problem of counting modulo $m$, an easier version of addition modulo $m$ where the input of length $k$ never changes and is $\vx = 1^k$, while the correct output is $y_k = (\sum_{i=1}^k x_i) \bmod m$. 
The following theorem shows that to solve this problem, products of state-transition matrices must have at least one eigenvalue with nonzero imaginary part. 
\begin{theorem}[Modular Counting]\label{thm:modcount}
A finite precision LRNN with $L$ layers, each as in (\ref{eq:linrnn}), can count modulo $m$, i.e.\@ it can recognize the language $(1^m)^*$, with $m$ not a power of two, only if there exist $i \in \{1,\dots,L\}$ and $\vx_1,\dots, \vx_{2^{i-1}}$ such that for the $i$-th layer the product $\mA(\vx_1)\mA(\vx_2) \cdots \mA(\vx_{2^{i-1}})$ has at least one  eigenvalue $\lambda$ with
nonzero imaginary part, i.e.\@ $\lambda \notin \R$. 
\end{theorem}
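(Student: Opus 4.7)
I would prove the contrapositive by induction on the layer index: assuming that at every layer $i \in \{1,\dots,L\}$ and for every choice $\vx_1,\dots,\vx_{2^{i-1}}$ the product $\mA^{(i)}(\vx_1)\cdots\mA^{(i)}(\vx_{2^{i-1}})$ has only real eigenvalues, I want to show that on input $1^k$ the final output $\hat{\vy}_k$ is eventually periodic in $k$ with period dividing $2^L$. Recognising $(1^m)^*$ forces $\hat y_k = \indic{m\mid k}$, whose smallest period is exactly $m$, so this would yield $m\mid 2^L$, contradicting the assumption that $m$ is not a power of two.

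The core technical ingredient is a finite-precision lemma of the same flavour as the one underlying \Cref{proof:parity}: if $X_{k+1} = P\,X_k + C$ is an affine iteration in finite precision and $P$ has only real eigenvalues, then the orbit is eventually periodic with period dividing $2$. This follows from the Jordan-form expansion of $P^k$: polynomial/exponential terms saturate under finite precision, eigenvalues in $(-1,1)$ contribute convergent (period-$1$) modes, eigenvalue $1$ contributes a fixed-point component, and the only persistent non-trivial oscillation a real eigenvalue can produce is the period-$2$ flip from $\lambda = -1$.

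With this lemma, the inductive step runs as follows. Assume $\mH^{(i-1)}_k$, and hence the output $\hat{\vy}^{(i-1)}_k$ feeding layer $i$, is eventually periodic with period $p_{i-1}\mid 2^{i-1}$. Let $Q$ be the product of the $p_{i-1}$ state-transition matrices of layer $i$ applied over one input period, and set $r = 2^{i-1}/p_{i-1}$. The crucial observation is that $Q^r$ is exactly a product of $2^{i-1}$ matrices drawn from $\{\mA^{(i)}(\vx)\}$ (the $p_{i-1}$ distinct ones, repeated $r$ times), so by hypothesis $Q^r$ has only real eigenvalues. Applying the lemma to the block iteration $\mH^{(i)}_{(k+1)\cdot 2^{i-1}} = Q^r\,\mH^{(i)}_{k\cdot 2^{i-1}} + \vd$ yields period dividing $2$ at the block level, and unrolling the affine dynamics inside each block of length $2^{i-1}$ gives $p_i \mid 2^i$. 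The base case $i=1$ is the lemma applied directly to $\mA^{(1)}(1)$.

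The main obstacle is reconciling the hypothesis — which constrains products of exactly $2^{i-1}$ matrices — with the induction, which naturally cares about products of $p_{i-1}$ matrices. Note that $Q^r$ can have only real eigenvalues while $Q$ itself has complex ones (for instance, $Q$ a quarter-turn satisfies $Q^2 = -I$), so nothing useful can be said about $Q$ in isolation. The fix is to never analyse $Q$ on its own and instead always work with blocks of length $2^{i-1}$, which is precisely why that exponent appears in the theorem statement. A secondary subtlety is making the finite-precision lemma rigorous when $P$ is non-diagonalisable: Jordan blocks with $\lambda = \pm 1$ produce polynomial growth that must be shown to saturate and then settle into a cycle of length at most $2$, which I would handle with the same Jordan-block computation used in the proof of \Cref{thm:parity}.
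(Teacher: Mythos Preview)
Your proposal is correct and follows essentially the same route as the paper: contrapositive induction on layers, a Jordan-form lemma showing that a finite-precision affine iteration with a real-spectrum matrix is eventually $2$-periodic (this is exactly \Cref{lm:finitetime} in the paper), and the passage to blocks of length $2^{i-1}$ so that the hypothesis on products of exactly $2^{i-1}$ matrices applies to $Q^r$ rather than $Q$. Your explicit discussion of why one must not analyse $Q$ in isolation (e.g.\ the quarter-turn example) is a useful clarification that the paper leaves implicit.
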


The proof is in \Cref{proof:modcount}. When $L=1$ a key step is to show that if $\mA(1)$ has real (even negative) eigenvalues, the map $k \to \mH_{k}$ will alternate between two values (in finite precision and for large enough $k$), not enough to count modulo $m > 2$. For $L >1$, we proceed by induction using our assumption on the eigenvalues of the product of state-transition matrices.

\textbf{Discussion} \Cref{thm:parity,thm:modcount} identify a fundamental limitation of current design choices on the structure of the state-transition matrices of LRNNs. 
Specifically, current LRNNs, as the ones outlined in \Cref{tab:LRNNs}, are incapable of solving parity, as the eigenvalues of their state-transition matrices are confined to the interval $[0,1]$. 
Further, even if we allow negative eigenvalues, LRNNs using common structures for the state transition matrices, such as diagonal or triangular with real entries,  cannot solve counting modulo $m$. In contrast, as we will show, LRNNs with state-transition matrices that are (products of) generalized Householder matrices, each with eigenvalues in the range $[-1,1]$, are much more expressive.

\subsection{Allowing Negative Eigenvalues}\label{sec:negative}
We focus on two classes of LRNNs determined by the structure of their state-transition matrices: diagonal (such as Mamba, Mamba2, and GLA) and generalized Householder (GH, as in DeltaNet).
In particular, if we let $\vs: \R^l \to [0, 1]^n$, $ \phi: \R^l \to [0, 1]$ and $\vv: \R^l \to \R^n$, being learnable functions such that $\norm{\vv(\vx)} = 1$ for every $\vx \in \R^l$, then the state transition matrices of each layer of many LRNNs, such as those in \Cref{tab:LRNNs}, can be written as either 
\begin{equation*}
\mA_{\mathrm{diag}}(\vx) := \mathrm{Diag}(\vs(\vx)), \quad\text{ or }\quad \mA_{\mathrm{GH}}(\vx) := \mI - \phi(\vx)\vv(\vx)\vv(\vx)^\top
,
\end{equation*}
where $ \mA_{\mathrm{diag}}(\vx)$ is diagonal with eigenvalues $ s(\vx)_i \in  [0,1]$, while $\mA_{\mathrm{GH}}(\vx)$ is GH with all eigenvalues equal to one except for the one associated to the eigenvector $\vv(\vx)$, which is equal to $1 - \phi(\vx) \in [0,1]$.
To address the limitations discussed in the previous section, we propose the following modification that can be easily applied to LRNNs belonging to either class.
\begin{equation}\label{eq:change}
    \mA^-_{\mathrm{diag}}(\vx) := \mathrm{Diag}({\color{blue}2}\vs(\vx){\color{blue} -1}), \quad \mA^-_{\mathrm{GH}}(\vx) := \mI - {\color{blue}2}\phi(\vx)\vv(\vx)\vv(\vx)^\top.
\end{equation}
Hence, $\mA^-_{\mathrm{diag}}(\vx)$ has eigenvalues $2s(\vx)_i -1 \in  [-1,1]$ and $\mA^{-}_{\mathrm{GH}}(\vx)$ has one eigenvalue equal to $1 - 2\phi(\vx) \in [-1,1]$. Thus, we have extended the eigenvalues range from $[0,1]$ to $[-1,1]$. The norm of the matrix is still less than or equal to one, keeping the recurrence stable at long sequence lengths.

LRNNs with the modified state transition matrices can implement the solution to parity in (\ref{eq:change}) by setting $s(1) =0$ and $\phi(1) = 1$ so that if we consider a scalar recursion, then $\mA^-_{\mathrm{diag}}(1) =-1$. However, \Cref{thm:modcount} shows that we cannot count modulo $3$ with triangular state transition matrices, even when allowing negative eigenvalues. Therefore, in the next section, we examine the impact of our change to the eigenvalue range on non-triangular state-transition matrices. 

\subsection{Expressivity of Products of Generalized Householder Matrices}\label{sec:ghprod}
We focus on state-transition matrices that are products of $k$ GH matrices. For DeltaNet $k=1$.
For any $n, k \in \N$, we define the set of all matrices in $\R^{n \times n}$ that can be expressed as a product of $k$ GH matrices, each having the only interesting eigenvalue in the range $\Omega \subseteq \R$, as
\begin{equation}\label{eq:GHset}
    \mathcal{M}_{k}^n(\Omega) : =\left\{\mC_1\mC_2 \cdots \mC_k \ : \ \mC_i = \mI - \beta_i \vv_i \vv_i^\top, \quad (1- \beta_i) \in \Omega, \quad \vv_i \in \R^{n},\, \lVert \vv_i \rVert = 1 \right\}. 
\end{equation}
Intuitively, higher $k$ means higher expressivity but also higher cost for matrix-vector products. Furthermore, as long as $\Omega \subseteq [-1,1]$, the norm of the matrices is bounded by one, which guarantees that repeated matrix product do not diverge.
We observe that if $\mM \in \mathcal{M}_1^n(\{-1\})$, then $\mM$ is a reflection (or Householder) matrix, and that for any $\vx \in \R^l$, $\mA_{\mathrm{GH}}(\vx)  \in \mathcal{M}_{1}^n([0,1])$ and $\mA^-_{\mathrm{GH}}(\vx)  \in  \mathcal{M}_{1}^n([-1,1])$ so that with our change we also include reflections. Moreover, $ \mathcal{M}_{k}^n(\Omega) \subseteq  \mathcal{M}_{k'}^n(\Omega')$ if $\Omega \subseteq \Omega'$ and either $k'=k$ or $k' \geq k$, $1 \in \Omega$.

Our next result shows that products of GH matrices can represent any matrix with Euclidean norm less than or equal to 1, but only when $ [-1,1] \subseteq \Omega$. %
In contrast, repeated products of (e.g.\@ upper)  triangular matrices with eigenvalues in $[-1,1]$ remain triangular, with eigenvalues in the same range. 

\begin{proposition}[Expressivity of products of GH matrices]\label{th:ghexpress}
The following hold for $\mathcal{M}_k^n$ in (\ref{eq:GHset}):

\begin{enumerate}
\vspace{-.25truecm}
    \item For any $\mN \in \mathcal{M}_{k}^n([-1, 1])$, $\lVert \mN \rVert \leq 1$. 
    \vspace{-.15truecm}
    \item For any $\mM \in \R^{n\times n}$ with $\lVert \mM \lVert \leq 1$, then $\mM \in \mathcal{M}_{3n}^n([-1, 1])$ and if $\mM$ is orthogonal then $\mM \in \mathcal{M}_{n}^n(\{-1, 1\})$, while $\mM \in \mathcal{M}_{n-1}^n(\{-1, 1\})$ when $\mM$ is a permutation matrix.\vspace{-.15truecm}
    \item Any eigenvalue $\lambda$ of any matrix  $ \mN \in \mathcal{M}_{k}^n((-1, 1])$ is either $1$ or satisfies $|\lambda| < 1$ and  if in addition $ \mN \in \mathcal{M}_{k}^n([0, 1])$ and $k \leq 2$, then $\lambda \in [0,1] \subset \R$.  %
\end{enumerate}
\end{proposition}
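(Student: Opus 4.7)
The plan is to handle the three items in turn. For (1), I observe that each factor $\mC_i = \mI - \beta_i \vv_i \vv_i^\top$ (with $\lVert \vv_i \rVert = 1$) is real symmetric with eigenvalues $1 - \beta_i$ on $\mathrm{span}(\vv_i)$ and $1$ on $\vv_i^\perp$; hence its operator norm equals $\max(|1 - \beta_i|, 1) = 1$ whenever $1-\beta_i \in [-1,1]$, and submultiplicativity of the spectral norm immediately gives $\lVert \mN \rVert \le 1$.

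For (2), I would appeal to classical Householder constructions. Any orthogonal matrix in $\R^{n \times n}$ decomposes into at most $n$ Householder reflections by the Cartan--Dieudonn\'e theorem, and each reflection $\mI - 2 \vu \vu^\top$ lies in $\mathcal{M}_1^n(\{-1\}) \subseteq \mathcal{M}_1^n(\{-1,1\})$; any shortfall to exactly $n$ factors is padded with the identity (which corresponds to $\beta = 0$, i.e.\@ $(1-\beta) = 1$). A permutation matrix decomposes into at most $n-1$ transpositions via its cycle decomposition, and the transposition of coordinates $i, j$ equals the reflection $\mI - 2 \vu \vu^\top$ with $\vu = (e_i - e_j)/\sqrt{2}$. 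For a general contraction $\mM$ with $\lVert \mM \rVert \le 1$, I would use the SVD $\mM = \mU \mSigma \mV^\top$: both $\mU$ and $\mV^\top$ are orthogonal and contribute $n$ factors each in $\mathcal{M}_n^n(\{-1,1\})$, while $\mSigma = \mathrm{Diag}(\sigma_1,\dots,\sigma_n)$ with $\sigma_i \in [0,1]$ factors as the commuting product $\prod_{i=1}^n (\mI - (1-\sigma_i) e_i e_i^\top)$, each factor in $\mathcal{M}_1^n([0,1])$. Concatenating these three pieces gives $\mM \in \mathcal{M}_{3n}^n([-1,1])$.

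For (3), the key computation is $\mC_i^\top \mC_i = \mI - \beta_i(2-\beta_i) \vv_i \vv_i^\top$, which yields
\[
\lVert \mC_i \vz \rVert^2 \;=\; \lVert \vz \rVert^2 \,-\, \beta_i(2-\beta_i)\, |\vv_i^\top \vz|^2 .
\]
On $\mathcal{M}_k^n((-1,1])$ the hypothesis $\beta_i \in [0,2)$ gives $\beta_i(2-\beta_i) \ge 0$, so each $\mC_i$ is norm-nonincreasing on $\C^n$, with equality only when $\beta_i = 0$ or $\vv_i^\top \vz = 0$. Given $\mN \vy = \lambda \vy$ with $|\lambda| = 1$, telescoping the chain $\vy,\, \mC_k \vy,\, \mC_{k-1}\mC_k \vy,\, \dots,\, \mN \vy$ forces equality in norm at every stage, so each $\mC_i$ acts as the identity on the relevant intermediate vector; hence $\mN \vy = \vy$ and $\lambda = 1$. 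For the second claim, when $\beta_i \in [0,1]$ each $\mC_i$ is positive semidefinite (eigenvalues $1-\beta_i, 1 \in [0,1]$); $k=1$ is immediate from the spectrum of $\mC_1$, and for $k=2$ the nonzero eigenvalues of $\mC_1 \mC_2$ agree with those of the PSD matrix $\mC_1^{1/2} \mC_2 \mC_1^{1/2}$, so they are real and non-negative, which combined with (1) confines the spectrum to $[0,1]$.

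The main obstacle is the explicit construction in (2): the class $\mathcal{M}_k^n$ is defined via products of a rigid rank-one-perturbation form, so abstract closure arguments do not suffice and one must exhibit concrete GH factors. The SVD-plus-Householder-QR pipeline is what makes this concrete and, as a by-product, delivers the sharper $n$ and $n-1$ bounds in the orthogonal and permutation special cases.
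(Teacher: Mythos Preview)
Your proposal is correct and follows essentially the same approach as the paper: item (1) via submultiplicativity, item (2) via SVD together with Cartan--Dieudonn\'e for the orthogonal factors and the diagonal factored as $\prod_i(\mI-(1-\sigma_i)\ve_i\ve_i^\top)$, and item (3) via the key computation $\lVert \mC_i \vz\rVert^2 = \lVert \vz\rVert^2 - \beta_i(2-\beta_i)|\vv_i^\top\vz|^2$ together with the $\mC_1^{1/2}\mC_2\mC_1^{1/2}$ trick for $k=2$. Your telescoping-equality argument in (3) and your use of the ``$AB$ and $BA$ share nonzero spectrum'' fact are slightly cleaner than the paper's version (which finds the last index of strict decrease and handles singular $\mC_1$ by a continuity limit), but the underlying ideas are identical.
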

The proof in \Cref{proof:ghexpress} uses mainly linear algebra arguments such as the SVD decomposition and the fact that every $n \times n$ orthogonal matrix can be written as a product of $n$ reflections, due to the Cartan–Dieudonné Theorem~\citep{gallier2011cartan}.

A consequence of \Cref{th:ghexpress}.3 is that LRNNs with layers of the form (\ref{eq:linrnn}), where $\mA: \R^l \to \mathcal{M}_k^n([0,1])$, have state transition matrices that are either the identity or not orthogonal, and hence cannot be reflections or rotations. Also, if $k\leq 2$ the eigenvalues are positive and hence the LRNN cannot learn parity due to \Cref{thm:parity}.
In contrast, if we allow $\mA: \R^l \to \mathcal{M}_k^n([-1,1])$ and $k$ is large enough, the following theorem shows that an LRNN with one layer can implement any FSA whose transition monoid is a group, and that $n=k=2$ is enough for cyclic groups (modular addition).

\begin{figure}
    \centering
\begin{minipage}[t]{0.55\textwidth}
\vspace{0pt}%
\vspace{-3pt}%
\adjustbox{width=1.0\linewidth, valign=t}{
    \begin{tikzpicture}[scale=1.0]
    \tikzset{
        cup/.style={
            trapezium,
            trapezium left angle=75,
            trapezium right angle=75,
            trapezium stretches=true,
            minimum height=0.6cm,
            minimum width=0.6cm,
            draw=black!70,
            thick,
            rounded corners=1pt,
            rotate=0,
            path picture={
                \fill[#1!30] ([xshift=-0.2cm, yshift=-0.15cm]path picture bounding box.south west) -- 
                              ([xshift=0.2cm, yshift=-0.15cm]path picture bounding box.south east) -- 
                              ([xshift=0.07cm, yshift=0.15cm]path picture bounding box.north east) --
                              ([xshift=-0.07cm, yshift=0.15cm]path picture bounding box.north west) -- cycle;
            }
        },
        swap arrow/.style={<->, thick, red, line width=1pt, shorten >=4pt, shorten <=4pt},
        uni arrow/.style={->, thick, red, line width=1pt, shorten >=4pt, shorten <=4pt},
        uni3 arrow/.style={->, thick, red, line width=1pt, shorten >=4pt, shorten <=4pt},
        uni2 arrow/.style={->, thick, red, line width=1pt, shorten >=4pt, shorten <=4pt},
    }
    
    \node[cup=red] (cup1-1) at (0,0) {\large 2};
    \node[cup=blue] (cup1-2) at (1,0) {\large 1};
    \node[cup=yellow] (cup1-3) at (2,0) {\large 3};
    
    \node[cup=blue] (cup2-1) at (3.5,0) {\large 1};
    \node[cup=yellow] (cup2-2) at (4.5,0) {\large 3};
    \node[cup=red] (cup2-3) at (5.5,0) {\large 2};
    
    \node[cup=blue] (cup3-1) at (7.5,0) {\large 2};
    \node[cup=yellow] (cup3-2) at (8.5,0) {\large 3};
    \node[cup=red] (cup3-3) at (9.5,0) {\large 1};

    \draw[swap arrow] (cup1-1.north) -- ++(0,0.4) -- ++(1,0) node[midway, below] {\small swap} -- (cup1-2.north);
    \draw[swap arrow] (cup2-2.north) -- ++(0,0.4) -- ++(1,0) node[midway, below] {\small swap} -- (cup2-3.north);

    \draw[uni arrow] ([xshift=4pt]cup3-3.north) -- ++(0,0.3) -- ++(-1,0) -- ([xshift=4pt]cup3-2.north);
    \draw[uni3 arrow] ([xshift=-4pt]cup3-2.north) -- ++(0,0.3) -- ++(-1,0) -- ([xshift=-4pt]cup3-1.north);
    \draw[uni2 arrow] ([xshift=4pt]cup3-1.north) -- ++(0,0.4) -- ++(1.72,0) -- ([xshift=-4pt]cup3-3.north);
    
    \node (m1) at ([yshift=-30]cup1-2.south) {\large $\begin{pmatrix} 0 & 1 & 0 \\ 1 & 0 & 0 \\ 0 & 0 & 1 \end{pmatrix}$};
    \node at ([yshift=-10]m1.south) {\large $\mI - 2\vv_1\vv_1^\top$};
    
    \node (m2) at ([yshift=-30]cup2-2.south) {\large $\begin{pmatrix} 1 & 0 & 0 \\ 0 & 0 & 1 \\ 0 & 1 & 0 \end{pmatrix}$};
    \node at ([yshift=-10]m2.south) {\large $\mI - 2\vv_2\vv_2^\top$};
    
    \node (m3) at ([yshift=-30]cup3-2.south) {\large $\begin{pmatrix} 0 & 1 & 0 \\ 0 & 0 & 1 \\ 1 & 0 & 0 \end{pmatrix}$};

    \node at ($(m1.east)!0.5!(m2.west)$) {\Large $\times$};

    \node at ($(m2.east)!0.5!(m3.west)$) {\Large $=$};
    
\end{tikzpicture}}
\end{minipage}
\hfill
\begin{minipage}[t]{0.43\textwidth}
\vspace{0pt}%
\vspace{-1.5pt}%
  \caption{
A permutation of $k$ elements is also a composition of at most $k{-}1$ swaps. This maps to a product of $k{-}1$ Hoseholders, each representing a swap.
Illustrated for $k=3$.\\
$\mathbf{v}_1^\top {=} \left(\frac{1}{\sqrt{2}}, -\frac{1}{\sqrt{2}}, 0\right)$, $\mathbf{v}_2^\top {=} \left(0, \frac{1}{\sqrt{2}}, -\frac{1}{\sqrt{2}}\right)$.
  }
  \label{fig:twoswaps}
\end{minipage}
\end{figure}

\begin{theorem}\label{thm:groups}
Every FSA $\mathcal{A} = (\Sigma, Q, q_0, \delta)$ whose transition monoid $\mathcal{T}(\mathcal{A})$ is a group, can be implemented by a finite precision LRNN with one layer and $\mA: \Sigma \to \mathcal{M}_{k-1}^n(\{-1,1\})$, where $n$ is the smallest natural number such that $\mathcal{T}(\mathcal{A)}$ is isomorphic to a subgroup of $S_n$, 
and $k = \max_{w \in \Sigma} \sum_{q\in Q} \indic{\delta(q, w) \neq q}$ is the maximum number of changed states after applying a single transition.
Moreover, if $\mathcal{T}(\mathcal{A})$ is isomorphic to the cyclic group $\mathbb{Z}_m$, then we can set $\mA: \Sigma \to \mathcal{M}_{2}^2([-1,1])$ and if $m=2$ (parity) we can set $\mA: \Sigma \to \{-1,1\}$.
\end{theorem}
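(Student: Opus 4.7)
The plan is to represent each element of $\mathcal{T}(\mathcal{A})$ by a short product of Householder reflections via a faithful permutation (or, in the cyclic case, orthogonal) representation, and then decode the resulting matrix state back to the automaton state.

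For the general case, I would fix a faithful permutation representation $\varphi : \mathcal{T}(\mathcal{A}) \hookrightarrow S_n$ with $n$ minimal, let $\rho_w := \delta(\cdot, w)$, and set $\mA(w) := \mP_{\varphi(\rho_w)}$ (the $n \times n$ permutation matrix), $\mB \equiv \vzero$ and $\mH_0 := \mI_n$ (so $d = n$). The key algebraic step is the cycle decomposition: a permutation moving $j$ symbols splits into disjoint non-trivial cycles of total length $j$, hence into at most $j-1$ transpositions, because a cycle of length $\ell$ equals $\ell - 1$ transpositions. Each transposition $(ab)$ is the reflection $\mI - 2\vv\vv^\top$ with $\vv = (\ve_a - \ve_b)/\sqrt{2}$, which sits in $\mathcal{M}_1^n(\{-1,1\})$; shorter products are padded to length $k - 1$ with identity factors $\mI = \mI - 0\cdot\vv\vv^\top$, which is allowed because $1 \in \{-1,1\}$. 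This yields $\mA : \Sigma \to \mathcal{M}_{k-1}^n(\{-1,1\})$.

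To verify the implementation relation, I would define the decoder by $g(\mP_{\varphi(\sigma)}) := \sigma(q_0)$; injectivity of $\varphi$ and the homomorphism identity $\mP_{\sigma_1\sigma_2} = \mP_{\sigma_1}\mP_{\sigma_2}$ then give by induction $\mH_t = \mP_{\varphi(\rho_{w_1\cdots w_t})}$, so $g(\mH_t) = \delta(q_0, w_1\cdots w_t)$. Finite precision is a non-issue because the reachable state set is the finite orbit $\{\mP_{\varphi(\sigma)} : \sigma \in \mathcal{T}(\mathcal{A})\}$. For the cyclic case $\mathcal{T}(\mathcal{A}) \cong \mathbb{Z}_m$, I would swap the permutation representation for the faithful two-dimensional orthogonal one sending the generator to rotation by $2\pi/m$; by the Cartan--Dieudonn\'e theorem in dimension two (equivalently, the elementary fact that every planar rotation is a composition of two line reflections) each rotation lies in $\mathcal{M}_2^2(\{-1,1\}) \subseteq \mathcal{M}_2^2([-1,1])$, and the same inductive decoding applies. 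For $m = 2$ this representation collapses to dimension one, where the non-trivial element is multiplication by $-1$, giving $\mA : \Sigma \to \{-1,1\}$ directly.

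The main obstacle I anticipate is matching the combinatorial parameter $k$ (states in $Q$ moved by an input) with the length of the reflection product in the chosen embedding, since the cycle-count bound $j-1$ lives on $\{1,\dots,n\}$ while $k$ is defined on $Q$. I would address this by taking $\varphi$ to factor through the action on a $\mathcal{T}(\mathcal{A})$-invariant subset of $Q$ (so that the number of moved symbols in $\{1,\dots,n\}$ is bounded by the number moved in $Q$, after first reducing to reachable states), and then arguing that the minimal faithful embedding can be realized in this form without inflating the count. Everything else---the inductive verification of the implementation relation and the planar rotation decomposition---is routine linear algebra.
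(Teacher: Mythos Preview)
Your approach matches the paper's: map each letter to the permutation matrix of $\delta(\cdot,w)$ under a minimal faithful embedding $\mathcal{T}(\mathcal{A})\hookrightarrow S_n$, factor that matrix into at most $k-1$ transposition reflections $\mI-2\vv\vv^\top$, and decode; for the cyclic case use $2\times2$ rotation matrices written as two reflections, collapsing to the scalar $\pm1$ when $m=2$. The only cosmetic difference is that the paper takes $\mH_0=(1,\dots,n)^\top$ with $d=1$ rather than your $\mH_0=\mI_n$ with $d=n$. Your flagged concern about reconciling the $Q$-based displacement count $k$ with the number of points moved in the chosen $S_n$-embedding is legitimate; the paper simply asserts the two counts coincide without justification, so you are not missing anything the original proof actually supplies.
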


In the proof in \Cref{proof:groups}, we map each state-transition function to a matrix representation. This can always be done using permutation matrices, but for cyclic groups, we can also use rotation matrices (\Cref{app:twohousemodcount}). For permutations, if every state-transition permutes at most $k$ states then the corresponding permutation matrix will be in  $\mathcal{M}_{k-1}^n(\{-1,1\})$, since it is either the identity or can be written as a product of at most $k-1$ permutations of two elements (swaps), each in $\mathcal{M}_{1}^n(\{-1\})$ (see \Cref{fig:twoswaps}). 
A consequence of \Cref{thm:groups} is that if every transition function of the FSA has a permutation representation corresponding to a swap or the identity, then an LRNN layer with $\mA =\mA^-_{\mathrm{GH}}$, can implement it. This is useful in practice because the time complexity of an LRNN having a product of $k$ GH matrices as one state-transition matrix increases linearly with $k$. Also, for natural language tasks,
the state-transitions for the FSA might be either simple or encoded using multiple letters. For example, for addition modulo $5$, a word may look like ``3+2+4=4'' (two letters per addition). This allows an LRNN with state-transition matrices in $\mathcal{M}_1^n([-1,1])$ to model complex transitions. Indeed, if each transition uses $k$ letters and we set $\mB\equiv 0$ and $\mA: \R^l \to \mathcal{M}_1^n ([-1,1])$ in (\ref{eq:linrnn}), then the LRNN layer can model permutations that change up to $k+1$ elements since
\begin{equation*}
  \mH_t = \mC(x_t,\dots, x_{t-k})\mH_{t-k}, \quad \mC(x_t,\dots, x_{t-k}) := \mA(x_t)\mA(x_{t-1})\cdots\mA(x_{t-k}) \in \mathcal{M}_k^n([-1,1]). 
\end{equation*}

In \Cref{sec:modreflections} we also show that, interestingly, an LRNN with two layers (instead of just one), each having only reflections (instead of rotations) as state-transition matrices, can solve addition modulo $m$.
We now present an important result on the expressivity of LRNNs with multiple layers.

\begin{theorem}\label{thm:regular}
LRNNs with state transition matrices that are repeated products of GH matrices, each with eigenvalues in the range $[-1,1]$, can recognize any regular language.
In particular, every FSA $\mathcal{A} = (\Sigma, Q, q_0, \delta)$ 
can be implemented by a finite precision LRNN with $s \leq 2^{|Q|}$ layers, each of the form \ref{eq:linrnn}, where  $n \leq |Q|$, $p \leq s$, $d=1$, $\mA : \R^l \to \mathcal{M}_n^n([-1,1])$  and $\mB: \R^l \to \N^n$. 
\end{theorem}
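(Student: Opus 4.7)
The plan is to construct the LRNN as a cascade, using one layer per factor in a Krohn--Rhodes-style decomposition of $\mathcal{A}$. By the classical Krohn--Rhodes prime decomposition theorem, any finite automaton is simulated (in the FSA-homomorphism sense used in Section~3.2) by a wreath product of simpler building blocks, each of which is either a permutation automaton (whose transition monoid is a group) or a two-state reset ``flip-flop''. I would first invoke this decomposition to express $\mathcal{A}$ as a cascade of blocks $\mathcal{A}_1, \ldots, \mathcal{A}_s$.

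Each block would then be implemented as a single LRNN layer of the prescribed form, with $d=1$ throughout. For a permutation block, Theorem~\ref{thm:groups} already yields a one-layer implementation with $\mA : \Sigma \to \mathcal{M}_{k-1}^n(\{-1,1\}) \subseteq \mathcal{M}_n^n([-1,1])$, $\mB \equiv \vzero \in \N^n$, and $n \leq |Q|$. For a flip-flop block I would take $n=1$ and use $\mA(w) \in \{0,1\} \subseteq \mathcal{M}_1^1([-1,1])$ together with $\mB(w) \in \{0,1\} \subseteq \N$, so that the scalar state is either held ($\mA=1$, $\mB=0$) or overwritten to $0$ or $1$ ($\mA=0$, $\mB \in \{0,1\}$), exactly reproducing the flip-flop's dynamics.

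To realize the cascade in the stacked LRNN, information would flow between layers through the decoders: at each time $t$, the output $\hat{\vy}_t^{(i)}$ is chosen to encode the current state of $\mathcal{A}_i$ as an integer index and is concatenated with the original letter $w_t$ to form the input $\vx_t^{(i+1)}$ of layer $i+1$. Since in a wreath product the transition of $\mathcal{A}_{i+1}$ depends on both $w_t$ and the current state of $\mathcal{A}_i$, the learnable maps $\mA^{(i+1)}, \mB^{(i+1)}$ can realize this rule while remaining in $\mathcal{M}_n^n([-1,1]) \times \N^n$. Applying the homomorphism criterion of Section~3.2 to the top layer then yields the desired implementation of $\mathcal{A}$, with the per-layer output dimension bounded by $p \leq s$.

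The bound $s \leq 2^{|Q|}$ would follow from a combinatorial count of the Krohn--Rhodes factors, for instance by tracking the reachable image subsets $\{\delta(Q,w) : w \in \Sigma^*\} \subseteq 2^Q$ and assigning at most one building block per subset. The main obstacle I anticipate is making this bound precise while keeping the layer-wise constraints intact --- in particular, verifying that every resulting layer's $\mB$ lands in $\N^n$ (rather than merely $\R^n$) and that $\mA$ admits a product factorisation with at most $n$ generalized Householder terms --- since the encodings used to pass inner-block state across layers must simultaneously respect the $n \leq |Q|$ dimension bound and the Householder-count bound.
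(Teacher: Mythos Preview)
Your plan is on the right track---the paper also builds the LRNN as a cascade with one layer per Krohn--Rhodes factor---but you chose the wrong variant of the decomposition, and the difference is precisely the obstacle you flagged. You invoke the \emph{prime} decomposition into group automata plus two-state flip-flops, whereas the paper uses the holonomy-style decomposition into \emph{permutation-reset} automata (stated as \citet[Theorem~4]{maler1994cascaded}), which already carries the bounds $s\le 2^{|Q|}$ and $|Q^{(i)}|\le |Q|$ for free. With the prime decomposition those bounds are not immediate, and your suggested ``one building block per image subset'' count is really the holonomy construction in disguise, which does not output pure groups and flip-flops but permutation-reset automata.

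The technical point you are missing is that a permutation-reset automaton can be realised in a \emph{single} layer, so there is no need to split it further. For a bijective transition $\delta^{(i)}(\cdot,w)$ set $\mA^{(i)}(w)$ to the corresponding permutation matrix (in $\mathcal{M}_{n-1}^n(\{-1,1\})$ by Proposition~\ref{th:ghexpress}) and $\mB^{(i)}(w)=0$; for a constant transition $\delta^{(i)}(\cdot,w)\equiv q(w)$ set $\mA^{(i)}(w)=\mathbf{0}$ and $\mB^{(i)}(w)=\ve_{q(w)}$. The key observation is that the zero matrix also lies in $\mathcal{M}_n^n([-1,1])$ via $\mathbf{0}=\prod_{j=1}^n(\mI-\ve_j\ve_j^\top)\in\mathcal{M}_n^n(\{0\})$, so both cases fit the required form and $\mB$ is always a $\{0,1\}$-vector (hence in $\N^n$). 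The state stays one-hot, the decoder just reads off the index and appends the incoming token, and the cascade wiring is exactly as you described. Your per-block constructions (groups via Theorem~\ref{thm:groups}, flip-flops with $n=1$) are correct in isolation, but stitching them back together to recover the stated $s\le 2^{|Q|}$ would require extra work that the permutation-reset route avoids entirely.
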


The proof in \Cref{proof:regular} exploits the landmark  Theorem by \citet{krohn1965algebraic}, which states that every FSA can be decomposed as a \textit{cascade} of simpler FSAs whose state-transition functions are either one-to-one or constant. Each layer of the LRNN will implement one FSA (with $n$ states) of the cascade using $n \times n$ permutation matrices, which are in $\mathcal{M}_{n-1}^n(\{-1,1\})$, for the one-to-one transitions, while for constant (state-independent) transitions it will set the corresponding state-transition matrix to $0 \in \mathcal{M}_n^n(\{0\})$ and the function $\mB$ appropriately. 
Note that we can obtain the zero matrix only inefficiently as a product of $n$ GH matrices, while it could also be obtained with a single diagonal matrix. This points towards LRNNs using a mix of GH and diagonal matrices, as recently explored by Gated DeltaNet~\citep{yang2024gatedeltanet} and~\href{https://github.com/BlinkDL/modded-nanogpt-rwkv}{RWKV-7}.

\textbf{Discussion} The results in \Cref{thm:groups,thm:regular} for LRNNs are in sharp contrast with the ones for Transformers \citep{liu2022transformers,merrill2023parallelism} and diagonal LRNNs \citep{merrillillusion}, which require either the number of layers or the precision growing with the input sequence length, and can only implement an FSA if all groups in its transition monoid are \textit{solvable}, i.e. excluding groups isomorphic to $S_n$ with $n \geq 5$. However, compared to LRNNs without any restriction to the norm of the state-transition matrices, which need only one layer to recognize any regular language, our result requires both the number of layers and the width of the LRNN to be (in the worst case) exponential in the number of states of the FSA, although we conjecture that the number of layers might be reduced to at most linear using a more refined decomposition.

\section{Experiments}\label{sec:experiments}
\begin{wraptable}[10]{R}{.45\textwidth}
\vspace{-4.6mm}
\caption{Summary of modifications to the state-transition matrices $\mA(\vx_t)$ to extend the eigenvalue range from $[0, 1]$ (\Cref{tab:LRNNs}) to $[-1, 1]$. We set $\vs(\vx_t) = \exp\left(- \boldsymbol{\Delta}_t \exp (\vw_{1,i}) \right)$.} \label{tab:eigenvalue_range}
\vspace{-2.5mm}
\centering
\resizebox{1.0\linewidth}{!}{
\begin{tabular}{@{}lll@{}}
\toprule
&$[0, 1]$       & $[-1, 1]$     \\ \midrule
Mamba & $\mathrm{Diag}(\vs(\vx_t))$  &  $\mathrm{Diag}({\color{blue}2}\vs(\vx_t){\color{blue}-1})$ \\
DeltaNet & $\mI - \beta_t\vk_t\vk_t^\top$ & $\mI - {\color{blue}2}\beta_t\vk_t\vk_t^\top$ \\ \bottomrule
\end{tabular}}
\end{wraptable}
We investigate the effects of expanding the eigenvalue range of state-transition matrices from $[0, 1]$ to $[-1, 1]$, as explained in \Cref{sec:negative}, on both synthetic tasks and language modeling. Our experiments involve Mamba, and DeltaNet, with variants trained using both the original and extended eigenvalue ranges, as shown in Table \ref{tab:eigenvalue_range}. We label these variants accordingly. Note that the changes increase the expressivity of Mamba and DeltaNet while coming at no additional computational cost. Detailed information on the implementation can be found in \Cref{app:implementation}.

\subsection{Chomsky Hierarchy}\label{subsec:chomsky-hierarchy}

\begin{wraptable}[19]{R}{.55\textwidth}
\vspace{-4.62mm}
\caption{Performance comparison of various recurrent models on formal language tasks. We report the best of 3 runs (Table~\ref{tab:chomsky-median} in the Appendix reports the median). Scores are scaled accuracy, with 1.0 indicating perfect performance and 0.0 random guessing. The positive impact of allowing negative eigenvalues ($[-1, 1]$ range) versus restricting to positive eigenvalues ($[0, 1]$ range) is evident for both Mamba and DeltaNet. Results in parenthesis are as reported in \citet{poppel2024xlstm}.}
\vspace{-3.25mm}
\label{tab:chomsky}
\centering
\vspace{1.5mm}
\resizebox{\linewidth}{!}{
\begin{tabular}{@{}l >{\centering\arraybackslash}p{1.8cm} >{\centering\arraybackslash}p{1.9cm} >{\centering\arraybackslash}p{1.9cm} >{\centering\arraybackslash}p{2.1cm}@{}}
\toprule
& \textbf{Parity} & \begin{tabular}[c]{@{}c@{}} \textbf{Mod. Arithm.}\\ \textbf{(w/o brackets)}\end{tabular} & \begin{tabular}[c]{@{}c@{}} \textbf{Mod. Arithm.} \\ \textbf{(w/ brackets)}\end{tabular}  \\ \midrule
Transformer & 0.022 & 0.031 & 0.067 \\ \midrule
mLSTM & 0.087 (0.04) & 0.040 (0.04) & 0.114 (0.03) \\
sLSTM & \textbf{1.000} (1.00) & \textbf{0.787} (1.00) &  \textbf{0.178} (0.57)  \\ \midrule
Mamba $[0, 1]$ & 0.000  & 0.095 & \textbf{0.123} \\
Mamba $[-1, 1]$ & \textbf{1.000} & \textbf{0.241} & 0.116  \\ \midrule
DeltaNet $[0, 1]$ & 0.017 & 0.314 & 0.194  \\
DeltaNet $[-1, 1]$ & \textbf{1.000} & \textbf{0.971} & \textbf{0.260} \\ \bottomrule
\end{tabular}
}
\end{wraptable}

We conducted experiments with some of the formal language tasks proposed by~\citet{deletangneural} and similarly used to benchmark xLSTM~\citep{poppel2024xlstm}. Our focus was on tasks where mLSTM (an LRNN) previously underperformed while sLSTM (a non-linear RNN) succeeded, specifically parity, modular arithmetic without brackets (both regular languages) and modular arithmetic with brackets (context-free language). As in~\citet{poppel2024xlstm}, we trained each model with sequence lengths ranging from 3 to 40 and evaluated on lengths from 40 to 256, to assess length generalization. Note that our theoretical results cover just regular languages, excluding modular arithmetic with brackets.
We compared a Transformer, mLSTM and sLSTM against two variants each of Mamba and DeltaNet - with and without eigenvalue range extension. 

\textbf{Results} Our findings, presented in \Cref{tab:chomsky}, demonstrate that expanding the range of eigenvalues from $[0, 1]$ to $[-1, 1]$ enables all examined models to fully solve the parity task, confirming \Cref{thm:parity}. For both modular arithmetic tasks, this expansion led to substantial performance improvements for Mamba and especially DeltaNet, since the latter has non-diagonal state-transition matrices that are more suited for these tasks (see \Cref{thm:groups}).  %
In \Cref{fig:chomsky_deltanet_length_extrapolation_all} in the Appendix, we visualize the length extrapolation performance of each model on all considered tasks.
Note that we were unable to reproduce the sLSTM results reported by~\citet{poppel2024xlstm} for the modular arithmetic tasks. Additional experiments and details on the tasks in \Cref{app:chomsky-hierarchy}.

\subsection{State-Tracking}\label{sec:statetracking}
We perform experiments on group word problems, relying on the code provided by \citealp{merrillillusion}. 
We focus on the \( S_5 \) group---the first \textit{unsolvable} symmetric group where current LRNNs and Transformers are known to underperform. We also report results for addition modulo \( 60 \) (i.e., the cyclic group \( \mathbb{Z}_{60} \)) in \Cref{app:cyclic}, and note that parity corresponds to \( S_2 \). In these experiments, the model receives a sequence of group elements as input, and the supervision is another sequence of group elements, each representing the product of the preceding input elements. Since solving \( S_5 \) might need LRNNs with state-transition matrices formed by repeated products of four GH matrices (see \Cref{thm:groups}), each with eigenvalues in \([-1,1]\), we also consider three simplified setups: (i) allowing only permutations of up to 2 elements (identity and swaps), (ii) allowing only permutations of up to 3 elements, and (iii) using 4 tokens for each permutation.
Additional details are in \Cref{app:state-tracking}.
We stress that, even when restricting the inputs to only identity and swaps, the group elements for the supervision still cover the entire group, because swaps are generators of the group.

\textbf{Results} \Cref{fig:s5} shows that, as predicted by \Cref{thm:groups}, restricting the inputs to only swap permutations allows DeltaNet $[-1,1]$ with even one layer to fully learn the task (since its state-transition matrices can model swaps), while DeltaNet $[0,1]$ with 5 layers generalizes just slightly beyond the training length. In contrast, by including also permutations of $3$ elements, we notice a substantial decrease in the performance of all models. Interestingly, extending the range is still advantageous in this case and DeltaNet $[-1,1]$ with 5 layers reaches a good length generalization. Moreover, using 4 tokens per group element seems also beneficial compared to standard $S_5$, since DeltaNet $[-1,1]$ with 5 layers manages to extrapolate very well until around length $200$, which corresponds to $50$ group elements, while on standard $S_5$ all models have $0$ sequence accuracy prior to sequence length $30$. 
We also report that Mamba, a diagonal LRNN, performs poorly on all setups, with and without increased eigenvalue range.

\begin{figure}
    \centering
    \vspace{-4mm}
    \adjustbox{width=1.0\linewidth}{
    \includegraphics[width=0.36\linewidth, trim={47 0 6.5 0}, clip=true]{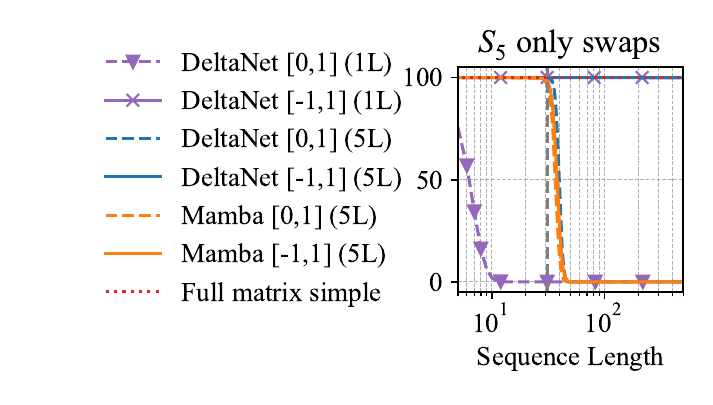}
    \includegraphics[width=0.20\linewidth, trim={11.5 0 6.5 0}, clip=true]{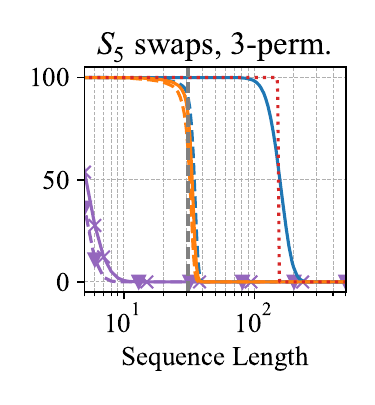}
    \includegraphics[width=0.203\linewidth, trim={11.5 0 4.6 0}, clip=true]{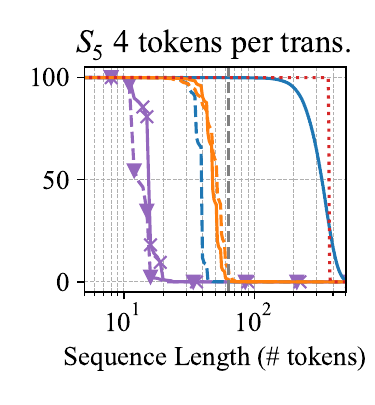}
    \includegraphics[width=0.20\linewidth, trim={11.5 0 6.5 0}, clip=true]{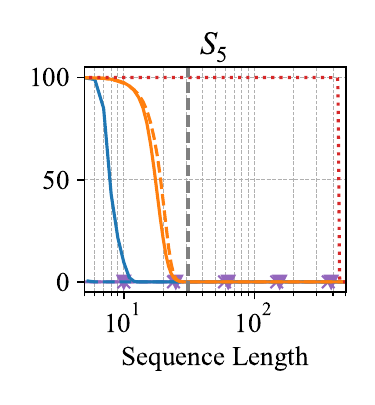}
    }
    \vspace{-9mm}
    \caption{Sequence accuracy for varying sequence lengths on $S_5$ after 100 epochs of training. We report the best of 3 seeds for each method (in \Cref{fig:s5_large} we report all seeds). The dashed vertical line indicates the sequence length used during training (32 except for the third plot from the left where it is 64). Each method is labeled with name, eigenvalue range, and number of layers. The dashed vertical line indicates the sequence length used during training. "Full matrix simple" is a one-layer baseline where the state update matrices are full and we have no control over the eigenvalue range. 
    }\label{fig:s5}
\end{figure}

\subsection{Language Modeling}

\textbf{Experimental Setup}
We train DeltaNet models with 340M and 1.3B parameters and Mamba models with 370M parameters, each using both original and extended eigenvalue ranges. Training is done on the full FineWeb-100B dataset \citep{penedo2024finewebdatasetsdecantingweb}. We chose FineWeb rather than FineWeb-Edu since it contains more code. We aligned our training pipeline with~\citet{yang2024parallelizing}; see~\Cref{app:llm_experimental_details} for details.
Given our previous theoretical and experimental findings, we hypothesize that models (especially DeltaNet) with extended eigenvalue range will perform better on language modeling tasks linked to state-tracking such as coding or mathematics, compared to unmodified models. 
To test this hypothesis, we evaluate the perplexity of these models in a length extrapolation setup using various datasets: CodeParrot~\citep{tunstall2022natural} for coding, Math-Hard~\citep{hendrycksmath2021} for mathematics, TriviaQA~\citep{2017arXivtriviaqa}, and SlimPajama~\citep{cerebras2023slimpajama}. %

\begin{figure}
    \vspace{-2mm}
    \centering
    \includegraphics[width=\linewidth, trim={3 2 8 3}, clip=true]{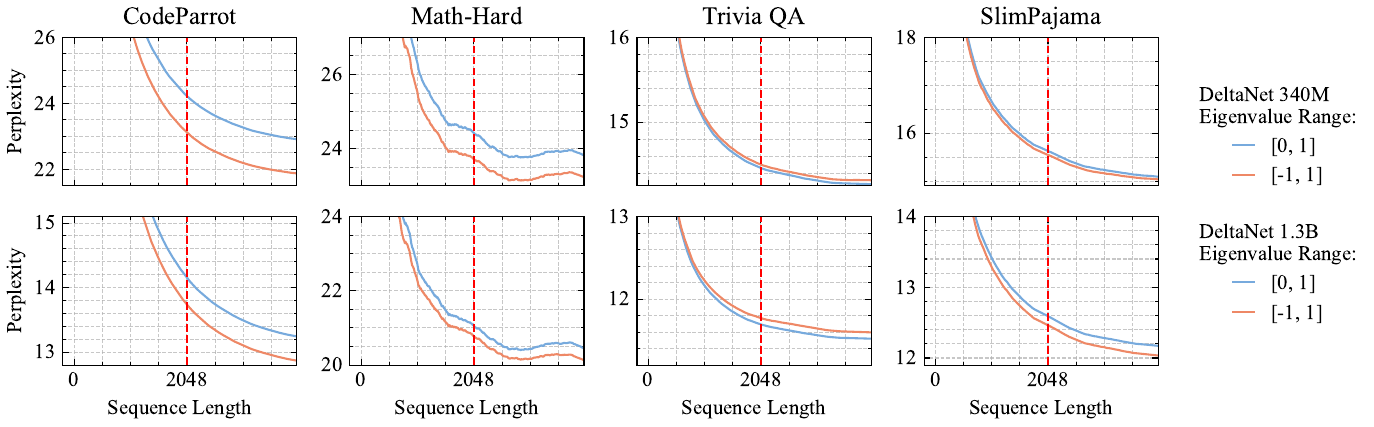}
    \vspace{-6mm}
    \caption{
    Performance vs sequence length of
    DeltaNet variants (340M (top) and 1.3B (bottom) parameters) on four datasets. DeltaNet with eigenvalue range $[-1, 1]$ improves perplexity in coding and math compared to the $[0, 1]$ baseline. Dashed vertical line at training context length (2048).}
\label{fig:deltanet_length_extrapolation}
\vspace{-5mm}
\end{figure}

\textbf{Results~} All models trained stably with our modification and without changing the learning rate. 
The validation perplexity of the proposed variants was comparable, albeit slightly worse than that of the original models throughout training (see \Cref{fig:training_curves} in the Appendix).
The 
experiments in~\Cref{fig:deltanet_length_extrapolation} demonstrate that on coding and math datasets, DeltaNet with an eigenvalue range of $[-1, 1]$ achieves lower perplexity than the baseline with range $[0, 1]$ for both model sizes. For TriviaQA, the perplexity of DeltaNet $[-1, 1]$ is slightly higher. Note, that this is a task relying on memorization, not linked to state-tracking, and hence we do not expect an improvement. On SlimPajama, we also observe slight improvement with our modification. 
For Mamba instead, our modifications consistently degrades the performance on these tasks (\Cref{fig:mamba_length_extrapolation} in the Appendix).

To ensure that our 
models are comparable with those obtained by~\citet{yang2024parallelizing}, we evaluate them on the same benchmark tasks from lm-harness~\citep{eval-harness} in Table~\ref{tab:lm_harness}. Note, that we trained on 100B tokens of FineWeb, while~\citet{yang2024parallelizing} reported results from training on 15B and 100B tokens of SlimPajama. 
At 340-370M parameters, with the extended range both architectures show enhanced performance in some of the tasks: Mamba in the second subset of tasks (+2.1\% average accuracy) and DeltaNet in retrieval tasks (+2\% SWDE, +4.4\% SQUAD). At 1.3B parameters, extending the eigenvalue range of DeltaNet
shows mixed results, suggesting that the increased expressivity may need training beyond 100B tokens to fully unlock the model's capacity.

\begin{table*}[t!]
\caption{Performance comparison using lm-harness benchmark~\citep{eval-harness} (SlimPajama (SPJ) reproduced from~\citet{yang2024parallelizing}, Fine-Web (FW) ours). Results are shown for the original and extended eigenvalue range. Our models show comparable performance across tasks.
}
\vspace{-3mm}
\adjustbox{width=\textwidth}{
\centering
\begin{tabular}{c|l|cc|ccccccc|ccc}
\toprule
\textbf{} & \multirow{2}{*}{\hspace{5mm}\textbf{Model}}  & \textbf{Wiki.}  &  \textbf{LMB.} &  \textbf{LMB.} & \textbf{PIQA} &    \textbf{Hella.} & \textbf{Wino.} & \textbf{ARC-e} &  \textbf{ARC-c} &  \textbf{Avg.}  & \textbf{SWDE} & \textbf{SQUAD} & \textbf{FDA} \\
\textbf{} &  & ppl $\downarrow$  &  ppl $\downarrow$  &  acc $\uparrow$  & acc $\uparrow$ &   acc\_n $\uparrow$  & acc $\uparrow$  & acc $\uparrow$ & acc\_n $\uparrow$ & $\uparrow$ & cont. $\uparrow$  & cont. $\uparrow$ &  cont. $\uparrow$   \\
\midrule
\multirow{5}{*}{\rotatebox{90}{\scriptsize\textit{SlimPajama 15B}}} 
& \hspace{-1mm} {\textit{340M params}} & & & & & & & & & & & & \\
& \hspace{4mm} Transformer++ & \underline{28.39} & 42.69 & 31.0 & 63.3 & 34.0 & 50.4 & 44.5 & \underline{24.2} & 41.2  & \textbf{42.2} & 22.1 & \textbf{21.4} \\
& \hspace{4mm} Mamba  $[0, 1]$ & \underline{28.39} & \underline{39.66} & 30.6 & \underline{65.0} & \textbf{35.4} & 50.1 & \textbf{46.3} & 23.6 & \underline{41.8} & 12.4 & 23.0 & 2.1 \\ 
& \hspace{4mm} GLA $[0, 1]$ & 29.47 & 45.53 & \underline{31.3} & \textbf{65.1} & 33.8 & \underline{51.6} & 44.4 & \textbf{24.6} & \underline{41.8} & 24.0 & \underline{24.7} & 7.3 \\
& \hspace{4mm} DeltaNet $[0, 1]$ & \textbf{28.24} & \textbf{37.37} & \textbf{32.1} & 64.8 & \underline{34.3} & \textbf{52.2} & \underline{45.8} & 23.5 & \textbf{42.1} & \underline{26.4} & \textbf{28.9} & \underline{12.8}  \\ 
\midrule
\multirow{6}{*}{\rotatebox{90}{\scriptsize\textit{ FineWeb 100B~~~~}}} 
& \hspace{-1mm} {\textit{340M params}} & & & & & & & & & & & & \\
& \hspace{4mm} DeltaNet $[0, 1]$  & \underline{24.68} & 31.49 & 33.7 & 70.3 & 45.1 & 51.3 & 50.0 & \underline{26.1} & 46.1 & \underline{35.2} & \underline{28.7} & \textbf{11.8} \\
& \hspace{4mm} DeltaNet $[-1, 1]$ & \textbf{24.54} & 31.15 & 34.0 & 69.9 & 44.6 & \underline{51.9} & 50.0 & 24.4 & 45.8 & \textbf{37.2} & \textbf{33.1} & \underline{6.6} \\
& \hspace{-1mm} {\textit{370M params}} & & & & & & & & & & & & \\
& \hspace{4mm} Mamba $[0, 1]$ & 24.84 & \textbf{24.69} & \underline{35.6} & \textbf{70.6} & \textbf{48.4} & 51.2 & \underline{53.4} & 24.8 & \underline{47.3} & 21.6 & 27.7 & 2.8 \\
& \hspace{4mm} Mamba $[-1, 1]$ & 25.02 & \underline{24.71} & \textbf{36.2} & \underline{70.5} & \underline{47.8} & \textbf{53.3} & \textbf{54.7} & \textbf{26.7} & \textbf{48.2} & 20.9 & 24.8 & 2.5  \\
\midrule
\multirow{6}{*}{\rotatebox{90}{~~~~\scriptsize\textit{SlimPajama 100B}}} & \hspace{-1mm} {\textit{1.3B params}} & & & & & & & & & & & & \\
& \hspace{2mm}  Transformer++ & \textbf{16.85} & \underline{13.44} &  \textbf{48.9} & 70.8 & 49.6 & 53.6 & 56.0 & 26.5 & 50.9 & \textbf{66.6} & 31.5 & \textbf{27.4}\\
& \hspace{2mm}  Mamba $[0, 1]$ & 17.06 & 13.89 & 46.2 & \textbf{72.2} &  40.1 & \textbf{54.1} &  \textbf{59.0} & \underline{28.2} & 50.0 & 41.4 & 35.2 & 6.2 \\
& \hspace{2mm} GLA $[0, 1]$ & 17.22 & 14.47 & \underline{46.9} & \underline{71.8} & \underline{49.8} & \underline{53.9} & \underline{57.2} & 26.6 & \underline{51.0} & \underline{50.6} & \textbf{42.6} & \underline{19.9}  \\
& \hspace{2mm} DeltaNet $[0, 1]$  & \underline{16.87} & \textbf{12.21} & \textbf{48.9} & 71.2 & \textbf{50.2} & 53.6 & \underline{57.2} & \textbf{28.3} & \textbf{51.6} & 49.5& \underline{37.4} & 17.2  \\
\midrule 
\multirow{2}{*}{\rotatebox{90}{\scriptsize\textit{FW 100B~~}}} & \hspace{-1mm} {\textit{1.3B params}} & & & & & & & & & & & & \\
& \hspace{2mm} DeltaNet $[0, 1]$  & \textbf{18.54} & \underline{14.32} & \underline{43.5} & \textbf{73.7} & \textbf{56.2} & \textbf{56.9} & \textbf{58.2} & \textbf{29.9} & \textbf{53.1} & \textbf{49.1} & \textbf{35.1} & \underline{8.6}  \\
& \hspace{2mm} DeltaNet $[-1, 1]$  & \underline{18.57} & \textbf{12.73} & \textbf{43.7} & \underline{73.3} & \underline{55.8} & \underline{56.8} & \underline{56.9} & \underline{27.9} & \underline{52.4} & \underline{48.8} & \underline{33.9} & \textbf{12.3}  \\
\bottomrule
\end{tabular}
}
\vspace{-5mm}
\addtolength{\tabcolsep}{2.5pt}    
\centering
\label{tab:lm_harness}
\end{table*}

\vspace{-1mm}
\section{Conclusion}
\vspace{-1mm}
In this work, we showed the substantial impact of extending the eigenvalue range of state-transition matrices in LRNNs from $[0,1]$ to $[-1,1]$. This modification provably enhances LRNN expressivity in state-tracking tasks, without adding overhead in training or inference. 
While Mamba successfully solves the parity problem, its diagonal matrix structure limits further gains. In contrast, DeltaNet, thanks to its non-diagonal state transition matrices which enable simultaneous token and channel mixing, excels across a broader spectrum of tasks. %
Our results underscore the critical role of non-diagonal state-transition matrices in augmenting state-tracking capabilities, highlighting a promising direction for future LRNN advancements.

\textbf{Limitations and Future work~} 
Our modification is not directly compatible with a numerical technique used by some diagonal LRNNs such as Mamba2, GLA and mLSTM. In particular, these models rely on positive state-transition matrices to compute cumulative products in log space, which improves numerical accuracy and potentially training stability (see
\Cref{app:implementation} for details). 
Further research is needed to assess the impact of training large-scale language models with state-tracking capabilities. 
To this end, we aim to understand the potential downsides of increased expressivity. For example, we hypothesize a fundamental trade-off between state-tracking and associative recall, which is also of theoretical interest and could guide hybrid model design.
Moreover, the theoretical expressivity of DeltaNet $[-1,1]$ with multiple layers is still unclear. We showed that it can solve addition modulo $m$ (in \Cref{sec:modreflections}) which is equivalent to the $\mathbb{Z}_3$ group word problem, but we do not know if it can also solve other word problems, such as the ones for the symmetric groups $S_n$ with $n \geq 3$.

\section*{Acknowledgments}
We would like to thank David Salinas, 	
Herilalaina Rakotoarison, Eric Alcaide, Arya Akhavan, Matia Bojovic, Erfan Mirzaei and the active members of the Flash Linear Attention discord channel for their constructive discussions and feedback. We acknowledge the support and assistance of the Data Science and Computation Facility and its Support Team, in particular Mattia Pini, in utilizing the IIT High-Performance Computing Infrastructure, on which we run our largest experiments. This research was partially supported by the following sources:  PNRR MUR Project PE000013 CUP J53C22003010006 “Future Artificial Intelligence Research (FAIR)“, funded by the European Union – NextGenerationEU, and EU Project ELSA under grant agreement No. 101070617.
TAILOR, a project funded by EU Horizon 2020 research and innovation programme under GA No 952215; the Deutsche Forschungsgemeinschaft (DFG, German Research Foundation) under grant number 417962828; the European Research Council (ERC) Consolidator Grant “Deep Learning 2.0” (grant no. 101045765). Frank Hutter acknowledges financial support by the Hector Foundation. The authors acknowledge support from ELLIS and ELIZA. Funded by the European Union. The authors gratefully acknowledge the Gauss Center for Supercomputing eV (\url{www.gauss-centre.eu}) for funding this project by providing computing time on the GCS supercomputer JUWELS at Jülich Supercomputing Center (JSC).
The MATH-HARD dataset which we use in one of our experiments was compiled from AoPS \& the AoPS Community, MATHCOUNTS, the MAA, the Centre for Education in Mathematics and Computing, the Harvard-MIT Math Tournament, the Math Prize for Girls, MOEMS, the Mandelbrot Competition, and the Institute of Mathematics and Applications.
Views and opinions expressed are however those of the author(s) only and do not necessarily reflect those of the European Union or the ERC. Neither the European Union nor the ERC can be held responsible for them.

\begin{figure}[h]
    \centering
    \includegraphics[width=0.3\linewidth]{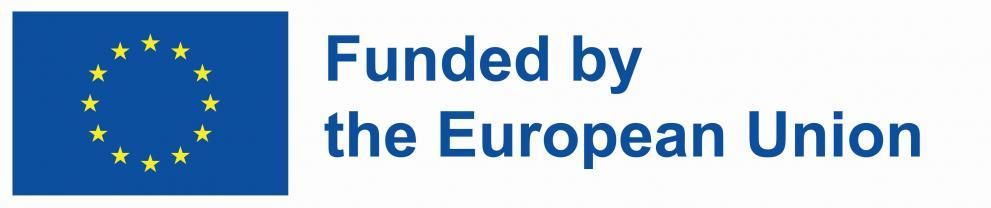}
\end{figure}

\bibliography{ref}
\bibliographystyle{iclr2025_conference}

\newpage

\appendix
\section*{\textbf{Supplementary Material}}
The supplementary material is structured as follows.
\begin{itemize}
    \item \Cref{app:background} contains additional details on the notation used, on \Cref{tab:LRNNs}, on the relationship between RNNs and regular languages, on the assumption of finite precision, on the states, and on the function $\mathrm{dec}$.
    \item \Cref{app:parity,app:ghprod} contain the proofs for the theoretical results in \Cref{sec:parity,sec:ghprod}.
    \item \Cref{sec:modreflections} contains a theorem showing that a 2 Layer LRNN having reflections as state-transition matrices can solve addition modulo $m$.
    \item \Cref{app:exps} contains additional details on the experiments and additonal results.
\end{itemize}

\section{Additional Background}\label{app:background}

\subsection{Notation}\label{app:notation}
We denote with $\C,\R,\N$ the sets of complex, real, and natural numbers, respectively.
We use lowercase letters for scalar quantities (e.g. $x \in \R$), bold lowercase letters for (column) vectors (e.g. $\vv \in \R^n$), and bold uppercase letters for matrices (e.g. $\mM \in \R^{n \times d}$). Some functions with matrix (vector) outputs, such as $\mA$ and $\mB$ in (\ref{eq:linrnn}), are also bold upper (lower) case letters to emphasize the fact that they output matrices (vectors). We use $\odot$ to indicate the element-wise (Hadamard)  product between two vectors or matrices. We denote with $\norm{\vv}$ the Euclidean norm of the vector $\vv \in \R^n$. When $\mM \in \R^{n \times d}$, $\norm{\mM}$ also refers to the Euclidean norm, corresponding to the largest singular value. The vector $\ve_i \in \R^n$ is the $i$-th vector of the canonical bases in $\R^n$, i.e. the one-hot vector with $1$ only in the $i$-th component and $0$ in the others. We define the binomial coefficient for every $k, j \in \N$ with $j \leq k$ as
\begin{equation*}
    \binom{k}{0} := 1, \quad \binom{k}{j} := \frac{k(k-1)\dots (k-j+1)}{j!}.
\end{equation*}

We  also define for a Boolean $s$ and $x \in \R$ 
\begin{equation*}
    \indic{s} := \begin{cases}
        1 \text{ if  $s$ is true} \\
        0 \text{ if $s$ is false }
    \end{cases}, \qquad \mathrm{sign}(x) := \begin{cases}
        1 &\text{if } x \geq 0 \\
        -1 &\text{if } x < 0
    \end{cases}.
\end{equation*}
We define $\mathrm{sigmoid}(x) := 1/(1 + e^{-x})$ and $\mathrm{softplus}(x) := \ln(1 + e^x)
$.

We sometimes use regular expressions \citep[see e.g.][]{hopcroft2001introduction}, to represent their corresponding regular language. So that e.g. $(11)^* = \{11\}^*$, where $\{11\}$ is the set containing the word $11$ and $*$ is the \textit{Kleene star} operation, is the language containing the empty word $\epsilon$ and all the words with an even number of ones, while $(1^m)^* = \{1^m\}^*$ is the language containing the words with a number of ones divisible by $m$ since $1^m$ indicates the word containing $1$ repeated $m$ times. A language is \textit{star-free} if it can be expressed with a regular expression that does not contain the Kleene star.

\subsection{Details of \Cref{tab:LRNNs}}
The Mamba recurrence in Equations 3 and 4 in~\citep{gu2023mamba} is applied independently to each channel of the input sequence.
Expressing the full recurrence in the matrix-form of (\ref{eq:linrnn}) is challenging, as it would require concatenating the rows of the matrix $\mH_t$.
For simplicity, in \Cref{tab:LRNNs} we  write instead the recurrence for each row of $\mH_t$.
In particular, Let $\vx_t \in \R^d$ be the input of the layer, $\mW_{\Delta} \in \R^{d\times d}$, $\vw_2 \in \R^d$, $\mW_1 = (\vw_{1},\dots, \vw_n)^\top \in \R^{n \times d}$ be learnable parameters, $\vq_t \in \R^n, \vk_t = (k_{t,1},\dots, k_{t,n})^\top \in \R^n$ be learnable functions of the input and $\boldsymbol{\Delta}_t = \mathrm{softplus}(\mW_\Delta \vx_t)$. Then, if we set $\mH_t = (\vh_{t,1}, \dots, \vh_{t,n})^\top \in \R^{n \times d}$ and $\mH_0 = 0$, we can write the recurrence for the $i$-th row of $\mH_t$ and the output as
\begin{align*}
    \vh_{t,i} = \mA_i(\vx_t) \vh_{t-1,i} +\mB_i(\vx_t), \qquad
    \hat \vy_t = \psi(\mH_{t}^\top\vq_t  + \vw_2 \odot \vx_t))
\end{align*}
where $\mA_i(\vx_t)$ and $\mB_i(\vx_t)$ are the matrices stated in~\Cref{tab:LRNNs}, i.e.\@
\begin{align*}
    \mA_i(\vx_t) := \mathrm{Diag} \left(\exp\left(- \boldsymbol{\Delta}_{t} \odot \exp (\vw_{1,i}) \right)\right) \in \R^{d \times d}, \qquad \mB_i(\vx_t) := k_{t,i} \boldsymbol{\Delta}_t \odot  \vx_t \in \R^{d}.
\end{align*}

Alternatively, as done in \cite[Table 4]{yang2024parallelizing}, one could  write the full matrix recurrence as:
\begin{equation*}
    \mH_t = \underbrace{\exp\left(- \boldsymbol{\mathbf{1}\Delta}_{t}^\top \odot \exp (\mW_{1})\right)}_{\mA(\vx_t)} \odot \mH_{t-1} +  \underbrace{\vk_{t} (\boldsymbol{\Delta}_t \odot  \vx_t)^\top}_{\mB(\vx_t)}.
\end{equation*}
where $\mathbf{1}$ is the vector of $n$ ones. However, such a recurrence is not in the form  (\ref{eq:linrnn}), since we have replaced the matrix-matrix product $\mA(\vx_t) \mH_t$ with the element-wise product $\mA(\vx_t) \odot \mH_t$.
Note that we follow the implementation of $\mB(\vx_t)$ used in the official Mamba codebase, which simplifies the expression originally presented in Equation 4 of~\citep{gu2023mamba} as described by the authors in a GitHub Issue\footnote{\url{https://github.com/state-spaces/mamba/issues/19}}.

\subsection{Regular Languages and Recurrent Neural Networks}\label{app:rnnregular}

\textbf{RNNs Can Recognize Any Regular Language~} A layer of a general RNN can be formulated similarly to (\ref{eq:linrnn}) just by replacing the linear state update with a generic state-transition function $g$ as:
\begin{equation*}
    \vh_t = g(\vh_{t-1}, \vx_{t}), \quad \vh_0 \in \R^n.
\end{equation*}
Clearly, any FSA can be implemented by an RNN layer if $g$ is sufficiently expressive to model its state transition function.

\textbf{LRNNs Can Recognize Any Regular Language~} As explained in \citep[Appendix A.2]{liu2022transformers} and in the proof of \cite[Theorem 5]{merrillillusion}, we can implement any FSA $\mathcal{A} = (\Sigma, Q, q_0, \delta)$, and thus recognize any regular language, using matrix-vector multiplication. As a result,  a single-layer LRNN by using one-hot vectors as the LRNN states and having boolean state transition matrices can recognize any language.
More specifically, in (\ref{eq:linrnn}), we can set  $n = |Q|$, $\mH_0 = (1, 0\dots,0)^\top$ and for any letter $w  \,{\in}\, \Sigma$, ~$\mB(w)  \,{=}\, 0$ and $\mA(w)  \,{\in}\, \R^{n \times n}$ being the matrix with entries $\mA(w)_{q',q}  \,{=}\, \indic{\delta(w, q) \,{=}\,q'}$. 
Note that in such a construction, the matrix $\mA(w)$ can have norm greater than one, and enabling the state-transition matrix of LRNNs to have norm greater than one can make the recurrence unstable and is therefore never done in language models (see e.g. \Cref{tab:LRNNs}).

\subsection{Finite Precision}\label{app:finiteprecision}
For our positive results on LRNNs expressivity (\Cref{thm:groups,thm:regular}), by finite precision we mean that since we have a finite number of quantities involved in the computations, then there exists a finite set $\mathbb{D} \subset \R$ that contains them and thus we do not require computations to be done in the reals but we can use $\mathbb{D}$ as datatype. In particular, $\mathbb{D}$ does not depend on the length of the input sequence. In practice, such data type is chosen beforehand, e.g. floating point numbers requiring a given number of bits of precision, which may not capture all quantities in our constructions.

In our negative results of \Cref{thm:parity,thm:modcount} instead, we can pick the finite set $\mathbb{D} \subset \R$ arbitrarily, e.g. floating point numbers, and we also make the use of the function $\mathrm{cast}: \R \to \mathbb{D}$, defined in (\ref{eq:cast}).
that we extend to $\C$ by applying it separately to the real and imaginary part and to vector and matrices by applying it element-wise. The $\mathrm{cast}$ function is used because some computations of the state of the LRNN will be allowed to be in infinite precision and then transformed to finite precision using $\mathrm{cast}$ as specified in the proofs. This function provides a simplification of the actual conversion that happens in practice.

We believe that the finite precision setup is not only realistic but also allows a better focus on the drawbacks of modern LRNN. 
Note that for Transformers, results usually rely instead on the weaker notion of log-precision \citep{liu2022transformers}, meaning that the size of $\mathbb{D}$ grows logarithmically with the sequence length. This is mainly due to their limited expressivity compared to LRNNs. We also note that concerning the state-transition matrices of modern LRNNs (see \Cref{tab:LRNNs}), the values at the extremes of the eigenvalue range are technically not included (because of the use of the $\mathrm{sigmoid}$ and $\mathrm{softplus}$ functions). However, since we are working with finite precision, we can still include them by choosing the appropriate datatype $\mathbb{D}$, which in practice includes key values such as $0$, $1$, and $-1$.

\subsubsection{Initial State, Matrix-valued States, and The Decoder function}
When introducing the LRNN layer in (\ref{eq:linrnn}), we mention that $\mA$, $\mB$ and $\mathrm{dec}$ are learnable functions. However, to learn the constructions in our theoretical results, we need also $\mH_0 \subseteq \C^{n \times d}$ to be learnable. We do this only to simplify the results, since the same effect can also be achieved by using a special token $\$$ at the beginning of each sequence input to the model, called the beginning of sequence token and setting, $\mH_0 = 0$ for each LRNN layer so that $\mB(\vx_1)$ will have the same role as the learnable $\mH_0$ in our constructions. This practice is standard and used in all our experiments.

While we mention that the states $\mH_t$ are generally matrices of dimension $n \times d$,  for our theoretical constructions (excluding the first two theorems), we set $d=1$, so that states are vector-valued. Hence, for the problems that we consider, we find that having a matrix-valued state ($d > 1$) brings no theoretical advantage, while it is very important for associative recall.

To compute the output $\hat \vy_t$ from the state $\mH_t$ and the vector $\vx_t$ of an LRNN layer in (\ref{eq:linrnn}), we use the function $\mathrm{dec}$, to abstract away the computations that are done on $\mH_t$ and $\vx_t$, since they are not part of the recurrence. In this work, we do not consider the internal structure of $\mathrm{dec}$, but it usually contains a normalization and a feed-forward neural network and it can approximate any continuous function. 

In our negative results on LRNNs expressivity in \Cref{thm:parity,thm:modcount}, our choice of an arbitrary decoder guarantees the stronger results. For our positive results instead, we either do not consider the decoder (\Cref{thm:groups}) or we make use of a linear decoder (\Cref{thm:regular}). We point out that to recognize regular languages efficiently and with a smaller LRNN state it is beneficial to have a more powerful (non-linear) decoder, as in the case of word problems for cyclic or permutation groups. However, such a decoder may be hard to learn.

\section{Parity and Modular Counting -- Proofs}\label{app:parity}
We report the proofs for the theorems in \Cref{sec:parity}. We start by defining the function $\mathrm{cast}: \R \to \mathbb{D}$, for a finite set $\mathbb{D} \subset \R$, which provides a simple model for the conversion of real numbers into a finite precision representation.
\begin{equation}\label{eq:cast}
    \mathrm{cast}(x) = \min_{z \in \mathcal{D}_{\min}} z, \quad \mathcal{D}_{\min} : = \argmin_{z \in \mathbb{D}} |z-x|.
\end{equation}
Note that $\mathcal{D}_{\min}$ might not be a singleton. We naturally extend this function on complex numbers by applying it separately to the real and imaginary part, and then to complex-valued matrices by applying it element-wise.
The following lemma is a key element of the proofs of \Cref{thm:parity,thm:modcount}. 
There, the sequence $a_k$ in the lemma takes the form of the imaginary or real part of the elements of the $k$-th power of a matrix with real eigenvalues ($\lambda_i$ will be one eigenvalue), expressed using the Jordan canonical form. See \Cref{proof:parity} for more details on the Jordan Canonical Form.
Intuitively, the lemma shows that if some of the $\lambda_i$-s are negative then for $k$ large enough, $a_k$ in finite precision will alternate between two values. Instead, if the $\lambda_i$-s  are only nonnegative, $a_k$ in finite precision becomes constant for large enough $k$. 
\begin{lemma}\label{lm:finitetime}
Let $n, \bar{m} \in \N$ and for every $k > \bar{m}$ let
\begin{equation*}
    a_k := \sum_{i=1}^n c_i\binom{k}{m_i} \lambda_i^{k- m_i}, \quad \text{with } c_i, \lambda_i \in \R, m_i \in  \N, m_i \leq \bar{m},  \quad \forall i \in \{1,\dots, n\},  
 \end{equation*}
 then there exist $\bar{k} \in \N$ such that for every $k \geq \bar{k}$ there exist $\bar{a}_1, \bar{a}_2 \in \mathbb{D}$ such that
 \begin{equation*}
     \mathrm{cast}(a_{2k}) = \bar{a}_1, \quad  \mathrm{cast}(a_{2k+1}) = \bar{a}_2.
 \end{equation*}
 Furthermore, if $\lambda_i \geq 0$ for every $i \in \{1,\dots, n\}$, then $\mathrm{cast}(a_{k}) = \bar{a}_1 = \bar{a}_2$ for $k \geq \bar{k}$. 
\end{lemma}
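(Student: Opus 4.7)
The plan is to exploit the fact that, for $k$ large, $a_k$ can be written as $a_k = f(k) + (-1)^k g(k)$ where $f,g$ are polynomial-exponential sums with \emph{nonnegative} real bases, then to use the fact that such sums on $\N$ are eventually of constant sign relative to any fixed real limit, and finally to convert this via the finiteness of $\mathbb{D}$ into the stabilization of $\mathrm{cast}$. Concretely, for $k > \bar m$ every term with $\lambda_i = 0$ contributes $c_i \binom{k}{m_i}\, 0^{k - m_i} = 0$ and can be dropped. For the remaining indices, write $\lambda_i = \epsilon_i |\lambda_i|$ with $\epsilon_i \in \{-1,+1\}$, absorb $\lambda_i^{-m_i}$ together with the polynomial $\binom{k}{m_i}$ into a real polynomial $p_i(k)$, and split
\[
a_k \;=\; \underbrace{\sum_{\epsilon_i=+1} p_i(k)\, |\lambda_i|^k}_{=:\,f(k)} \;+\; (-1)^k\underbrace{\sum_{\epsilon_i=-1} p_i(k)\, |\lambda_i|^k}_{=:\,g(k)}.
\]
Then $F(k) := a_{2k} = f(2k) + g(2k)$ and $G(k) := a_{2k+1} = f(2k+1) - g(2k+1)$ are themselves polynomial-exponential sums in $k$ with bases $|\lambda_i|^2 \geq 0$.

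Next I would prove the following auxiliary fact: if $h(k) = \sum_j q_j(k)\, r_j^k$ has distinct $r_j > 0$ and real polynomial coefficients $q_j$ not all identically zero, then $h$ has a well-defined limit in $\R \cup \{\pm\infty\}$ on $\N$; more generally, for every $L \in \R$ the sum $h(k) - L$ (still polynomial-exponential, absorbing $L$ into the $r=1$ term, or dominated if $r^* > 1$) is either identically zero from some $k$ onward or of eventually constant sign. This is the standard dominant-term argument: let $r^*$ be the largest base carrying a nonzero polynomial and $q^*$ that polynomial; then $h(k)/(r^*)^k = q^*(k) + o(1)$, so for $k$ large the sign of $h$ matches that of the leading coefficient of $q^*$. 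Applied to $F$ and $G$, this yields eventual limits $L_F, L_G \in \R \cup \{\pm\infty\}$ with $F - L_F$ and $G - L_G$ of eventually constant sign or eventually zero.

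The final step converts this analytic control into the discrete conclusion using finiteness of $\mathbb{D}$. If $L_F = \pm\infty$, then for $k$ large $F(k)$ exceeds $\max \mathbb{D}$ (or drops below $\min \mathbb{D}$) and $\mathrm{cast}(F(k))$ locks onto that extreme. If $L_F \in \R$, let $\mathcal{D}_{\min}(L_F) \subseteq \mathbb{D}$ be the nearest points to $L_F$ and $r > 0$ the distance from $L_F$ to $\mathbb{D} \setminus \mathcal{D}_{\min}(L_F)$ (or $+\infty$ if this set is empty): whenever $|F(k) - L_F| < r/2$ the nearest point of $\mathbb{D}$ to $F(k)$ lies in $\mathcal{D}_{\min}(L_F)$, and because $F(k) - L_F$ has eventually constant sign the $\min$ tie-break in the definition of $\mathrm{cast}$ selects the same element for all such $k$. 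This yields the constant $\bar a_1$, and an identical argument on $G$ yields $\bar a_2$. For the additional claim under $\lambda_i \geq 0$, $g \equiv 0$ and $a_k = f(k)$ is itself a single polynomial-exponential sum with nonnegative bases, so the argument applies directly to the full sequence $a_k$ rather than only its even/odd subsequences, producing a single constant value and forcing $\bar a_1 = \bar a_2$. I expect the most delicate point to be the case where $L_F$ is equidistant from two elements of $\mathbb{D}$: only the combination of (i) the eventual one-sided approach of $F(k)$ to $L_F$ and (ii) the $\min$ tie-breaking baked into $\mathrm{cast}$ prevents $\mathrm{cast}(F(k))$ from flipping between two equally-near neighbours.
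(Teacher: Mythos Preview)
Your proposal is correct and follows essentially the same route as the paper's proof: both reduce the general real-$\lambda_i$ case to the nonnegative case by passing to the even/odd subsequences (your $f+(-1)^k g$ decomposition is exactly the paper's splitting via $\mathrm{sign}(\lambda_i)^{k-m_i}$), and both settle the nonnegative case by a dominant-term argument followed by the observation that a sequence eventually confined to one side of its limit lands in a single piece of the piecewise-constant $\mathrm{cast}$. The paper carries out the dominant-term step concretely on the $\binom{k}{m_i}\lambda_i^{k-m_i}$ form with an explicit three-way case split on $(\lambda_1,m_1)$, whereas you abstract it into the cleaner statement that a real polynomial-exponential sum has a limit in $\R\cup\{\pm\infty\}$ and is eventually of constant sign relative to any fixed real threshold; your explicit discussion of the midpoint/tie-break case is a nice addition that the paper leaves implicit.
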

\begin{proof}
If $c_i = 0$ for every $i$, or $\lambda_i = 0$ for every $i$, then $a_k = 0$ for all $k$ and the statement is trivially satisfied. Without loss of generality we can assume that that $c_i \neq 0$ and $\lambda_i \neq 0$ for every $i \in \{1,\dots, n\}$, since for each $i$ where this is not true we can remove the corresponding term in the sum (since it will be 0) and use smaller value for $n$. We divide the proof into two parts.

\textbf{Positive powers:} Assume that $\lambda_i > 0$ for all $i \in \{1,\dots, n\}$. This yields that for every $i$ and every $k {>} \bar{m}$, $\binom{k}{m_i} \lambda_i^{k- m_i} > 0$.  Since the $\mathrm{cast}$ function is piecewise constant with a finite number of pieces, we can divide the real line into a finite number of intervals where $\mathrm{cast}$ is constant. We now show that for $k$ large enough, the interval where $a_k$ belongs, and hence $\mathrm{cast}(a_k)$,  does not vary with $k$.

Without loss of generality we assume that for every $i,j \in \{1, \dots, n\}$ we have that $(m_i, \lambda_i) \neq (m_j, \lambda_j)$, since otherwise we can factor out $\binom{k}{m_i}\lambda_i^{k-m_i}$ and use a smaller $n$. Note that $\binom{k}{m_i}\lambda_i^{k-m_i} = \frac{k(k - 1)\cdots (k-m_i + 1)}{m_i!} \lambda_i^{k-m_i}$ and hence $g_i(k) = \binom{k}{m_i}\lambda_i^{k-m_i}$ for large $k$ behaves like the function $k^{m_i}\lambda^k$, i.e. the product of a polynomial and an exponential function of $k$. Without loss of generality, we therefore take the order of the indices of the terms in the sum such that the functions $g_i$ are in decreasing order of growth:
\begin{equation*}
    \lambda_i > \lambda_j \text{ or } \lambda_i = \lambda_j, m_i > m_j \qquad \forall i,j: i > j. 
\end{equation*}
By factoring out $g_1(k)$, i.e.\@ the fastest growing term, from $a_k$ we get
\begin{equation*}
    a_k = \binom{k}{m_1} \lambda_1^{k- m_1} \left(c_1 + b_k\right) \qquad b_k := \sum_{i=2}^n c_i \frac{ \binom{k}{m_i} \lambda_i^{k- m_i}}{\binom{k}{m_1} \lambda_1^{k- m_1}},
\end{equation*}
with $\lim_{k\to \infty} b_k = 0$ and therefore, since for every $i$ and every $k > \bar{m}$, $\binom{k}{m_i} \lambda_i^{k- m_i} > 0$ and $c_1 \neq 0$, there exist $\hat{k} \in \N$ such that for every $k \geq \hat{k}$, $\sign(a_k) = \sign(c_1 + b_k) = \sign(c_1)$. Now let $\mathbb{D} = \{z_1, \dots, z_d\}$ with $z_1 < z_2 < \dots < z_d$ and let $y_1 = -\infty$, $y_{d+1} = \infty$ and $y_i = (z_{i-1} + z_i)/2$ for $i \in \{2,\dots,d\}$. From its definition,  $\mathrm{cast}$ is a piecewise constant function such that $\mathrm{cast}(x) = z_i$ for every $x \in (y_i, y_{i+1})$.
We now consider three cases according to the values of $\lambda_1$ and $m_1$.

\textbf{1)}
If $\lambda_1 > 1$ or $\lambda_1 = 1, m_1 > 0$, then $\lim_{k\to \infty}\binom{k}{m_1} \lambda_1^{k- m_i} = \infty$ and there exists $\bar{k} \geq \hat{k}$ such that for every $k \geq \bar{k}$, either $a_k > y_d$ (if $\sign(c_1) = 1$) or $a_k < y_2$ (if $\sign(c_1) = -1$)  and hence $\mathrm{cast}(a_k) = \bar{a} \in \{z_1,z_d\}$.

\textbf{2)} If $\lambda_1 < 1$ then $\lim_{k\to \infty}\binom{k}{m_1} \lambda_1^{k- m_i} = 0$ and hence there exist $\epsilon > 0$, $j \in \{1,\dots, d\}$, $\bar{k} > \hat{k}$ such that for every $k \geq \bar{k}$, $ a_k \in \Omega \subseteq (y_j, y_{j+1})$, where $\Omega = (0, \epsilon)$ if $\sign(c_1) = 1$ and  $\Omega = (-\epsilon,0)$ if  $\sign(c_1) = -1$. Therefore, $\mathrm{cast}(a_k) = z_j$ for every $k \geq \bar{k}$.

\textbf{3)} If $\lambda_1 = 1, m_1 = 0$, then $\binom{k}{m_1} \lambda_1^{k- m_i} = 1$ for every $k$  and hence
\begin{equation*}
    a_k = c_1 + b_k, \quad b_k = \sum_{i=2}^n c_i \binom{k}{m_i} \lambda_i^{k- m_i} \qquad \text{with } \lambda_i < 1 \ \forall i \in \{2,\dots, n\}
\end{equation*}
Note that $b_k$ has now the same structure as $a_k$, just with one less term in the sum, therefore we can factor out the term $\binom{\lambda_2}{m_2}\lambda^{k-m_2}$ and, since $\lambda_2 < 1$, apply the same reasoning as for the second case ($\lambda_1 < 1$) to $c_1 + b_k$ and prove that there exist $\epsilon > 0$, $j \in \{1,\dots, d\}$, $\bar{k} > \hat{k}$ such that for every $k \geq \bar{k}$, we have that $\sign(b_k) = \sign(c_2)$, $a_k \in \Omega \subseteq (y_j, y_{j+1})$, where $\Omega = (c_1, \epsilon)$ if $\sign(c_2) = 1$ and  $\Omega = (-\epsilon,c_1)$ if  $\sign(c_2) = -1$. Therefore $\mathrm{cast}(a_k) = z_j$ for every $k \geq \bar{k}$. 

In summary, we proved that when $\lambda_i \geq 0$ for every $i$,  there exist $\bar{a} \in \mathbb{D}$, $\bar{k} \in \N$ such that for every $k \geq \bar{k}$ $a_k = \bar{a}$, which concludes the first part of the proof.

\textbf{Some powers can be negative:} Consider the general case where $\lambda_i \in \R$ can be negative. We can write
\begin{equation*}
    a_k = \sum_{i=1}^n c_i\binom{k}{m_i} \mathrm{sign}(\lambda_i)^{k- m_i} |\lambda_i|^{k- m_i}. 
 \end{equation*}
Since $\mathrm{sign}(x)^{2k-m_i}$ and $\mathrm{sign}(x)^{2k+1-m_i}$ do not vary with $k$ we consider the two subsequences
\begin{align*}
    a_{2k} &= \sum_{i=1}^n \hat{c}_i\binom{2k}{m_i}  |\lambda_i|^{2k- m_i}, \quad \hat{c}_i = c_i\mathrm{sign}(\lambda_i)^{2k- m_i} \\
a_{2k+1} &= \sum_{i=1}^n \tilde{c}_i\binom{2k+1}{m_i}  |\lambda_i|^{2k+1- m_i}, \quad \tilde{c}_i = c_i \mathrm{sign}(\lambda_i)^{2k+1- m_i},  
 \end{align*}
and we can apply the same proof as for the case when $\lambda_i > 0$ for every $i$ to each of the subsequences above, which gives the final result in the case $\lambda_i \in \R$ for every $i$.
\end{proof}

\subsection{Proof of \Cref{thm:parity}}\label{proof:parity}

The language $(11)^*$ contains all sequences with an even number of ones. An FSA recognizing the language, for the sequence $1^k$ will output $y_k = 1$ if $k$ is even and $y_k = 0$ if $k$ is odd.  
 Consider an LRNN with one layer as in (\ref{eq:linrnn}). We will prove that if $\mA(1)$ has only nonnegative eigenvalues, then there exists a $\overline{k} > 0$ such that for every $k \geq \overline{k}$, the finite precision version of the state $\mH_k$ corresponding to the sequence $1^k$ does not depend on $k$ and is equal to $\overline{\mH}$. Hence, no matter the choice of $\mathrm{dec}$, also the finite precision version of $\hat{\vy}_k$ will not vary with $k$ and thus for some $k' \geq \bar{k}$, $ \hat \vy_ {k'} \neq k' \mod 2 = y_{k'}$. An inductive argument can then be used for the case of LRNNs with multiple  (finitely many) layers, using the fact that the input of the next layer will be constant for $k$ large enough, as the input of the first layers.

 By unrolling the recursion in \ref{eq:linrnn} we obtain a closed-form expression for the state
 \begin{equation*}
     \mH_k = \sum_{i=1}^{k-1} \Bigg(\prod_{j=i+1}^{k-1} \mA(\vx_j) \Bigg) \mB(\vx_i) + \Bigg(\prod_{i=1}^{k} \mA(\vx_i)\Bigg) \mH_0,
 \end{equation*}
where we set $\prod_{j=k}^{k-1} \mA(x_j) = \mI $ to avoid clutter. We follow \citet{merrillillusion} and make the simplifying assumption that in finite precision the state at time $k$ is computed by first evaluating all products involving the matrices $\mA(\vx_j)$ separately and in infinite precision, followed by casting them into finite precision, and finally executing the sum also in infinite precision and casting the result in finite precision. This avoids having to deal with the individual matrix sums and products in finite precision, which would break associativity and be harder to analyze.
Hence, if we set $\vx_1\dots\vx_k = 1^k$, we get the following exact and finite precision expressions for the state at time $k$.
 \begin{equation*}
     \mH_k = \sum_{i=0}^{k-1} \mA(1)^{i} \mB(1) + \mA(1)^k \mH_0, \quad \widehat\mH_k = \mathrm{cast}\left(\sum_{i=0}^{k-1} \mathrm{cast}\left(\mA(1)^i \mB(1)\right) + \mathrm{cast}\left(\mA(1)^k \mH_0\right) \right),
 \end{equation*}
where $\text{cast}$, defined in (\ref{eq:cast}), is an operation that converts matrices with complex values element-wise into finite precision by e.g.\@ separately converting real and imaginary parts.
 
Using the Jordan canonical form theorem \citep[see e.g.][Chap.~3.1]{Horn2012}, we can write $\mA(1) = \mP \mJ \mP^{-1}$, where $\mJ$ is block diagonal made of the Jordan blocks $\mJ_{1},\dots , \mJ_{s}$ with $s \leq n$, $\mJ_i \in \R^{k_i \times k_i}$ and with corresponding complex eigenvalues $\lambda_1\dots\lambda_s$ (with multiplicity taken into account). Such decomposition is useful because it allows, for $k \geq \max_{i} k_i-1$, to write 
 \begin{equation*}
     \mA(1)^k = \mP \mJ^k \mP^{-1}, \quad \mJ_{i}^k=\left[\begin{array}{cccccc}
\lambda_i^k & \binom{k}{1} \lambda_i^{k-1} & \binom{k}{2} \lambda_i^{k-2} & \cdots & \cdots & \binom{k}{k_i-1} \lambda_i^{k-k_i+1} \\
& \lambda_i^k & \binom{k}{1} \lambda_i^{k-1} & \cdots & \cdots & \binom{k}{k_i-2} \lambda_i^{k-k_i+2} \\
& & \ddots & \ddots & \vdots & \vdots \\
& & & \ddots & \ddots & \vdots \\
& & & & \lambda_i^k & \binom{k}{1} \lambda_i^{k-1} \\
& & & & & \lambda_i^k
\end{array}\right].
 \end{equation*}
Then, from the structure of the Jordan decomposition, the imaginary and real part of each element of the matrices $\mA(1)^k \mB(1)$ and $\mA(1)^k \mH_0$  will be a linear combination of elements of the Jordan blocks taking the same form of $a_k$ in \Cref{lm:finitetime}. Therefore since $\lambda_i \geq 0$ for every $i$, we can apply \Cref{lm:finitetime} component-wise
and conclude that there exists $\tau\in \N$, $\widehat \mC \in \C^{n \times d}$ and $\widehat \mD \in \C^{n \times d}$  such that for every $k \geq \tau$,  $\widehat \mC_k = \mathrm{cast}(\mA(1)^k \mB(1)) = \widehat \mC$ and $\widehat \mD_k = \mathrm{cast}(\mA(1)^k \mH_0) = \widehat \mD$
and hence
\begin{equation*}
      \widehat\mH_k  = \mathrm{cast}\left(\sum_{i=0}^{\tau-1} \widehat \mC_i  + \widehat\mD + (1-\tau)\widehat\mC + k\widehat\mC \right).
\end{equation*}
Note that only the matrix $k \widehat\mC$ varies with $k$ and for large enough $k$, the real and imaginary parts of each element of $k \widehat\mC$ will be either $0$, smaller than $\min_{x \in \R}\mathrm{cast}(x)$ or larger than $\max_{x \in \R}\mathrm{cast}(x)$. Therefore, we obtain that there exists $\overline \mH \in \C^{n \times d}$ and $\bar{k} \geq \tau$ such that for every $k \geq \bar{k}$ we have $\widehat \mH_k = \overline \mH$, which concludes the proof.
\qed %

\subsection{Proof of \Cref{thm:modcount}}\label{proof:modcount}

\textbf{One Layer} Let $\widehat \mH_k $ and $ \hat y_k := \mathrm{cast}(\mathrm{dec}(\widehat \mH_k, x_k))$ be the finite precision versions of the state $\mH_k$ and (scalar) output of a one-layer LRNN  on the input $\vx =x_1 \dots x_k = 1^k$. Let also $y_k = \indic{k \mod m = 0 }$ be the correct output recognizing the word $\vx$. We will show that if the assumptions on the eigenvalues are not satisfied, i.e.\@ if for any $x$, every eigenvalue $\lambda$ of $\mA(x)$ is  real, then there exist $\overline{\mH}_1, \overline{\mH}_2 \in \C^{n \times n}$, $\bar{y}_1, \bar{y}_2 \in \R^p$ and $\tau \in \N$ such that for all $k \geq \tau $
\begin{equation}\label{eq:hoscillating}
    \widehat\mH_{k} := \begin{cases}
        \overline{\mH}_1 \quad \text{if } k \mod 2 = 0 \\
        \overline{\mH}_2 \quad \text{otherwise} 
    \end{cases}, \quad \hat y_k  = \begin{cases}
        \bar y_1 \quad \text{if } k \mod 2 = 0 \\
        \bar y_2 \quad \text{otherwise}
    \end{cases}
\end{equation}
where without loss of generality we take $\bar y_1, \bar y_2 \in \{0,1\}$. If $\bar y_1 = \bar y_2$, then, similarly to parity, $\hat y_k = \hat y_{k+1}$ for all $k > \tau$, while since $m > 2$, if $k \mod m = m-1$, then $1 = y_{k+1} \neq y    _k= 0$. Otherwise if $\bar y_1 \neq \bar y_2$ then if we assume that $k \mod d = 1$ and $\hat y_k = y_k = 0$, then $1 = \hat y_{k+1} \neq y_{k+1}  = 0$ since $m > 2$. This will prove the result for a one-layer LRNN. Then, we will proceed with the proof of finitely many layers.

To prove (\ref{eq:hoscillating}), we set
 \begin{equation*}
      \widehat\mH_k = \mathrm{cast}\left(\sum_{i=0}^{k-1} \mathrm{cast}\left(\mA(1)^i \mB(1)\right) + \mathrm{cast}\left(\mA(1)^k \mH_0\right) \right),
 \end{equation*}
and proceed similarly to \Cref{thm:parity}. 
Indeed, using the $k$-th power formula for the Jordan Decomposition of the matrix $\mA(1)$ with eigenvalues $\lambda_1,\dots,\lambda_s$, the imaginary and real part of each element of the matrices $\mA(1)^k \mB(1)$ and $\mA(1)^k \mH_0$  will be a linear combination of elements of the Jordan blocks taking the same form of $a_k$ in \Cref{lm:finitetime}. Therefore since our assumptions with $L=1$ imply that $\lambda_i \in \R$ for every $i$, we can apply \Cref{lm:finitetime}
to show that there exist $\bar{\tau} \in \N$, $\overline\mC_1, \overline\mC_2, \overline \mD_1, \overline \mD_2 \in \C^{n \times d}$ such that for every $k \geq \tau$ we have
\begin{equation*}
    \widehat\mC_k := \mathrm{cast}(\mA(1)^k \mB) = \begin{cases}
        \overline\mC_1 \text{ if } k \bmod 2 = 1 \\
        \overline\mC_2 \text{ if } k \bmod 2 = 0
    \end{cases}
    \widehat\mD_k := \mathrm{cast}(\mA(1)^k \mH_0) =  \begin{cases}
        \overline\mD_1 \text{ if } k \bmod 2 = 1 \\
        \overline\mD_2 \text{ if } k \bmod 2 = 0
    \end{cases}
\end{equation*}
Finally, if for simplicity we consider $\tau \bmod 2 = 0$, we have that for $2k \geq \tau$
\begin{align*}
    \widehat\mH_{2k} &= \mathrm{cast}\left(\sum_{i=1}^{\tau-1} \widehat \mC_i  + \left(k - \frac{\tau}{2} +1\right) \overline \mC_2 + \left(k - \frac{\tau}{2}\right) \overline \mC_1 + k\overline\mD_2 \right) \\
    \widehat\mH_{2k+1} &= \mathrm{cast}\left(\sum_{i=1}^{\tau-1} \widehat \mC_i+ \left(k - \frac{\tau}{2} +1\right)  (\overline\mC_2 + \overline \mC_1) + k\overline\mD_1 \right)  \\
\end{align*}
where by factoring out $k$ inside $\mathrm{cast}$, we note that for large enough $k$, the real and imaginary parts of each element of the matrices inside $\mathrm{cast}$ will be either constant, smaller than $\min_{x \in \R}\mathrm{cast}(x)$ or larger than $\max_{x \in \R}\mathrm{cast}(x)$.
Thus there exist $\overline \mH_1, \overline \mH_2 \in \C^{n \times d}$ and $\bar{k} \geq \tau$ such that (\ref{eq:hoscillating}) is satisfied, concluding the proof for the case of a single layer.

\textbf{Multiple Layers~} Note that for one layer we have two subsequences (one of even and one of odd elements) of the output sequence $\hat \vy_1, \hat \vy_2, \dots$ converging after a finite number of elements. This means that there exist $\va, \vb \in \R^p$ such that for all $k \geq \bar{k}$ we have
\begin{equation*}
    \hat{\vy}_{2k} = \va, \quad \hat{\vy}_{2k + 1} = \vb. 
\end{equation*}
Now, consider an additional layer that takes as input $\vx^{(2)}_1, \dots, \vx^{(2)}_k$, with $\vx^{(2)}_i = \hat \vy_i$ and outputs $\hat \vy^{(2)}_1, \dots, \hat \vy^{(2)}_k$ as
\begin{equation*}
    \mH_{k}^{(2)} = \mA^{(2)}(\vx^{(2)}_k) \mH^{(2)}_{k-1} +  \mB^{(2)}(\vx^{(2)}_k), \quad \hat \vy^{(2)}_k = \mathrm{dec}^{(2)}(\mH^{(2)}_{k}, \vx^{(2)}_k).
\end{equation*}
Without loss of generality, assume for simplicity that $\bar{k} = 1$ and that $\hat \vx^{(2)}_{2k} = \va$ and $\hat \vx^{(2)}_{2k+1} = \vb$ for all $k$. If we set 
\begin{equation*}
\begin{aligned}
    \mA_1 &:= \mA^{(2)}(\va),\quad &&\mA_2 :=  \mA^{(2)}(\vb),\\
    \mB_1 &:= \mB^{(2)}(\va), &&\mB_2 :=  \mB^{(2)}(\vb),\\
    \mC_1 &:= \mA_1\mA_2, &&\mC_2 := \mA_1\mB_2 +\mB_1,
\end{aligned}
\end{equation*}
then we can write the states of the second layer at even indices as
\begin{align*}
    \mH_{2k}^{(2)} &= \mA_1 \mH^{(2)}_{2k-1} +  \mB_1 = \mA_1\mA_2 \mH^{(2)}_{2k-2} +  \mA_1\mB_2 +\mB_1 \\ 
    &= \mC_1 \mH^{(2)}_{2(k-1)} +  \mC_2 = \sum^{k-1}_{i=0}\mC_1^i \mC_2 + \mC_1^k\mH_0
\end{align*}
Furthermore, for the states at odd indices, we have 
\begin{align*}
    \mH_{2k+1}^{(2)} &=  \mA_2 \mH^{(2)}_{2k} +  \mB_2 = \sum^{k-1}_{i=0}\mA_2\mC_1^i \mC_2 + \mA_2\mC_1^k\mH_0 +  \mB_2.
\end{align*}
We notice that the sequences $\mH_{2k}^{(2)}$ and $\mH_{2k+1}^{(2)}$ are in a form similar to $\mH_{k}$  of the first layer. If the assumption on the eigenvalues of the state-transition matrices of the second layer does not hold, this means that for all $\vx, \vy$  each eigenvalue of  $\mA^{(2)}(\vx) \mA^{(2)}(\vy)$, including $\mC_1$, is real (but possibly negative).
Therefore, we can proceed similarly to the case of one layer, i.e. using the powers of the Jordan canonical form of $\mC_1$, to show that if we let $\widehat\mH_{2k}^{(2)}$ and $\widehat\mH_{2k+1}^{(2)}$ being the finite precision counterparts of $\mH_{2k}^{(2)}$ and $\mH_{2k+1}^{(2)}$, then  there exist $\overline \mH^{(2)}_1, \overline \mH^{(2)}_2,\overline \mH^{(2)}_3,\overline \mH^{(2)}_4 \in \C^{n \times d}$, $\bar{k}_2 \geq 0$ such that for every $k \geq \bar{k}$
\begin{equation*}
    \widehat\mH_{2k}^{(2)} = \begin{cases}
        \overline\mH^{(2)}_1 \text{ if } k \bmod 2 = 0 \\
        \overline\mH^{(2)}_2 \text{ if } k \bmod 2 = 1 
    \end{cases}, \quad     \widehat\mH_{2k+1}^{(2)} = \begin{cases}
        \overline\mH^{(2)}_3 \text{ if } k \bmod 2 = 0 \\
        \overline\mH^{(2)}_4 \text{ if } k \bmod 2 = 1 
    \end{cases}.
\end{equation*}
Therefore, for $k \geq \bar{k}_2$, the function $k \mapsto \overline\mH^{(2)}_k$ will be periodic with period a divisor of four and hence no matter the choice of $\mathrm{dec}^{(2)}$, also the function $k \mapsto \hat \vy^{(2)}_k$ will be periodic with period a divisor of $4$. Consequently, with two layers one can recognize the language $(1^m)^*$ only when $m = 1$, $m=2$, or $m=4$, since those are the only cases where $k \mapsto y_k$ has a period which is a divisor of $4$. Thanks to the assumption on the eigenvalues of the products of state-transition matrices, we can extend this argument inductively to the case of an LRNN with $L$ layers. 
In particular, for the $i$-th layer, the induction hypothesis is that we assume $k \mapsto \vx_k^{(i)}$, mapping $k$ to the $k$-th input to the layer, to be periodic with period a divisor of $2^{i-1}$ for $k$ large enough. Hence, there will be $2^{i-1}$ subsequences of states, each containing powers of the product of $2^{i-1}$ state-transition matrices. From our hypothesis on the eigenvalues of products of state-transition matrices, such product will have only real eigenvalues and hence each subsequence will have 2 converging subsequences resulting in  $k \mapsto \mH_k^{(i)}$ and consequently $k \mapsto \hat \vy_k^{(i)}$ and hence $k \mapsto \vx_k^{(i+1)}$, for $k$ large enough, being periodic with period a divisor of $2^{i}$.
Therefore, for the $L$-th layer, there exists $\bar{k}_L \geq 0$ such that for every $k \geq \bar{k}_L$, 
the function $k \mapsto \hat \vy^{(L)}_k$ is periodic with a period which is a divisor of $2^L$ and thus it can recognize the language $(1^m)^*$ only when $2^L \mod m = 0$, which happens only when there exists $p \leq L$ such that $m = 2^p$ and hence $m$ is a power of two, ending the proof. \qed

\section{Products of Generalized Householder Matrices -- Proofs}\label{app:ghprod}
We provide proofs for the results stated in  \Cref{sec:ghprod}. Before that, we illustrate how a linear RNN with one layer and state transition matrices that are products of 2 Householder matrices can count modulo $m$.

\subsection{Products of Two Householders and Modular Counting}\label{app:twohousemodcount}
Counting modulo $m$ can be achieved by rotating a vector in $\R^2$ by an angle of $2\pi/m$ radians, and we can express a rotation matrix as a product of two reflection matrices, which are GH matrices with eigenvalues in $\{-1,1\}$ (see \Cref{app:twohousemodcount}).
Inded, for any $m \in \N$ there exist unit norm vectors $\vv_1, \vv_2 \in \R^2$ such that
\begin{equation*}
    \mR(\theta) := \Big[\begin{array}{cc}
\cos \theta & -\sin \theta \\
\sin \theta & \cos \theta
\end{array}\Big] = \left(\mI - 2\vv_1\vv_1^\top\right)\left(\mI - 2\vv_2\vv_2^\top\right), \quad \theta = \frac{2\pi}{m}.
\end{equation*}
If we set the state-transition matrix in (\ref{eq:linrnn}) to $\mA(1) = \mR(\theta)$, an LRNN with one layer can count modulo $m$, since if we also set  $\mH_0 = (1,0)^\top$
and $\mathrm{dec}(\mH, x) = \arg \max_{i} \mD_i^\top \mH $, with $\mD_i = \mR(i\theta)\mH_0$ for all $i \in \{0,\dots, m-1\}$, then for the input $\vx = 1^t$ and since $\mR$ has period $2\pi$, we get
\begin{equation*}
    \hat y_t  = \mathrm{dec}(\mH_t, 1) = \mathrm{dec}(\mA(1)^t \mH_0, 1) = \mathrm{dec} (\mR(t \theta) \mH_0, 1) = t \bmod m.
\end{equation*}

\subsection{Proof of \Cref{th:ghexpress}}\label{proof:ghexpress}

\textbf{First item} It can be shown by noting that if $\mC \in \mathcal{M}_1^n([-1,1])$, then $\lVert \mC \rVert \leq 1$ and using the sub-multiplicative property of the Euclidean norm, i.e the fact that $\lVert \mA \mB \rVert \leq \lVert \mA \rVert \lVert \mB \rVert$.

\textbf{Second item} Note that any real matrix has a singular value decomposition. Hence we can write
\begin{equation*}
    \mM = \mU \mS \mV^\top 
\end{equation*}
with $\mU, \mV \in \R^{n \times n}$ orthogonal and $\mS = \text{Diag}(\sigma_1, \dots, \sigma_n)$ with $\sigma_i \in [0,1]$, since $\lVert \mM\rVert \leq 1$. It follows from the $n$-reflections theorem\footnote{This is a specialization of the Cartan–Dieudonn\'e Theorem to $\R^n$, see Theorem 3 in \url{https://faculty.uml.edu/dklain/orthogonal.pdf} for a proof.} that we can write $\mU$ and $\mV$ as either the identity $I \in \mathcal{M}_1^n(\{1\})$ or the product of at most $n$ reflections, each of which is in $\mathcal{M}_1^n(\{-1\})$. Hence $\mU, \mV \in \mathcal{M}_n^n(\{-1, 1\})$. We can also write the matrix $\mS$ as the product of $n$ GH matrices as
\begin{equation*}
    \mS = \mS_1 \mS_2 \dots \mS_n, \quad \mS_i = \mI - (1- \sigma_i)\ve_i \ve_i^\top
\end{equation*}
where $\ve_i$ is the $i$-th element of the canonical basis of $\R^n$. Hence, $\mS \in \mathcal{M}_{n}^n([0,1])$. The proof of the first part is concluded since we wrote each of $\mU, \mS, \mV$ as a product of at most $n$ GH matrices. If $\mM$ is orthogonal, we apply the $n$-reflections theorem directly. We also note that if $\mM = \mP \in \{0,1\}^{n\times n}$ with $\mP$ being a permutation matrix different from the identity, it can be written as products of at most $n-1$ \textit{swaps}, i.e. permutation matrices permuting only two elements. Therefore we have that there exists an integer $k \leq n-1$ and indices $i_{1}, \dots, i_{k}$ and $j_{1}, \dots, j_{k}$ such that $i_l \neq j_l$ and
\begin{equation*}
   \mP = \prod_{l=1}^{k-1} \mP_{i_l j_l}, \quad, \mP_{ij} = (I - 2\vv_{ij} \vv_{ij}^\top) \qquad v_{ijl} = \begin{cases}
       1/\sqrt{2} \quad \text{ if } l = i \\
       -1/\sqrt{2} \quad \text{ if } l = j \\
       0 \quad \text{ otherwise} 
   \end{cases},  
\end{equation*} 
where we set $\vv_{ij} = (v_{ij1}, \dots, v_{ijn})$. Note that since $\norm{\vv_{ij}} = 1$, $\mP_{ij} \in \mathcal{M}_k^n(\{-1\})$ with $k \leq n$. For the the case where $\mM = \mI$ we can use the fact that $\mI \in \mathcal{M}_1^n(\{1\})$. 

\textbf{Third item}  Let $\mN = \mC_1\mC_2 \cdots \mC_k\in \mathcal{M}_{k}^n((-1, 1])$, with $\mC_i = \mI - \beta_i \vz_i\vz_i^\top$ with $\norm{\vz_i} = 1$ and $\beta_i \in [0,2)$. 
If $\mN = \mI$ the statement is satisfied, otherwise, let   $\mathcal{V} = \mathrm{span}\{\vz_i\,:\, i \in \{1,\dots, k\}, \beta_i > 0\}$.
Any unit vector $\vv \in\R^n$ can then be written as $\vv = \vv_1 + \vv_2$ with $\vv_1 \in \mathcal{V}$, $\vv_2 \in \mathcal{V}^\top$ and $\norm{\vv_1},\norm{\vv_2} \leq 1$.
Now, if $\vv_1 = 0$, then $\mN \vv = \vv$, and hence $\vv$ is an eigenvector with eigenvalue $1$. Instead, if $\vv_1 \neq 0$, then there exists $i' \in \{1,\dots,k\}$ (we take the largest one) such that $\beta_{i'} \in (0, 2)$ and $ (\vv^\top \vz_{i'})^2 = (\vv_1^\top\vz_{i'})^2 \in (0, 1]$. Therefore, if $i' < k$, then either $\beta_{j} =0$ or $\vz_{j}^\top \vv  =0$ so that $\mC_j \vv =\vv$ for all $j \in \{i'+1, \dots, k\}$. Moreover, we have that
\begin{equation*}
    \lVert\mC_{i'} \vv\rVert^2 = \lVert\vv - \beta_{i'} \vz_{i'}\vz_{i'}^\top \vv\rVert^2 = 1 - \beta_{i'}(2-\beta_{i'})(\vv^\top\vz_{i'})^2 < 1,
\end{equation*}
where the last line comes from the fact that $\min_{x \in [0, 2]} x(2-x) = 0$ and is only reached at $x=0$ and $x=2$, while $\beta_{i'} \in (0,2)$.
Therefore, since for every $i$, $\norm{\mC_i} \leq 1$ and the Euclidean norm is sub-multiplicative we have
\begin{equation*}
    \norm{\mN \vv} = \norm{\mC_1\mC_2\dots\mC_k \vv} = \norm{\mC_1\mC_2\dots\mC_{i'} \vv}   \leq \norm{\mC_1}\cdots\norm{\mC_{i'} \vv} < 1.
\end{equation*}
Therefore, if $\vv$ is also an eigenvector with eigenvalue $\lambda \in \C$, then $\norm{\mN \vv} = |\lambda| < 1$.
Hence, we proved that for every eigenvector with eigenvalue $\lambda$ either $\lambda=1$ or $|\lambda| < 1$.

It remains to show that all eigenvalues of $\mN \in \mathcal{M}_{2}^n([0,1])$ are in $[0,1]$. From the assumptions $\mN = \mC_1 \mC_2$ with $\mC_1,\mC_2$ symmetric and positive semi-definite, therefore $\mC_1$ has a unique symmetric and positive semi-definite square root $\mC_1^{1/2}$ such that $\mC_1^{1/2}\mC_1^{1/2} = \mC_1$. If $\mC_1$ is non-singular (invertible) then
\begin{equation*}
    \mC_1\mC_2 = \mC_1^{1/2}\mC_1^{1/2} \mC_2\mC_1^{1/2}\mC_1^{-1/2}.
\end{equation*}
Thus, $\mC_1\mC_2$ is similar to $\mC_1^{1/2} \mC_2\mC_1^{1/2}$ and shares its eigenvalues. Moreover $\mC_1^{1/2} \mC_2\mC_1^{1/2}$ is symmetric positive  semi-definite (having real nonnegative eigenvalues) because $\mC_1^{1/2}$ and $\mC_2$ are symmetric and $\vv^\top \mC_1^{1/2} \mC_2\mC_1^{1/2} \vv = \vz^\top \mC_2 \vz \geq 0$ with $\vz = \mC_1^{1/2} \vv$ since $\mC_2$ is positive semi-definite. Instead, if $\mC_1$ is singular, for $t > 0$ the matrix $\mC_1 + t\mI$ is positive definite and non-singular. Hence $(\mC_1 + t\mI)\mC_2$ has real and nonnegative eigenvalues. Since $\mC_1\mC_2 = \lim_{t\to 0} (\mC_1 + t\mI)(\mC_2)$ and the eigenvalues are a continuous function of the entry of the matrix, $\mC_1 \mC_2$ has positive real eigenvalues. Since the modulus of any eigenvalue is smaller or equal than the euclidean norm of the matrix, which is smaller than one from the first point of the theorem, the statement follows.
\qed

\subsection{Proof of \Cref{thm:groups}}\label{proof:groups}
We first recall the notion of group isomorphism.
Two groups $(G, *)$ and $(H, \cdot)$ where $G,H$ are the sets and $\star$ and $\cdot$ are the associative operations, are isomorphic, if there exists a bijective map $f: G \to H$ such that for every $g \in G$, $h \in H$
\begin{equation*}
    f(g * h) = f(g)\cdot f(h).
\end{equation*}

We view the LRNN layer in (\ref{eq:linrnn}) as the automaton $\mathcal{A}_{\mathrm{lin}} = (\Sigma, \mathcal{H}, \mH_0, \delta_{\mathrm{lin}})$, where $\delta_{\mathrm{lin}}(\mH, w) = \mA(w)\mH + \mB(w)$, which is extended in the usual way, and $\mathcal{H} = \{\delta_{\mathrm{lin}}(\mH_0, \vw) \,:\, \vw \in \Sigma^*\}$.
Since we assumed that $\mathcal{T}(\mathcal{A})$ is a group, from Cayley’s theorem we have that it is isomorphic to a subgroup of $S_n$, which is the set of permutations on a set of $n$ elements. Furthermore, each element in $S_n$ can be represented as an $n \times n$ permutation matrix.
Since in general $n \neq |Q|$, we cannot let $\mathcal{H}$ to be a set of one hot vectors each corresponding to states in $Q$. Instead, we let $\mH_0 = (1,\dots, n)^\top$, $\mathcal{P} \subset \{0,1\}^{n \times n}$ be the set of permutation matrices and set $\mB\equiv 0$ and $\mA: \Sigma \to \mathcal{P}$ to be the function mapping each letter $w \in \Sigma$ to the permutation matrix corresponding to $\delta(\cdot, w)$. With this choice we can see that the function $f: \mathcal{T}(\mathcal{A}_{\mathrm{lin}}) \to \mathcal{T}(\mathcal{A})$ such that $f(\delta_{\mathrm{lin}}(\cdot, \vw)) = \delta(\cdot, \vw)$ for every $\vw \in \Sigma^*$ is one-to-one (bijective), and from our choice of $\mH_0$, the map $h: \mathcal{T}(\mathcal{A}_{\mathrm{lin}}) \to \mathcal{H}$ such that for every $\vw \in \Sigma^*$, $h(\delta_{\mathrm{lin}}(\cdot, \vw)) = \delta_{\mathrm{lin}}(\mH_0, \vw)$ is also bijective. Moreover, the map $\phi: \mathcal{T}(\mathcal{A}) \to Q$ such that $\phi(\delta(\cdot, \vw)) = \delta(q_0, \vw)$ is surjective because without loss of generality we can consider states that are only reachable from the initial state $q_0$, i.e.\@ $Q = \{\delta(q_0, \vw)\,:\, \vw\in \Sigma^*\}$. Hence if we set $g = \phi \circ  f \circ h^{-1}
$, then $g: \mathcal{H} \to Q$ is surjective and for every $w \in \Sigma$ and $\mH \in \mathcal{H}$ we have that
\begin{equation*}
    g(\delta_{\mathrm{lin}}(\mH, w)) = \delta(g(\mH), w)
\end{equation*}

Thus, we have shown that such an LRNN implements $\mathcal{A}$ and it does so with finite precision because the entries of all vectors and matrices are bounded integers.
Moreover, Let $k  = \max_{w \in \Sigma} \sum_{q\in Q} \indic{\delta(q, w) \neq q} = \max_{w \in \Sigma} \sum_{i=1}^n \indic{(\mA(w)\mH_0)_i = \mH_{0,i}}$ be the maximum number of displaced element of the permutation associated with the alphabet $\Sigma$.
Then, this means that each permutation can be written as a product of at most $k-1$ permutations of two elements. Hence, for every $w \in \Sigma$, $\mA(w) \in \mathcal{M}_{k-1}^n(\{-1,1\})$.

If in addition there exists $m \in \N$ such that $\mathcal{T}(\mathcal{A})$ is isomorphic to a subgroup of the cyclic group $\mathbb{Z}_m$ with elements $\{0,\dots, m-1\}$, we can modify the construction above to use a smaller dimension.
If $m=2$, then $\mathbb{Z}_2$ has elements $\{0,1\}$, and $\mathcal{A}$ implements the parity automaton. Thus, we can set $\mH_0 = -1$, $\mA(0) = 1$, $\mA(1) = -1$ and $g(1) = 1$ while $g(-1) = 0$, which means that we can use a scalar recursion. 
Otherwise, if $m \geq 3$, we can modify the construction above by setting $\mH_0 = (1, 0)^\top$ and, if for simplicity we assume $\Sigma \in \{0,\dots, m-1\}$, for every $w \in \Sigma$  we let $\mA(w)$ be the $2 \times 2$ rotation matrix corresponding to $\delta(\cdot, w)$:
\begin{equation*}
\mA(w)= \mR(\theta_w) = \left[\begin{array}{cc}
\cos \theta_w & -\sin \theta_w \\
\sin \theta_w & \cos \theta_w
\end{array}\right], \quad \theta_w = \frac{2\pi w}{m},
\end{equation*}
such that $\mR(\theta_w) \in \mathcal{M}_2^2(\{-1\})$ (from \Cref{th:ghexpress}). This concludes the proof. \qed

\subsection{Krohn-Rhodes Theorem}
Before presenting the proof for \Cref{thm:regular}, we provide the statement for the landmark result of Krohn-Rhodes \citep{krohn1965algebraic}, after giving the definition of the cascade product of two FSA.

\begin{definition}[Cascade product] Given two FSA $\mathcal{A} = (\Sigma, Q, q_0, \delta)$ and $\mathcal{B} = (Q \times \Sigma, Q', q'_0, \delta')$, we define the cascade product FSA as $\mathcal{C} = \mathcal{B} \circ \mathcal{A} = (\Sigma, Q \times Q', (q_0, q_0'), \delta'')$ where for any $w \in \Sigma$
\begin{equation*}
    \delta''((q, q'), w) : = (\delta(q, w), \delta(q', (q,w))) 
\end{equation*}
\end{definition}

\begin{theorem}[Krohn-Rhodes, Theorem 4 in \citet{maler1994cascaded}]\label{thm:KR}
For every FSA $\mathcal{A} = (\Sigma, Q, q_0, \delta)$ there exists $s \leq 2^{|Q|}$ and a \textit{cascade product} FSA $\mathcal{C} = \mathcal{A}^{(s)} \circ \cdots \circ \mathcal{A}^{(1)}  = (\Sigma, Q^\times, q_0^\times, \delta^\times)$, with $\mathcal{A}^{(i)}=\big(\Sigma^{(i)}, Q^{(i)}, q_0^{(i)},  \delta^{(i)}\big)$, with $|Q^{(i)}| \leq |Q|$, 
and a function $\mathcal{W}: Q^{\times} \rightarrow Q$ such that for any $\vw \in \Sigma^*$, $\delta(q_0, \vw) = \mathcal{W} (\delta^\times(q_0^{\times}, \vw))$ and each $\mathcal{A}^{(i)}$ is \textit{ permutation-reset} automaton, which means that for every $w^{(i)} \in \Sigma^{(i)}$, $\delta^{(i)}(\cdot, w^{(i)})$ is either a bijection (i.e. a permutation over $Q$) or constant, ie. $\delta(\cdot,w^{(i)}) = q(w^{(i)}) \in Q^{(i)}$.  %
\end{theorem}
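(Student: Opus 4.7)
The plan is to prove the theorem via the holonomy decomposition of the automaton $\mathcal{A}$, following the treatment of \cite{maler1994cascaded}. The central object is the action of $\Sigma^*$ on the power set $2^Q$, defined by $S \cdot w := \{\delta(q, w) : q \in S\}$ for $S \subseteq Q$ and $w \in \Sigma^*$. The key observation is that every word $w$ either preserves the cardinality of a subset $S$ (inducing a bijection on $S$) or strictly decreases it; this dichotomy is precisely what will yield the permutation-reset property of each cascade component.

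First I would arrange the non-empty subsets of $Q$ into levels indexed by cardinality, from $|Q|$ down to $1$, and within each level group them into orbits under the action of $\Sigma^*$. For each orbit $\mathcal{O}$ I would define a component automaton $\mathcal{A}^{(i)}$ whose state space $Q^{(i)}$ encodes a chosen ``representative'' member of $\mathcal{O}$ together with the bijection (relative to a fixed labelling) that the input history has induced on it; since representatives and their labellings can always be chosen so that $|Q^{(i)}| \leq |Q|$, this gives the claimed bound on each component's state space. The alphabet of $\mathcal{A}^{(i)}$ is enlarged to $\Sigma^{(i)} = Q^\times_{<i} \times \Sigma$ as required by the cascade definition, so that each layer sees the state determined by all earlier layers together with the incoming letter. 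The transition $\delta^{(i)}$ on a symbol $(q^{<i}, w)$ is then either a bijection on $Q^{(i)}$ (when $w$ preserves the current subset) or the constant map to a distinguished ``reset'' state (when $w$ shrinks it, in which case the information needed to recover the actual image of the subset will be stored at a higher level handling smaller subsets).

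The cascade is then built top-down, starting from the single orbit $\{Q\}$ and refining at each step by descending one cardinality level. The output function $\mathcal{W}: Q^\times \to Q$ reads the chain of refinements until a singleton subset is reached and returns its unique element; correctness, namely $\mathcal{W}(\delta^\times(q_0^\times, \vw)) = \delta(q_0, \vw)$ for all $\vw \in \Sigma^*$, follows by induction on $|\vw|$ using the invariant that after reading $\vw$ layer $i$ faithfully encodes the image $S_i \cdot \vw$ of the subset $S_i$ it is responsible for. The total number of layers is bounded by the number of non-empty subsets of $Q$, giving $s \leq 2^{|Q|} - 1 \leq 2^{|Q|}$.

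The hard part will be twofold. First, showing that each $\mathcal{A}^{(i)}$ is genuinely permutation-reset requires a careful choice of representatives inside each orbit so that the transformations induced by letters on the internal labelling split cleanly into group actions (on the preserved subsets) and constant maps (on the shrinking ones); the bookkeeping needed to keep $|Q^{(i)}| \leq |Q|$ forces the labelling to be internal to each orbit rather than global. Second, the inductive verification that the cascade mimics $\delta$ must handle the interaction between a letter's effect on a large subset and its simultaneous effect on smaller ones, which is where the cascade product's dependence on $Q^\times_{<i}$ becomes essential: only with access to the higher layers can a shrinking transition at level $i$ look up the image it must collapse to. Once these two ingredients are in place, the statement follows by combining the layers into the cascade product $\mathcal{A}^{(s)} \circ \cdots \circ \mathcal{A}^{(1)}$ and defining $\mathcal{W}$ as above.
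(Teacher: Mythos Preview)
The paper does not contain a proof of this statement. The Krohn--Rhodes theorem is stated in the appendix purely as a cited result (Theorem~4 in \citet{maler1994cascaded}) and is then invoked as a black box in the proof of \Cref{thm:regular}. The surrounding text makes this explicit: ``Before presenting the proof for \Cref{thm:regular}, we provide the statement for the landmark result of Krohn--Rhodes.'' No proof, sketch, or argument for the decomposition itself is given anywhere in the paper.

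Consequently there is nothing to compare your proposal against. Your outline follows the holonomy decomposition route, which is indeed the approach of the cited Maler reference, and at the level of a plan it is reasonable: stratify $2^Q$ by cardinality, use the bijection/shrink dichotomy of the subset action to obtain permutation-reset components, and read off the original state via the chain of refinements. If your goal is to supply a proof the paper omits, you are pointed in the right direction, though a full verification would require pinning down the representative and labelling choices precisely enough to certify both the $|Q^{(i)}| \le |Q|$ bound and the permutation-reset property simultaneously. But as far as matching the paper is concerned, you are attempting to prove something the authors deliberately quote rather than prove.
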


\subsection{Proof of \Cref{thm:regular}}\label{proof:regular}

We apply the Krohn-Rhodes theorem (\Cref{thm:KR}) to write $\mathcal{A}$ as the cascade product FSA $ \mathcal{C} = \mathcal{A}^{(s)} \circ \cdots \circ \mathcal{A}^{(1)}$ with each FSA $\mathcal{A}^{(i)} = \big(\Sigma^{(i)}, Q^{(i)}, q_0^{(i)},  \delta^{(i)}\big)$ being permutation-reset and we show how the LRNN can implement $\mathcal{C}$ by first showing how its $i$-th layer, with the structure in (\ref{eq:linrnn}), can implement $\mathcal{A}^{(i)}$.

Let $n = |Q^{(i)}|$ and without loss of generality assume that $\Sigma = \{1,2,\dots, |\Sigma|\}$ and  $Q^{(i)} = \{1,2,\dots, n\}$ with $q_0^{(i)} = 1$.  
 For every $w \in \Sigma^{(i)}$
 we set $\mA^{(i)}(w) \in \{0,1\}^{n \times n}$, $\mB^{(i)}(w) \in \{0,1\}^{n}$ such that for every $q,q'\in Q^{(i)}$
\begin{equation*}   
\begin{aligned}
    \mA^{(i)}(w)_{q',q}  &= \indic{\delta(q, w)= q'}, && \mB^{(i)}(w)_{q'} = 0,  \quad  &&\text{ if } \delta^{(i)}(\cdot, w) \text{ is bijective, or} \\
    \mA^{(i)}(w)_{q',q}  &= 0, && \mB^{(i)}(w)_{q'} = \indic{q' = q(w)},  &&\text{ if } \delta^{(i)}(\cdot, w) \equiv q(w).
\end{aligned}
\end{equation*}

Then, for every word $\vw^{(i)} = w_1^{(i)}\dots w_t^{(i)} \in \Sigma^{(i)*}$, we set $g: \R^n \to \R$, such that $g(x) = (1,\dots,n)^\top x$ and 
\begin{align*}
     \mH^{(i)}_t &= \mA^{(i)}(w^{(i)}_t) \mH^{(i)}_{t-1} + \mB^{(i)}(w^{(i)}_t), \qquad  \mH^{(i)}_0 = (1,0\dots,0)^\top \in \R^n \\  
     y^{(i)} &= \mathrm{dec}^{(i)}(\mH_t^{(i)}, w_t^{(i)})   = (g(\mH_t^{(i)}), w_t^{(i)}) = (\delta^{(i)}(q_0^{(i)}, \vw^{(i)}), w^{(i)})
\end{align*}
So that such construction implements $\mathcal{A}^{(i)}$. In addition, by letting $\vw = w_1\dots w_t \in \Sigma^*$ be the input to the LRNN, i.e. $w^{(1)}_j = w_j$, and setting the output of each layer as the input to the next, i.e. $w_j^{(i)} = y_j^{(i-1)}$ for $i \geq 2$, for the output of the last layer we get
\begin{align*}
y^{(s)}_t &= \mathrm{dec}^{(s)}(\mH_t, w_t^{(s)}) \\
&= (\delta^{(s)}(q_0^{(s)}, \vw^{(s)}),y_t^{(s-1)}) \\
&= (\delta^{(s)}(q_0^{(s)}, \vw^{(s)}), \delta^{(s-1)}(q_0^{(s-1)}, \vw^{(s-1)}), y_t^{(s-2)}) \\   
&= (\delta^{(s)}(q_0^{(s)}, \vw^{(s)}),\dots, \delta^{(1)}(q_0^{(1)}, \vw), w_t) \in \N^{s+1},
\end{align*}
where we removed the nested parenthesis for simplicity.
Hence, the first $s$ elements of $y_t^{(s)}$ are exactly the output of the cascade FSA $\mathcal{C}$. Note that our construction can be implemented in finite precision since we only used matrices/vectors with entries either in $\{0,1\}$, requiring only one bit, or in $Q^{(i)} \subset \N$, that can also be implemented using finite precision with $|Q^{(i)}|$ integers, requiring $\log_2(|Q^{(i)}|)$ bits. Also note that we can exclude $w_t$ from the output $y_t^{(s)}$ by changing $ \mathrm{dec}^{(s)}$,  to bring the dimension of the output, end hence the width of the LRNN, to $\N^{s}$.

It is also the case that $\norm{\mA^{(i)}(w)} \leq 1$ for every $w \in \Sigma^{(i)}$ since $\mA^{(i)}(w)$ is either a permutation matrix ($\norm{\mA^{(i)}(w)} = 1$ ) or the zero matrix ($\norm{\mA^{(i)}(w)} = 0$). 
Also, for every permutation matrix $\mP \in \{0,1\}^{n\times n}$ which permutes only $k \leq n$ elements we have that $\mP \in \mathcal{M}_{k-1}^n(\{-1,1\})$. 

Furthermore, for the zero matrix, we have
\begin{equation*}
    0 = \prod_{i=1}^n (I -\ve_i\ve_i^\top) \in \mathcal{M}_n^n(\{0\})
\end{equation*}

It follows that $\mathcal{A}^{(i)}(w) \in \mathcal{M}_n^n([-1,1])$ for every  $i \in \{1,\dots,s\}$ and $w \in \Sigma^{(i)}$.
\qed %
\section{LRNNs Can Do Modular Addition Using Only Reflections}\label{sec:modreflections}
In this section, we explain how an LRNN with two layers and using only Householder state transition matrices (reflections) can compute addition modulo $m \in \N$, i.e it can map words $x_1, \dots, x_t$ with $x_i \in \{0,\dots,m-1\}$ into $y_t = (\sum_{i=1}^{m} x_i) \bmod m$ for arbitrary $t \in \N$. This corresponds to solving the group word problem associated with the cyclic group $\mathbb{Z}_m$. 
We note that our modification of DeltaNet, namely DeltaNet $[-1,1]$ can therefore solve addition modulo $m$ with 2 layers.

If the state transition matrices can be generic rotation matrices, then an LRNN can perform addition modulo $m$ using just one layer by mapping each element of $\mathbb{Z}_m$ to the corresponding $2 \times 2$ rotation matrix as shown in \Cref{proof:groups}. Such construction requires a number of states for the LRNN equal to $m$, i.e. the number of elements of the group $\mathbb{Z}_m$. However, since here we assume that state transition matrices are reflections,  we cannot map each element of the group to a rotation (since those are a product of 2 reflections) and our construction for the LRNN will require two layers. Specifically, the first layer will count modulo $2$, i.e. it will output the sequence $\vy^{(1)}_1, \dots, \vy^{(1)}_t$ where $\vy^{(1)}_i = (x_i, i \bmod 2)$, while the second layer will have $2m$ states and will use two different reflection matrices for each group element, depending on the value of $y^{(1)}_{i,2} = i \bmod 2$. Formally, we have the following result.

\begin{theorem}[Modular addition with reflections] An LRNN with two layers in the form (\ref{eq:linrnn}), where $\mA: \N \to \{-1\}$ for the first layer and $\mA: \R^2 \to \mathcal{M}_1^2(\{-1\})$ for the second layer, with $\mathcal{M}_1^2$ defined in (\ref{eq:GHset}), can perform addition modulo $m$ for any $m \in \N$. In particular, the LRNN will have 2 scalar states in the first layer and $2m$ states, each being a vector in $\R^2$, in the second layer. 
\end{theorem}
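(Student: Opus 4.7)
The approach is to split the task across the two layers: the first layer computes the parity of the token index, while the second layer uses this parity to select between two families of reflections whose pairwise products realize the rotations that solve the $\mathbb{Z}_m$ word problem, in the spirit of the one-layer rotation construction recalled in \Cref{app:twohousemodcount}.

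\textbf{First layer.} I would take scalar states with $\mA^{(1)}(x) = -1$, $\mB^{(1)}(x) = 1$ for every input, and $h^{(1)}_0 = 0$. Unrolling the recurrence gives $h^{(1)}_t = t \bmod 2$, so the layer has two reachable scalar states. The decoder $\mathrm{dec}^{(1)}$ forwards the pair $(x_t, h^{(1)}_t) \in \R^2$ as the input $w^{(2)}_t$ to the second layer, so the second layer has access to both the symbol and its position parity.

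\textbf{Second layer.} Let $\theta = 2\pi/m$ and $\vu_\phi = (\cos\phi, \sin\phi)^\top$. I would set $\mB^{(2)} \equiv 0$, $\mH^{(2)}_0 = \ve_1 = \vu_0$, and
\begin{equation*}
\mA^{(2)}((x, 1)) = \mI - 2\vu_{-x\theta/2}\vu_{-x\theta/2}^{\top}, \qquad \mA^{(2)}((x, 0)) = \mI - 2\vu_{x\theta/2}\vu_{x\theta/2}^{\top}.
\end{equation*}
Each of these is a Householder reflection with $\beta = 2$, hence lies in $\mathcal{M}_1^2(\{-1\})$. The key elementary identity is $(\mI - 2\vu_\alpha\vu_\alpha^\top)\vu_\beta = \vu_{2\alpha - \beta + \pi}$, which one verifies by expanding with the standard product-to-sum formulas for $\cos$ and $\sin$. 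An induction on $t$ using this identity then yields $\mH^{(2)}_{2n} = \vu_{(\sum_{i \leq 2n} x_i)\theta}$ at even steps and $\mH^{(2)}_{2n+1} = \vu_{-(\sum_{i \leq 2n+1} x_i)\theta + \pi}$ at odd steps, so the reachable state set has cardinality at most $2m$. Because $p_t$ is available to $\mathrm{dec}^{(2)}$ through $w^{(2)}_t$, the decoder knows which case it is in and can read off $s_t := \sum_{i \leq t} x_i \bmod m$ by matching $\mH^{(2)}_t$ against the finite reference set $\{\vu_{k\theta}\}_{k=0}^{m-1}$ when $p_t = 0$ or $\{\vu_{-k\theta + \pi}\}_{k=0}^{m-1}$ when $p_t = 1$.

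\textbf{Main obstacle.} The substantive step is verifying that the signs chosen for the two branches of $\mA^{(2)}$ cause each consecutive pair $\mathrm{Ref}(\vu_{x_{2n}\theta/2})\mathrm{Ref}(\vu_{-x_{2n-1}\theta/2})$ to simplify to the rotation $\mR((x_{2n-1}+x_{2n})\theta)$, since $\mathrm{Ref}(\vu_\alpha)\mathrm{Ref}(\vu_\gamma) = \mR(2(\alpha-\gamma))$; then the partial sum modulo $m$ is preserved after each such pair, and the odd-$t$ formula follows by applying one further reflection. The odd-$t$ case is precisely what forces the two-layer structure: the product of an odd number of reflections is itself a reflection rather than a rotation, so without the parity signal from the first layer the decoder could not uniformly recover $s_t$. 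The remaining checks -- Householder form of each $\mA^{(2)}$, scalarness and two-state-ness of the first layer, and the $2m$ bound on reachable states -- are immediate.
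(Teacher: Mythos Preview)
Your proposal is correct and follows essentially the same approach as the paper: the first layer computes the position parity via the scalar recurrence $h_t = -h_{t-1}+1$, and the second layer applies parity-dependent Householder reflections so that each consecutive pair composes to the desired rotation, yielding two $m$-element orbits of states in $\R^2$ from which the decoder extracts the running sum modulo $m$. The only cosmetic differences are the specific reflection angles (the paper offsets its two orbits by $\pi/m$ so they are disjoint and uses a single $\argmax$ decoder over both, whereas you let the decoder branch on the parity bit and so do not need disjointness) and the algebraic bookkeeping (the paper works with the $\mH(\theta)$ parametrization and the identities $\mH(\alpha)\mH(\gamma)=\mR(\alpha-\gamma)$, while you use the equivalent Householder-vector identity $(\mI-2\vu_\alpha\vu_\alpha^\top)\vu_\beta=\vu_{2\alpha-\beta+\pi}$).
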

\begin{proof}
The first layer of the LRNN will implement counting modulo $2$ as follows.
\begin{gather*}
h^{(1)}_{0} = 0, \quad  h_t^{(1)} = - h^{(1)}_{t-1} + 1, \quad \vy^{(1)}_t = \mathrm{dec}^{(1)}(h_t, x_t) = (x_t, h_t). \end{gather*}
We note that the state-transition matrix (the scalar $-1$) is a reflection since $\{-1\} = \mathcal{M}^1_1(\{-1\})$.
For the second layer, we have instead
\begin{gather*}
    \vh^{(2)}_{0} = (1,0)^\top, \quad  \vh^{(2)}_t = \mA^{(2)}(\vy^{(1)}_t) \vh^{(2)}_{t-1}, \quad \vy^{(2)}_t = \mathrm{dec}^{(2)}(\vh_t^{(2)}, \vy^{(1)}_t)  \\
    \mA^{(2)}(\vy) = \mH(\theta(y_1, y_2)) = \left[\begin{array}{cc}
\cos \theta(y_1, y_2) &  \sin \theta(y_1, y_2) \\
\sin \theta(y_1, y_2) & -\cos \theta(y_1, y_2)
\end{array}\right] \\
    \mathrm{dec}^{(2)}(\vh, \vy) = \argmax_{i \in \{0, \dots, m-1\}} \max (\vc_i^\top \vh, \vd_i^\top \vh) \quad 
\end{gather*}
where $\vy = (y_1, y_2)^\top \in \{0,\dots, m-1\} \times \{0,1\}$,  $\mH(\alpha)$ is the $2 \times 2$ reflection matrix that reflects all vectors by a line having an angle of $\alpha/2$ with the line passing from the origin and the vector $(1,0)^\top$ and $\theta:  \{0,\dots, m-1\} \times \{0,1\} \to \R$ determines the angle of the reflection and is defined as
\begin{align*}
    \theta(i, 1) = \frac{(1 - 2i)\pi}{m}, \quad
    \theta(i, 0) = \frac{(1+2i) \pi}{m}, \quad \text{ for all } i \in \{0,\dots, m-1\}. 
\end{align*} 
Moreover $\mathcal{C} = \{\vc_0,\dots, \vc_{m-1}\}$ and $\mathcal{D} = \{\vd_0,\dots, \vd_{m-1}\}$ are the two sets of states corresponding to reflections and rotations respectively and are defined as
\begin{gather*}
\vd_0 = \vh_0^{(2)} =  (1,0)^\top, \quad \vc_0 = \mH(\pi/m)\vd_0, \\ \vd_i = \mR(2i\pi/m) \vd_0, \quad \vc_i = \mR(-2i\pi/m)\vc_0 \quad    \text{for all } i \in \{0,\dots,  m-1\},
\end{gather*}
where $\mR(\beta)$ is a rotation matrix with angle $\beta \in \R$.  

Let $\alpha,\gamma \in \R$, the following are standard identities of products of 2D rotations and reflections.
\begin{align*}
\begin{aligned}
\mR(\alpha) \mR(\gamma) & =\mR(\alpha + \gamma), \quad 
 &&\mH(\alpha)\mH(\gamma)  =\mR(\alpha- \gamma), \\
\mR(\alpha) \mH(\gamma) & =\mH\left(\alpha+ \gamma\right) 
&&\mH(\gamma) \mR(\alpha)  =\mH\left(\gamma-\alpha\right).
\end{aligned}
\end{align*}
From our choice of $\theta$, $\vd_i$ and $\vc_i$, using the identities above and the the fact that $\mR$ is a periodic function with period $2\pi$ we have that
\begin{equation}\label{eq:dtoc}
\begin{aligned}
     \mH(\theta(j,1))\vd_i &= \mH(\theta(j,1))\mR(2i\pi/m)\vd_0 \\
     &= \mH(\theta(j,1))\mR(2i\pi/m)\mH(\pi/m)\vc_0 \\
     &= \mH(\theta(j,1))\mH(\theta(i,0))\vc_0 \\
     &= \mR( \theta(j,1) - \theta(i,0))\vc_0 \\
     &= \mR(-2(i+j)\pi/m)\vc_0 = \vc_{i+j \bmod m},
\end{aligned}    
\end{equation}
 and similarly
\begin{equation}\label{eq:ctod}
\begin{aligned}
     \mH(\theta(j,0))\vc_i &= \mH(\theta(j,1))\mR(-2i\pi/m)\vc_0 \\
     &= \mH(\theta(j,0))\mR(-2i\pi/m)\mH(\pi/m)\vd_0 \\
      &= \mH(\theta(j,0))\mH(\theta(i,1))\vd_0 \\
     &= \mR( \theta(j,0) - \theta(i,1))\vd_0 \\
     &= \mR(2(i+j)\pi/m)\vd_0 = \vd_{i+j \bmod m},
\end{aligned} 
\end{equation}
for every $i,j \in \{0,\dots, m-1\}$.
We will now prove by induction that
\begin{align}\label{eq:hind}
    \vh_t^{(2)} &= \begin{cases}
         \vc_{y_t} \text{ if } t \bmod 2 = 1 \\
         \vd_{y_t} \text{ if } t \bmod 2 = 0
    \end{cases}. 
\end{align}
where we recall that $y_i := (\sum_{j=1}^i x_j) \bmod m$ and that, by definition, $\vh^{(2)}_0 =\vd_0$ and $\vh^{(2)}_i = \mH(\theta(x_i, i \bmod 2)) \vh^{(2)}_{i-1}$,  since $\vy^{(1)}_i = (x_i, i \bmod 2)$.
For the base case we have that 
\begin{align*}
    \vh_1^{(2)} &= \mH(\theta(x_1, 1)) \vh_0^{(2)} = \mH(\theta(x_1, 1))  \vd_0  = \vc_{x_1 \bmod m} =  \vc_{y_1}  \\
    \vh_2^{(2)}  &= \mH(\theta(x_2, 0)) \vh_1^{(2)} = \mH(\theta(x_2, 0)) \vc_{x_1 \bmod m} = \vd_{x_1 + x_2 \bmod m} = \vd_{y_2},
\end{align*}
where we have used (\ref{eq:dtoc}) and (\ref{eq:ctod}).
As induction hypothesis, suppose that for $i \geq 2$
\begin{align*}
    \vh_i^{(2)} &= \begin{cases}
         \vc_{y_i} \text{ if } i \bmod 2 = 1 \\
         \vd_{y_i} \text{ if } i \bmod 2 = 0
    \end{cases} 
\end{align*}
then, using again (\ref{eq:dtoc}) and (\ref{eq:ctod}), we obtain
\begin{equation*}
    \vh_{i+1}^{(2)} = \begin{cases}
        \mH(\theta(x_{i+1}, 1))  \vh_{i}^{(2)} = \mH(\theta(x_{i+1}, 1))  \vc_{y_i} = \vc_{x_{i+1} + y_i \bmod m} = \vc_{y_{i+1}}  \text{ if } i \bmod 2 = 1\\
        \mH(\theta(x_{i+1}, 0))  \vh_{i}^{(2)} = \mH(\theta(x_{i+1}, 0))  \vd_{s_i} = \vd_{x_{i+1} + y_i \bmod m} = \vd_{y_{i+1}} \text{ if } i \bmod 2 = 0
    \end{cases}.
\end{equation*}
which completes our proof by induction yielding (\ref{eq:hind}).
Finally, using the definition of $\mathrm{dec}^{(2)}$, (\ref{eq:hind}) and as long as $\vd_i \neq \vc_j$, $\vd_i \neq \vd_j$ and $\vc_i \neq \vc_j$ for every $i,j$ with $i \neq j$, which is guaranteed by our choice of $\theta$, we have that $\mathrm{dec}^{(2)}(\vh_t^{(2)}, \vy^{(1)}_t)  = (\sum_{j=1}^i x_j) \bmod m = y_t$, ending the proof.
\end{proof}

\section{Experiments}\label{app:exps}

\subsection{Chomsky Hierarchy}\label{app:chomsky-hierarchy}

Here, we provide details on the formal language tasks and experimental protocol of Section~\ref{subsec:chomsky-hierarchy}.

\subsubsection{Details on the experimental setup}

Like~\citet{poppel2024xlstm}, we trained each model with sequence lengths ranging from 3 to 40 and evaluated on lengths from 40 to 256, to understand the length generalization capabilities. We compared mLSTM and sLSTM with two models: Mamba~\citep{gu2023mamba} and DeltaNet~\citep{yang2024parallelizing}. Moreover, we also include a Transformer~\citep{vaswani_trans} baseline. For parity, all models contain 2 blocks (layers), with 4 heads for the xLSTM and DeltaNet models. We set the embedding and heads' dimensions to 128. For Mamba and DeltaNet, we also enable the 1-D depthwise-separable convolution layer with kernel size equal to 4 after the query/key/value projection. For modular arithmetic, we increase the number of layers to 3 and use a gradient clipping norm of 1.0 for Transformer, Mamba, and DeltaNet, while for mLSTM and sLSTM we decrease the embedding size and number of heads to 64 and 1, respectively, as well as use a standard initialization for the bias parameters.
We train each model using AdamW~\citep{Loshchilov2017DecoupledWD} without gradient clipping, using 3 different learning rates (1e-2, 1e-3, 5e-4 1e-4), with 3 different seeds each. We pick the best based on the median of the 3 seeds for every learning rate value. We use a batch size of 1024 (except for mLSTM, where we use 512 due to OOM error) and a cosine annealing learning rate schedule~\citep{loshchilov2022sgdr} (minimum learning rate: 1e-6) after 10\% warm-up steps. The weight decay is set to 0.1 during training. We train on every task for 100k steps in total. At each training step, we make sure to generate a valid random sample from the task at hand (see below).

\subsubsection{Details on the evaluated tasks}
In Section~\ref{subsec:chomsky-hierarchy} we conducted empirical evaluations on 3 tasks --parity, modular arithmetic without brackets and with brackets -- from various levels of the Chomsky Hierarchy, as proposed by~\citet{deletangneural} and similarly used in xLSTM~\citep{poppel2024xlstm}. Details for each task are given below, where $|\Sigma|$ is the vocabulary size and $Acc_{rand}$ is the accuracy of random guessing:
\begin{itemize}[leftmargin=*]
    \item \textbf{Parity ($|\Sigma| =2$, $Acc_{rand}=0.5$).} The parity $y_t \in \{0,1\}$ of a sequence of ones and zeros $\vx = x_1 \dots x_t \in \{0,1\}^t$ is equal to $1$ (resp. $0$) if the total number of ones in the sequence is odd (resp. even). 
It is equivalent to addition modulo 2, it can be computed by summing all previous values and then using the modulo 2 function as $y_t = (\sum_{i=1}^{t} x_i) \bmod 2$.
    \item \textbf{Modular Arithmetic w/o Brackets ($|\Sigma| = 10$, $Acc_{rand}=1/5$).} Given a set of special tokens $\Sigma_s = \{ \mathtt{+,-,*,=,[ PAD ]} \}$ and a modulus $m \geq 1$, we set $\Sigma = \Sigma_s \cup \{ 0,\dots ,m-1 \}$ and $y_t$ is equal to the result of the operations modulo $m$ in the sequence $\vx = \vx_1,\dots,\vx_t$ with $x_i \in \Sigma$.
    In our experiments $m=5$. An example sequence is as follows:
    \begin{align*}
        \mathtt{2 - 3 - 3 * 2 = \mathbin{\color{red}{3}}\ [PAD]}
    \end{align*}
    \item \textbf{Modular Arithmetic w/ Brackets, ($|\Sigma| = 12$, $Acc_{rand}=1/5$).} Same definition as the modular arithmetic without brackets with a set of special tokens $\Sigma_s = \{ \mathtt{+,-,*,=,),(,[ PAD ]} \}$. In our experiments $m=5$. An example sequence is as follows:
    \begin{align*}
        \mathtt{((((3+3)+-1)+-2)-((3-(-3))+((1)+4))) = \mathbin{\color{red}{2}}\ [PAD]}
    \end{align*}
    \end{itemize}

\begin{table}
\vspace{-8mm}
\caption{Performance comparison of various 
recurrent models on regular and context-free language tasks. 
recurrent models on formal language tasks. 
We report the median $\pm$ median absolute deviation (\emph{left table}) and best score (\emph{right table}) of 3 independent runs with different random seeds. Scores represent scaled accuracy, with 1.0 indicating perfect performance and 0.0 random guessing. The positive impact of allowing negative eigenvalues ($[-1, 1]$ range) versus restricting to positive eigenvalues ($[0, 1]$ range) is evident across different model architectures.}
\label{tab:chomsky-median}
\centering
\vspace{2mm}
\resizebox{0.8\linewidth}{!}{
\begin{tabular}{@{}lcccc@{}}
\toprule
                   & \textbf{Parity} & \begin{tabular}[c]{@{}l@{}}\textbf{Mod. Arithmetic}\\ \textbf{(w/o brackets)}\end{tabular} & \begin{tabular}[c]{@{}l@{}}\textbf{Mod. Arithmetic} \\ \textbf{(w/ brackets)}\end{tabular} & \begin{tabular}[c]{@{}c@{}} \textbf{Mod. Arithm.} \\ \textbf{(w/ brackets,} \\ \textbf{no mult)}\end{tabular} \\ \midrule
Transformer & 0.003 $\pm$ \tiny{0.013}     & 0.018 $\pm$ \tiny{0.009}                                                                      & 0.064 $\pm$ \tiny{0.003}  & 0.025 $\pm$ \tiny{0.000}                                                                     \\ \midrule
mLSTM & 0.018 $\pm$ \tiny{0.035}     & 0.027 $\pm$ \tiny{0.013}                                                                      & 0.114 $\pm$ \tiny{0.000}  &0.034 $\pm$ \tiny{0.001}                                                                     \\
sLSTM & 1.000 $\pm$ \tiny{0.000}   & 0.124 $\pm$ \tiny{0.000}                                                                  & 0.163 $\pm$ \tiny{0.015}  &0.153 $\pm$ \tiny{0.020}                                                                     \\ \midrule
Mamba $[0, 1]$     & 0.000 $\pm$ \tiny{0.000}    & 0.066 $\pm$ \tiny{0.029}                                                                     &   0.116 $\pm$ \tiny{0.007}  &  0.072 $\pm$ \tiny{0.008}                                                                     \\
Mamba $[-1, 1]$    & 1.000 $\pm$ \tiny{0.000}    & 0.214 $\pm$ \tiny{0.027}                                                                      &   0.098 $\pm$ \tiny{0.009}  &  0.126 $\pm$ \tiny{0.010}                                                                    
\\ \midrule
DeltaNet $[0, 1]$  & 0.010 $\pm$ \tiny{0.005}    & 0.214 $\pm$ \tiny{0.056}                                                                     &   0.162 $\pm$ \tiny{0.018}  & 0.113 $\pm$ \tiny{0.009}                                                                       \\
DeltaNet $[-1, 1]$ & 0.999 $\pm$ \tiny{0.006}    & 0.826 $\pm$ \tiny{0.146}                                                                    &  0.227 $\pm$ \tiny{0.011}  &  0.129 $\pm$ \tiny{0.016}                                                                      \\ \bottomrule
\end{tabular}

}
\end{table}

\begin{figure}[t]
    \centering
    \begin{subfigure}[b]{\textwidth}
        \adjustbox{width=1.0\textwidth}{
        \includegraphics[width=0.222\textwidth, trim={3.5 0 64 0}, clip=true] {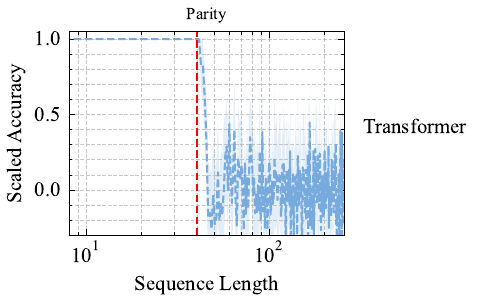}
        \includegraphics[width=0.204\textwidth, trim={16 0 64 0}, clip=true] {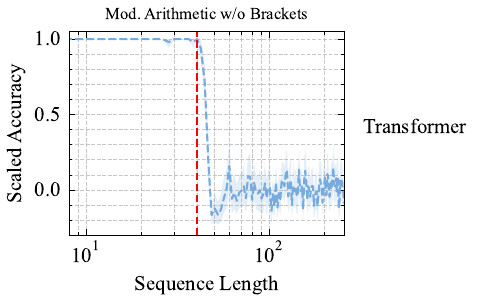}
        \includegraphics[width=0.28\textwidth, trim={16 0 8 0}, clip=true] {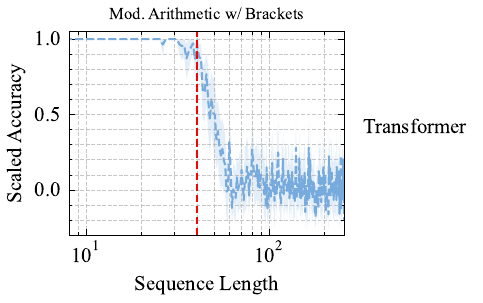}
        }
        \caption{Transformer}
    \end{subfigure}
    \vspace{2mm} 
    \begin{subfigure}[b]{\textwidth}
        \adjustbox{width=1.0\textwidth}{
        \includegraphics[width=0.222\textwidth, trim={3.5 0 67 0}, clip=true] {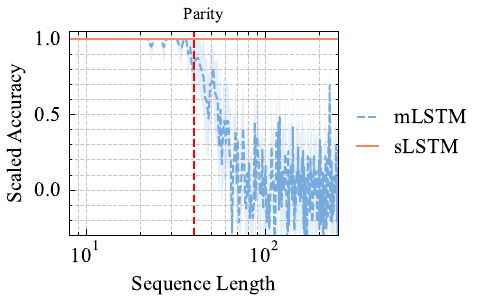}
        \includegraphics[width=0.204\textwidth, trim={16 0 67 0}, clip=true] {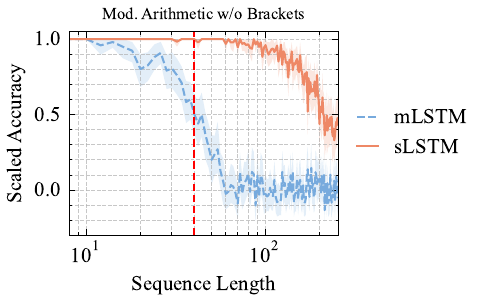}
        \includegraphics[width=0.285\textwidth, trim={16 0 8 0}, clip=true] {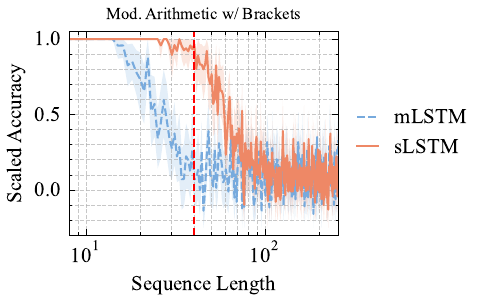}
        }
        \caption{mLSTM and xLSTM}
    \end{subfigure}
    \vspace{2mm} 
    \begin{subfigure}[b]{\textwidth}
        \adjustbox{width=1.0\textwidth}{
        \includegraphics[width=0.222\textwidth, trim={3.5 0 59 0}, clip=true] {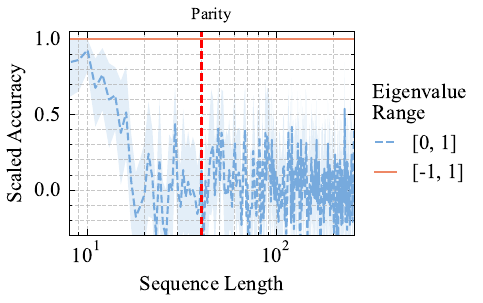}
        \includegraphics[width=0.204\textwidth, trim={16 0 60 0}, clip=true] {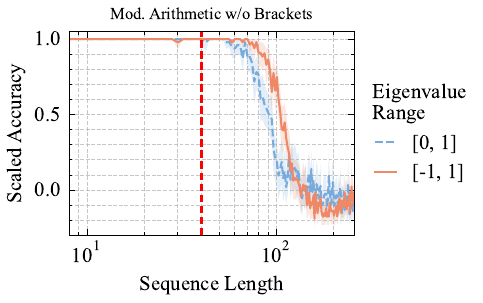}
        \includegraphics[width=0.275\textwidth, trim={16 0 8 0}, clip=true] {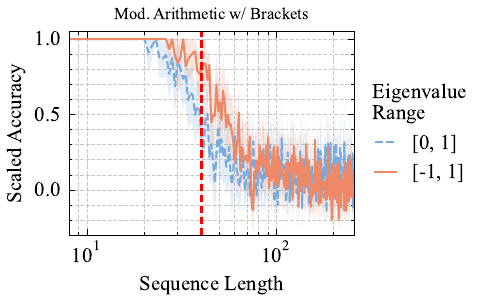}
        }
        \caption{Mamba}
    \end{subfigure}
    \vspace{2mm}
    \begin{subfigure}[b]{\textwidth}
        \adjustbox{width=1.0\textwidth}{
        \includegraphics[width=0.222\textwidth, trim={3.5 0 59 0}, clip=true] {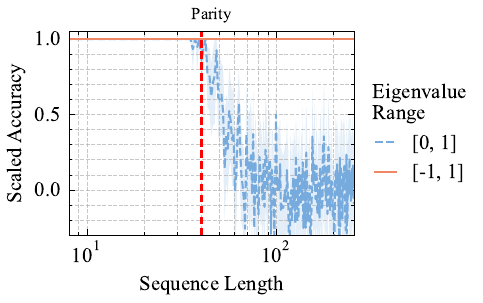}
        \includegraphics[width=0.204\textwidth, trim={16 0 60 0}, clip=true] {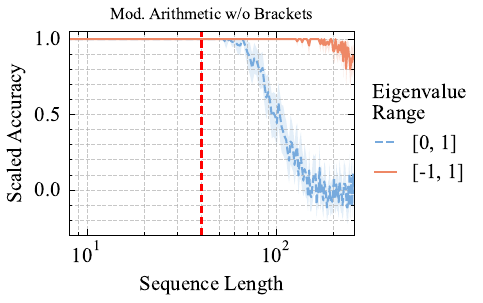}
        \includegraphics[width=0.275\textwidth, trim={16 0 8 0}, clip=true] {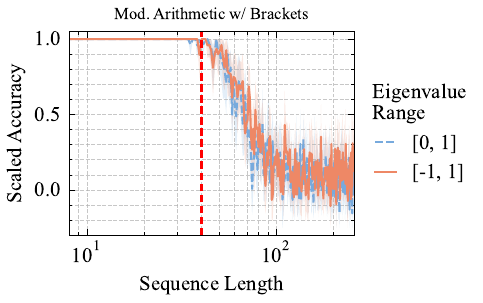}
        }
        \caption{DeltaNet}
    \end{subfigure}
    \vspace{-9mm}
    \caption{
    Performance (scaled accuracy) vs sequence length of \textit{Transformer},
    \textit{mLSTM}, \textit{sLSTM}, \textit{Mamba} and \textit{DeltaNet} variants on different formal language tasks. Trained on sequences up to length 40 (dashed vertical red line). At test time, we sample uniformly at random 8192 sequences with lengths between 40 and 256. The curves show the mean and 95\% CI. Note, that the Transformer model fails to length extrapolate, but performs nearly perfectly within the training context length.
    }
    \label{fig:chomsky_deltanet_length_extrapolation_all}
\end{figure}

\subsection{State-Tracking}\label{app:state-tracking}

\subsubsection{Details of the Experiments}

For the experiments in \Cref{sec:statetracking},
we map each element of the group $S_5$ to an integer from $0$ to $119$, where $0$ corresponds to the identity permutation, and then
construct inputs  and output sequences of integers $ x_1, \dots x_t$  and $y_1, \dots, y_t$ as follows
\begin{itemize}
    \item \textbf{$\mathbf{S_5}$} We sample $x_i$ uniformly at random from $\{0,\dots,119\}$. $y_i$ is computed as the product of the permutations corresponding to $x_1,\dots,x_i$ applied in order from $1$ to $i$.
    \item \textbf{$\mathbf{S_5}$ only swaps} As $S_5$ but $x_i$ is sampled from the permutations that permute up to two elements (swaps and identity).
    \item \textbf{$\mathbf{S_5}$ swaps, 3-permutations} As $S_5$ but $x_i$ is sampled from the permutations that permute up to three elements.
    \item \textbf{$\mathbf{S_5}$ 4 tokens per transition} If $i \mod 4 = 0$, then $x_i$ is sampled uniformly at random from $\{0,\dots,119\}$, otherwise $x_i = 120$ (special token). For $i > 3$, $y_{i+3}$ is the product of the permutations corresponding to $x_1, \dots, x_i$, where $120$ is treated as the identity permutation. $y_i = 0$ for $i\in\{1,2,3\}$.
\end{itemize}
For each setup, we randomly sample $1.6$M examples for and $40K$ examples of length $500$ to construct the train and test dataset. We note that we are using a substantially larger training set compared to \citep{merrill2023parallelism}, to reduce the chances of overfitting. We run 3 seeds for each method, changing the network initialization and sampling of the minibatches. The train and validation datasets are kept the same across runs. 

We train all models using AdamW with weight decay $0.01$, learning rate $0.0001$, gradient clipping to $1.0$, and a batch size of $512$.
Both DeltaNet and Mamba models use an embedding dimension of $128$ and $4$ heads for DeltaNet. In the case of DeltaNet, we do not use $1$-D convolutions for these experiments. Other parameters are kept as default.

\textbf{Full Matrix Baseline.} For the full matrix baseline we use a single layer and map directly each token $x_i$ to a learnable full state-transition matrix $\mA(x_i) \in \R^{n \times n}$ via one-hot encoding. We then compute, for $i \in \{1, \dots, t\}$ the recursion
\begin{equation*}
   \mH_i = \mA(x_i)\mH_{i-1}, \quad \mH_0 = \mI \in \R^{n \times n}
\end{equation*}
where $n$ is set to $32$ for efficiency reasons (memory and compute time grow quickly with $n$).
After that, we flatten each $\mH_i$ into a vector and apply first a projection on the unit ball and then a linear decoder to get the final outputs. The projection was added to increase stability since we do not bound the norm of $\mA(x_i)$. Since this model uses a full matrix, with $n \geq 5$ it should be fully able to learn $S_5$ without restricting the transitions in input or using more tokens per transition. However, in some situations, the performance degrades quickly after some input sequence length, probably because the norm of the learned $\mA(x_i)$ is not close enough to one and hence part of the state either vanish or explode for long sequences.

\textbf{Plots with all runs.} We report the plots with all 3 runs per method in \Cref{fig:s5_large} (In \Cref{fig:s5} we reported only the best one for each method). Despite our efforts to decrease the variance of the results by increasing training time and dataset size, we report that there is still some variability. For example, one of the runs of DeltaNet $[-1,1]$ (5L) on $S_5$ with 4 tokens per transition did not achieve a good accuracy.

\begin{figure}
    \centering
    \includegraphics[width=0.49\linewidth]{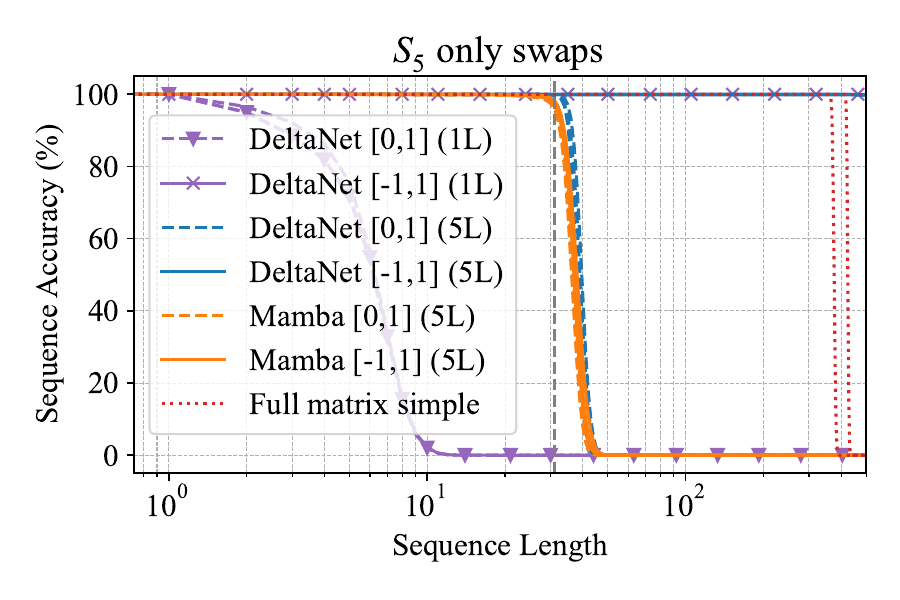}
    \includegraphics[width=0.49\linewidth]{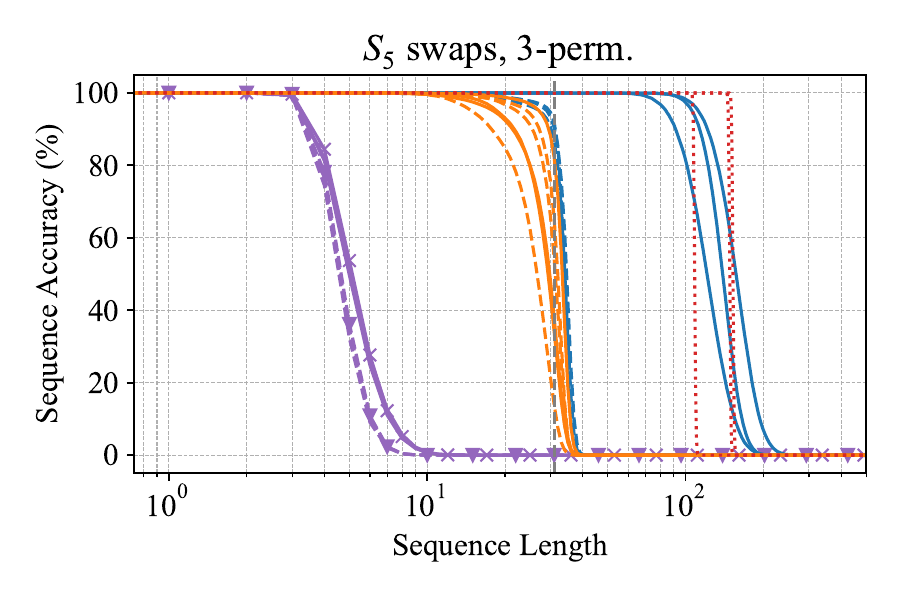}
    \includegraphics[width=0.49\linewidth]{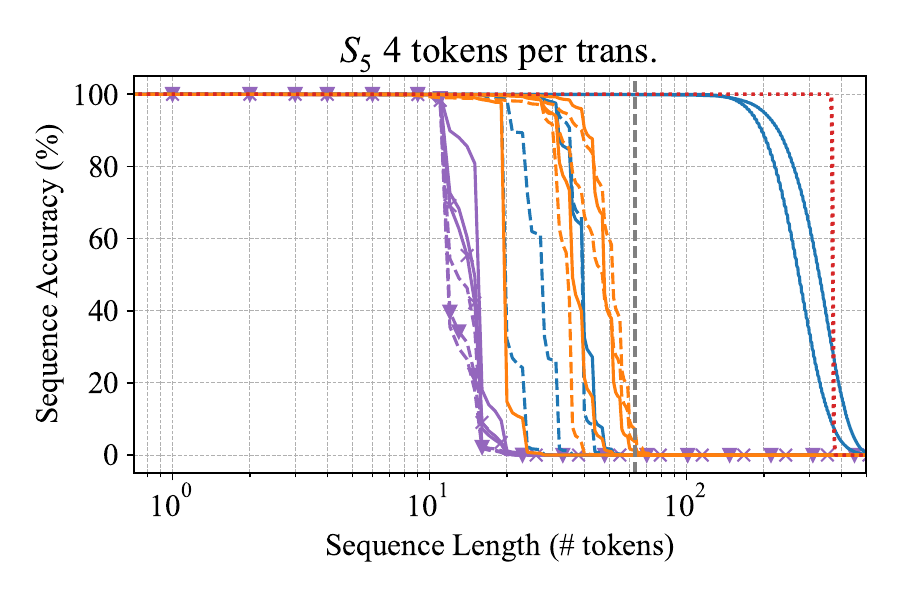}
    \includegraphics[width=0.49\linewidth]{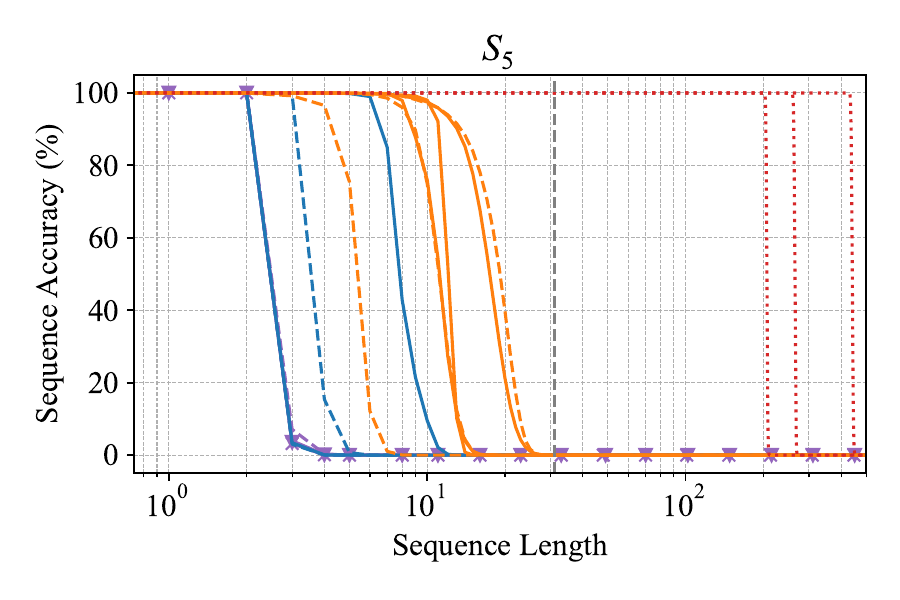}
    \caption{Validation sequence accuracy across different lengths on $S_5$ after 100 epochs of training (3 seeds). The dashed vertical line indicates the sequence length used during training. Each method is labeled with name, eigenvalue range, and number of layers. The dashed vertical line indicates the sequence length used during training. 
    }\label{fig:s5_large}
\end{figure}

\subsubsection{Cyclic Groups}\label{app:cyclic}
\begin{figure}[t]
    \centering
    \vspace{-5mm}
    \includegraphics[width=0.35\linewidth]{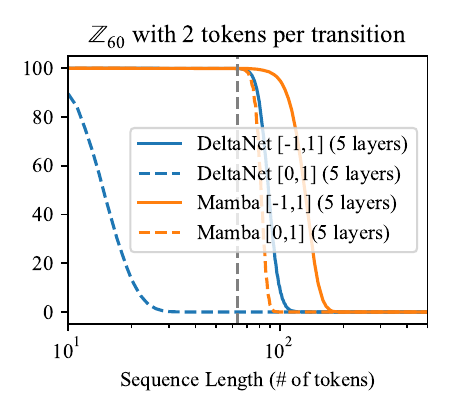}
    \includegraphics[width=0.35\linewidth]{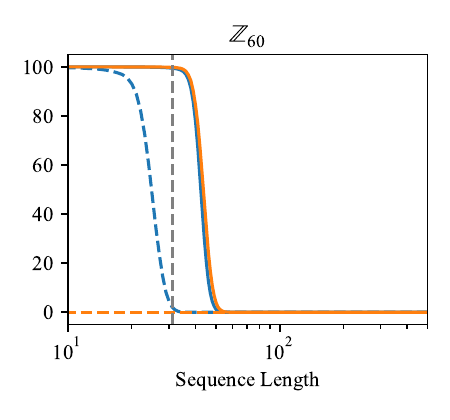}
    \vspace{-5mm}
    \caption{Validation sequence accuracy at different sequence lengths on the cyclic group $\mathbb{Z}_{60}$ (1 seed). Dashed vertical lines indicate the sequence length used for training (left 32, right 64). Using 2 tokens per transition seems to help only marginally in this case. Mamba $[-1,1]$ is the best-performing model. The variants with eigenvalues in [0,1] performed worse. }\label{fig:z60}
    \vspace{-3mm}
\end{figure}
We report in \Cref{fig:z60} some experiments on group word problems with the group $\mathbb{Z}_{60}$. For this experiment, we also consider the simplified version where each transition is encoded using $2$ tokens. This is done as in the experiments of $S_5$ with $4$ tokens, but using $2$ tokens instead of $4$. 
Extending the eigenvalue range seems to help in both settings, although surprisingly, Mamba $[-1,1]$, even though it has a diagonal state-transition matrix, seems to perform best. We conjecture that in this case, the models might learn the shortcut solutions, also because they do not generalize very well to longer sequences.

\subsection{Language Modeling}\label{app:language_modelling}

\subsubsection{Details on the experimental setup}\label{app:llm_experimental_details}
We use the training pipeline which is part of the flash-linear-attention library (flame)~\citep{yang2024fla} and which in turn is based on HuggingFace accelerate~\citep{accelerate}. We use stage-2 of the ZeRO optimizer~\citep{rajbhandari2020zero} with gradient clipping set to auto.
The 1.3B parameter DeltaNet models are trained on 32 Nvidia A100s using a per-device batch size of 6 and 5 gradient accumulation steps for 50,000 steps.
The 340M parameter DeltaNet models and the 370M parameter Mamba models are trained using a training batch size of 16 and 200,000 steps on 16 Nvidia A100s.
All models are trained using a context length of 2048, learning rate of 3e-4. For optimization, we use AdamW \citep{Loshchilov2017DecoupledWD}, the learning rate was adjusted using cosine annealing \citep{loshchilov2022sgdr} following a linear warm-up period of 250/500 steps for the 340/370M and 1.3B parameter models respectively. We applied a weight decay of 0.01 throughout the training process.
\begin{figure}[ht]
    \centering
    \adjustbox{width=.9\textwidth}{
    \begin{subfigure}[b]{0.25\textwidth}
        \centering
        \includegraphics[width=1\linewidth, trim={0 0 60 0}, clip=true]{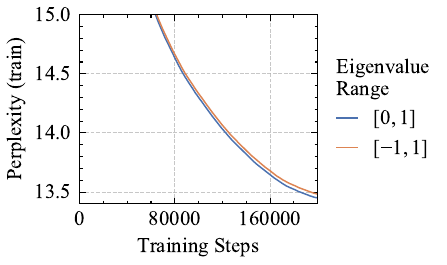}
    \end{subfigure}
    \begin{subfigure}[b]{0.3075\textwidth}
        \centering
        \includegraphics[width=1\linewidth, trim={13 0 3.5 0}, clip=true]{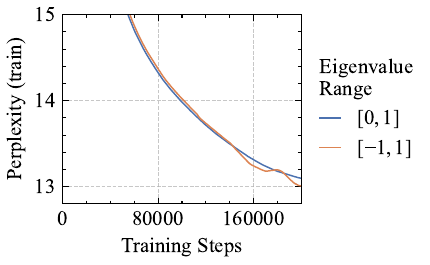}
    \end{subfigure}
    }
    \adjustbox{width=.5\textwidth}{\begin{subfigure}[b]{0.25\textwidth}
        \centering
        \includegraphics[width=1\linewidth, clip=true]{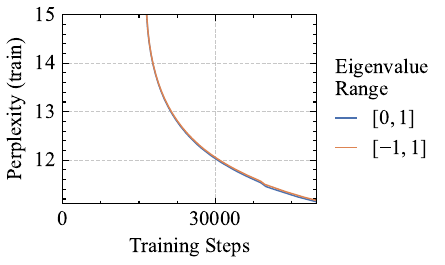}
    \end{subfigure}}
    \caption{Learning curves of DeltaNet 340M (top left), Mamba 370M (top right) and DeltaNet 1.3B (bottom),  training on 100B tokens of Fine-Web 100B. 1.3B runs required only 50k optimizer steps versus the 200k of the 340M runs due to the 4x larger batch size. All models trained stably with the same hyperparameters. Training curves were smoothed with a rolling window of 500 steps.}
    \label{fig:training_curves}
\end{figure}

\subsubsection{Details on the evaluated tasks}
To produce the results in~\Cref{tab:lm_harness}, we use the lm-harness benchmark~\citep{eval-harness}, focusing on the same tasks as~\citet{yang2024parallelizing}: LAMBADA (LMB)~\citep{paperno2016lambada}, PIQA~\citep{bisk2020piqa}, HellaSwag (Hella.)~\citep{zellers2019hellaswag}, Winogrande (Wino.)~\citep{sakaguchi2021winogrande}, and ARC-easy (ARC-e) and ARC-challenge (ARC-c)~\citep{clark2018think}. Additionally, we evaluate the performance on recall-intensive tasks (like~\citet{yang2024parallelizing}), including FDA~\citep{arora2023language}, SWDE~\citep{lockard2019open}, and SQUAD~\citep{rajpurkar2018know}, to provide a comprehensive evaluation of our models' capabilities.

\begin{figure}[ht]
    \centering
    \adjustbox{width=\textwidth}{
    \includegraphics[width=0.2225\textwidth, trim={3 0 60 0}, clip=true] {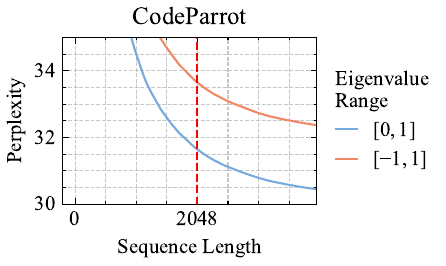}
    \includegraphics[width=0.204\textwidth, trim={15 0 60 0}, clip=true] {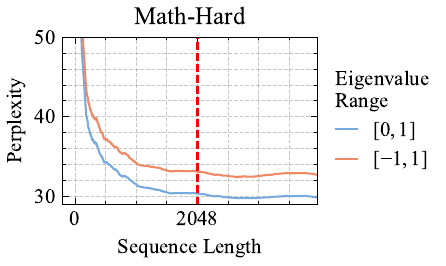}
    \includegraphics[width=0.204\textwidth, trim={15 0 60 0}, clip=true] {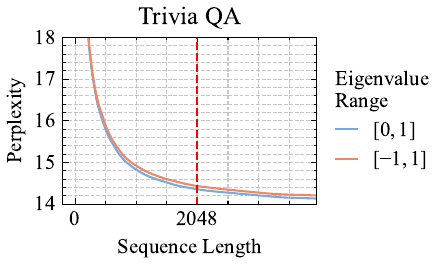}
    \includegraphics[width=0.281\textwidth, trim={15 0 8 0}, clip=true] {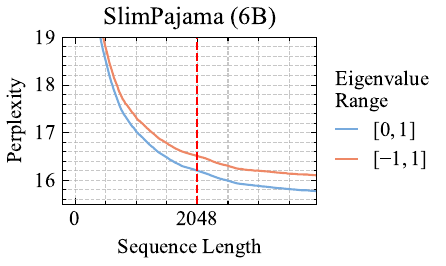}
    }
    \caption{Length extrapolation performance of Mamba variants on different datasets. Mamba with eigenvalue range $[-1, 1]$ shows worse perplexity on coding and math tasks compared to the $[0, 1]$ baseline. The dashed, vertical line indicates the training context length of 2048 tokens.}
\label{fig:mamba_length_extrapolation}
\vspace{-3mm}
\end{figure}

\subsection{Implementation}\label{app:implementation}
We build on the original code for Mamba\footnote{\url{https://github.com/state-spaces/mamba}} and DeltaNet\footnote{\url{https://github.com/sustcsonglin/flash-linear-attention}}.
For DeltaNet, implementing the extended eigenvalue range is straightforward, since there is no need to modify the Triton kernel. However, Mamba requires modifications to the CUDA code of the associative scan for both forward and backward passes which however had no impact on computational cost. We ensured the accuracy of the modifications by comparing the results with a naive implementation using a for-loop. For initial testing of the extended eigenvalue range, we used the pure PyTorch implementation of Mamba by~\citet{mambapy}. We provide listings of the necessary code changes in Mamba and DeltaNet in~\Cref{app:cuda}. For DeltaNet, this changes also $\mB(x_t)$ in \Cref{tab:LRNNs}, multiplying it by $2$.

\textbf{Products in Log-space} We note that some diagonal models such as Mamba2 \citep{dao2024transformers}, GLA \citep{yanggated}, and mLSTM \citep{poppel2024xlstm} take advantage of the fact that all values of the state-transition matrices are positive to compute their repeated products in log-space. Our change would not allow us to do this directly, and early tests on the chunkwise parallel form of GLA showed degraded performance. Therefore, for this work, we decided to focus on Mamba and DeltaNet since they do not compute the products in log space. We mention however, that at the cost of increased computation time, it would be possible to do products in log space by converting each value in the diagonal state-transition matrix to the product of its absolute value and sign. This way, absolute values can be multiplied in log space, while products of signs are coincidentally equivalent to addition modulo 2, i.e. parity, and hence can be done stably. We leave the investigation of this approach to future work.
Furthermore, we also believe that our change may be less suited for methods that use a normalized RNN state, such as mLSTM, since it might happen that the normalization term can be very close to zero due to the negative values.
\newpage
\subsubsection{Implementation of Extended Eigenvalue Range}\label{app:cuda}

\begin{figure}[ht]
    \centering
    \adjustbox{width=\textwidth}{
    \begin{lstlisting}[style=diffstyle, language={C++}, escapechar=@, firstnumber=220]
if constexpr (!kIsComplex) {
@\textcolor{dark-red}{-   thread\_data[i] = make\_float2(exp2f(delta\_vals[r][i] * A\_val[r]),}@
@\textcolor{dark-green}{+   thread\_data[i] = make\_float2(2.0f * exp2f(delta\_vals[r][i] * A\_val[r]) - 1.0f,}@
                                 !kIsVariableB ? delta_u_vals[r][i] : B_vals[i] * delta_u_vals[r][i]);
    if constexpr (!Ktraits::kIsEvenLen) {  
        if (threadIdx.x * kNItems + i >= params.seqlen - chunk * kChunkSize) {
            thread_data[i] = make_float2(1.f, 0.f);
        }
    }
}
    \end{lstlisting}}
    \caption{Modifications to the forward pass of the Mamba associative scan. These changes extend the eigenvalue range from $[0, 1]$ to $[-1, 1]$, enhancing the model's expressive capacity. Adapted from~\href{https://github.com/state-spaces/mamba/blob/main/csrc/selective_scan/selective_scan_bwd_kernel.cuh}{selective\_scan\_fwd\_kernel.cuh}. The original implementation (in red) is replaced with an adjusted version (in green).}
    \label{fig:mamba_forward_pass_cuda_changes}
\end{figure}

\begin{figure}[ht]
    \centering
    \begin{lstlisting}[style=diffstyle, language={C++}, escapechar=@, firstnumber=253]
@\textcolor{dark-red}{-     const float delta\_a\_exp = exp2f(delta\_vals[i] * A\_scaled)}@
@\textcolor{dark-green}{+   const float delta\_a\_exp = 2.0f * exp2f(delta\_vals[i] * A\_scaled) - 1.0f}@
    \end{lstlisting}
    \begin{lstlisting}[style=diffstyle, language={C++}, escapechar=@, firstnumber=272]
@\textcolor{dark-red}{-     typename Ktraits::BlockScanT(smem\_scan).InclusiveScan(}@
@\textcolor{dark-green}{+   typename Ktraits::BlockScanT(smem\_scan).ExclusiveScan(}@
                    thread_data, thread_data, SSMScanOp<weight_t>(), prefix_op
                );
    \end{lstlisting}
    \begin{lstlisting}[style=diffstyle, language={C++}, escapechar=@, firstnumber=288]
@\textcolor{dark-red}{-     const float a = thread\_data[i].y - (!kIsVariableB ? delta\_vals[i] * float(u\_vals[i]) : }@
@\textcolor{dark-red}{- \qquad \qquad delta\_vals[i] * float(u\_vals[i]) * B\_vals[i]);}@
@\textcolor{dark-green}{+ float delta\_a\_exp = 2.0f * exp2f(delta\_vals[i] * A\_scaled) - 1.0f;}@
@\textcolor{dark-green}{+ const float ddelta\_a\_exp = delta\_a\_exp + 1;}@
@\textcolor{dark-green}{+  const float a = ddelta\_a\_exp * thread\_data[i].y; }@
@\textcolor{dark-green}{+ const float hi = delta\_a\_exp * thread\_data[i].y + (!kIsVariableB ? delta\_vals[i] * }@
@\textcolor{dark-green}{+ \qquad \qquad float(u\_vals[i]) : delta\_vals[i] * float(u\_vals[i]) * B\_vals[i]);}@
\end{lstlisting}
\begin{lstlisting}[style=diffstyle, language={C++}, escapechar=@, firstnumber=291]
if constexpr (!kIsVariableB || !kIsVariableC) {
    if constexpr (!kIsVariableB) {  // dBC\_val is dB\_val
@\textcolor{dark-red}{- dBC\_val += dout\_vals[i] * (!kIsVariableC ? thread\_data[i].y : thread\_data[i].y * C\_vals[i]);}@
@\textcolor{dark-green}{+        dBC\_val += dout\_vals[i] * (!kIsVariableC ? hi : hi * C\_vals[i]);}@
    } else {  // dBC\_val is dC\_val
@\textcolor{dark-red}{- dBC\_val += dout\_vals[i] * thread\_data[i].y;}@
@\textcolor{dark-green}{+  dBC\_val += dout\_vals[i] * thread\_data[i].y;}@
    }
}
if constexpr (kIsVariableB) { dB_vals[i] = dx * delta_vals[i] * float(u_vals[i]); }
if constexpr (kIsVariableC) {
@\textcolor{dark-red}{- dC\_vals[i] = dout\_vals[i] * (!kIsVariableB ? thread\_data[i].y * B\_val : thread\_data[i].y);}@
@\textcolor{dark-green}{+ dC\_vals[i] = dout\_vals[i] * (!kIsVariableB ? hi * B\_val : hi);}@
}
\end{lstlisting}
    \caption{Necessary changes to \href{https://github.com/state-spaces/mamba/blob/main/csrc/selective_scan/selective_scan_bwd_kernel.cuh}{selective\_scan\_bwd\_kernel.cuh}. The original implementation (in red) is replaced with an adjusted version (in green).}
    \label{fig:mamba_backward_pass_cuda_changes}
\end{figure}

\begin{figure}[ht]
    \centering
    \begin{lstlisting}[style=diffstyle, language={C++}, escapechar=@, firstnumber=196]
        if self.use_beta:
@\textcolor{dark-red}{- beta = rearrange(self.b\_proj(hidden\_states), 'b l h -> b h l').sigmoid()}@
@\textcolor{dark-green}{+  beta = 2 * rearrange(self.b\_proj(hidden\_states), 'b l h -> b h l').sigmoid()}@
        else:
            beta = q.new_ones(q.shape[0], q.shape[1], q.shape[2])
    \end{lstlisting}
    \caption{Simple modification to the beta calculation in DeltaNet \href{https://github.com/sustcsonglin/flash-linear-attention/blob/3bafa4fcb505391d19cb7c47aa9bc9fa8e598b15/fla/layers/delta_net.py\#L196}{(Source)} allowing the extension of the eigenvalues to the range $[-1, 1]$ . The original implementation (in red) is replaced with an adjusted version (in green).}
    \label{fig:delta_net_forward_pass_changes}
\end{figure}

\end{document}